\pgfplotsset{compat = newest}
\newcommand{\MyPARstart}[2]{\lettrine[lines=2,nindent=0em]{#1}{#2}}
\definecolor{darkgreen}{rgb}{0,0.6,0}
\newtheorem{definition}{Definition}
\newtheorem{assumption}{Assumption}
\newtheorem{remark}{Remark}
\newtheorem{proposition}{Proposition}
\definecolor{amethyst}{rgb}{0.6, 0.4, 0.8}
\definecolor{carminepink}{rgb}{0.92, 0.3, 0.26}
\definecolor{color1}{rgb}{0.196, 0.722, 0.592}
\definecolor{color2}{HTML}{D76364}
\def\begquo{\begin{quote}}
\def\endquo{\end{quote}}
\def\begequarr{\begin{eqnarray}}
\def\endequarr{\end{eqnarray}}
\def\begequarrs{\begin{eqnarray*}}
\def\endequarrs{\end{eqnarray*}}
\def\begarr{\begin{array}}
\def\endarr{\end{array}}
\def\begequ{\begin{equation}}
\def\endequ{\end{equation}}
\def\begdes{\begin{description}}
\def\enddes{\end{description}}
\def\begenu{\begin{enumerate}}
\def\begite{\begin{itemize}}
\def\endite{\end{itemize}}
\def\endenu{\end{enumerate}}
\def\lef[{\left[\begin{array}}
\def\rig]{\end{array}\right]}
\def\begcen{\begin{center}}
\def\endcen{\end{center}}
\def\endrem{\end{remark}}
\def\begdef{\begin{definition}}
\def\enddef{\end{definition}}
\def\begpro{\begin{proposition}}
\def\endpro{\end{proposition}}
\def\begfac{\begin{fact}}
\def\endfac{\end{fact}}
\def\begsubequ{\begin{subequations}}
\def\endsubequ{\end{subequations}}
\def\begmat#1{\begin{bmatrix}#1\end{bmatrix}}
\def\bfg{\mathbf{g}}
\def\caln{{\cal N}}
\def\cale{{\cal E}}
\def\cali{{\cal I}}
\def\cale{{\cal E}}
\def\call{{\cal L}}
\def\calx{{\cal X}}
\def\bff{{\bf f}}
\def\bff{{\bf f}}
\def\bfx{{\bf x}}
\def\bfq{{\bf q}}
\def\bfp{{\bf p}}
\def\bfu{{\bf u}}
\def\ug{U_{\tt G}}
\def\ue{U_{\tt E}}
\def\L2e{{\cal L}_{2e}}
\def\rea{\mathbb{R}}
\def\diag{\mbox{diag}}
\def\col{\mbox{col}}
\def\hal{{1 \over 2}}
\def\diag{\mbox{diag}}
\def\min{{\mbox{min}}}
\def\qed{\hfill $\triangleleft$}
\definecolor{lime}{HTML}{A6CE39}
\DeclareRobustCommand{\orcidicon}{
	\begin{tikzpicture}
	\draw[lime, fill=lime] (0,0) 
	circle [radius=0.16] 
	node[white] {{\fontfamily{qag}\selectfont \tiny ID}};
	\draw[white, fill=white] (-0.0625,0.095) 
	circle [radius=0.007];
	\end{tikzpicture}
	\hspace{-2mm}
}
\def\BibTeX{{\rm B\kern-.05em{\sc i\kern-.025em b}\kern-.08em
    T\kern-.1667em\lower.7ex\hbox{E}\kern-.125emX}}
\begin{document}

\title{Simultaneous Position-and-Stiffness Control of Underactuated Antagonistic Tendon-Driven Continuum Robots}
\author{Bowen Yi\orcidA{}, Yeman Fan\orcidB{}, Dikai Liu\orcidC{}, and Jose Guadalupe Romero\orcidD{} 
%
\thanks{This work was supported in part by the Australian Research Council (ARC) Discovery Project under Grant DP200102497, the University of Technology Sydney, the Natural Sciences and Engineering Research Council of Canada (NSERC), and the Programme PIED. B. Yi and Y. Fan contributed equally to this work. {\it (Corresponding author: Y. Fan)}}
\thanks{B. Yi is with the Department of Electrical Engineering, Polytechnique Montreal \& GERAD, Montreal, QC, Canada. The work has been done in part when B. Yi was with University of Technology Sydney, Australia. (email: bowen.yi@polymtl.ca) 
}
\thanks{Y. Fan and D. Liu are with Robotics Institute, Faculty of Engineering and Information Technology, University of Technology Sydney, Sydney, NSW 2006, Australia. (email: \{yeman.fan,dikai.liu\}@uts.edu.au) 
}
\thanks{J.G. Romero is with Departamento Acad\'emico de Sistemas Digitales, ITAM, R\'io Hondo 1,  01080, Ciudad de M\'exico, M\'exico (email: jose.romerovelazquez@itam.mx)}
}

\maketitle

\begin{abstract}
Continuum robots have gained widespread popularity due to their inherent compliance and flexibility, particularly their adjustable levels of stiffness for various application scenarios. Despite efforts to dynamic modeling and control synthesis over the past decade, few studies have incorporated stiffness regulation into their feedback control design; however, this is one of the initial motivations to develop continuum robots. This paper addresses the crucial challenge of controlling both the position and stiffness of underactuated continuum robots actuated by antagonistic tendons. We begin by presenting a rigid-link dynamical model that can analyze the open-loop stiffening of tendon-driven continuum robots. Based on this model, we propose a novel passivity-based position-and-stiffness controller that adheres to the non-negative tension constraint. Comprehensive experiments on our continuum robot validate the theoretical results and demonstrate the efficacy and precision of this approach. 
\end{abstract}

\begin{keywords}
This paper is motivated by our experience and practical needs in building continuum robotic platforms. Stiffness, flexibility, and accurate configuration regulation are practically important properties for this class of robots. Even though simultaneous position-and-stiffness control is a mature topic for rigid and softly-actuated robots, it remains an open problem for continuum robots with theoretically guaranteed performance. The intention of this paper is to address this situation by proposing a model-based solution for a class of underactuated tendon-driven continuum robots. Hence, we propose an energy-based model and a passivity-based controller to regulate stiffness and configuration concurrently. We believe that this work can be beneficial for academic and industrial research in the context of control algorithms for continuum robots.
\end{keywords}

%
\section{Introduction}
\label{sec1}
%
\MyPARstart{C}{ontinuum} robots are a novel class of robotic systems that have made significant progress in the past few years. Their unique properties, such as scaled dexterity and mobility, make them well facilitated important and suitable for human-robot interaction and manipulation tasks in uncertain and complex environments. For example, they can be used for manipulating objects with unknown shapes, performing search and rescue operations, and whole-arm grasping \cite{KAPetal,LIUetal2020}. 

Despite the above advantages, rigid-body robots still outperform continuum robots in tasks requiring adaptable movement and compliant interactions  \cite{DELetal}. Consequently, many efforts have been devoted to addressing the challenges for real-time control of continuum robots that facilitate fast, efficient, and reliable operation \cite{DELetalSURVEY,WANCHO,THUetal}. The existing control approaches for soft robots can be broadly classified into two categories: data-driven and model-based design. Initially, data-driven approaches dominated the research in this specific field, as obtaining reliable models of a continuum robot was believed to be overwhelmingly complex \cite{DELetalSURVEY}. Various learning methodologies have been applied to control soft robots, such as Koopman operator \cite{BRUetal}, Gaussian process temporal difference learning \cite{ENGetal,MOetal2024}, supervised learning via recurrent neural networks \cite{THUetalTRO,XIAetal2023,TANetal2023TII, TANetal2024}, and feedforward neural networks \cite{BRAetal}. However, these approaches have some key limitations, including stringent date set requirements and no guarantee of stability or safety  \cite{FANetal,YIetalKOOPMAN,TSUetal}. Conversely, the recent resurgence of interest in model-based approaches has made them particularly appealing for soft robots due to their robustness, interpretability, and manageable properties \cite{DELetalSURVEY}.

Elastic deformation of continuum robots theoretically leads to infinite degrees-of-freedom (DoF) motion, which renders them particularly suitable to be modelled by partial differential equations (PDEs) \cite{CAAetal}. In particular, there are two prevalent categories of modelling approaches: mechanics-based and geometry-based. The former focuses on studying the elastic behaviour of the constitutive materials and solving the boundary conditions problem, such as the methods using Cosserat rod theory and Euler-Bernoulli beam theory \cite{TUMetal, chen2024chained}. They need to be solved numerically to obtain a closed formulation for each material subdomain that has proven successful in the design and analysis of continuum robots with high accuracy \cite{ARMetal}, but, due to the extremely heavy computational burden, they are not adapted to real-time control \cite{BIEetal}. In contrast, geometrical models assume that the soft body can be represented by a specific geometric shape, \textit{e.g.}, piecewise constant curvature (PCC). As these modelling approaches often lead to kinematic models rather than dynamical models, they enable the design of kinematic or quasi-static controllers \cite{JONWAL,zhao2024controller}. It has been shown that such types of kinematic controllers are likely to yield poor closed-loop performance \cite{KAPetal}. 

To address these challenges, recent research has been focused on the dynamic modelling and model-based control of continuum robots.\footnote{In this paper, we use the term ``dynamic controllers'' to refer to feedback laws designed from dynamical and kinematic models. This differs from the terminology in control theory, which typically refers to feedback control with dynamics extension (\textit{e.g.} adaptive and observer-based control) \cite{ORTetal01}.} Several dynamical models have been adopted for controller synthesis, including the geometrically exact dynamic Cosserat model \cite{RENetal}, port-Hamiltonian Cosserat model \cite{CAAetal,CHAetalRoyal}, rigid-link models \cite{FRAGAR,DELetal}, and reduced-order Euler-Lagrangian model \cite{FALetal,DEUetal}; see also \cite{BREMCC,TILRUC} for stability analysis of equilibria in continuum robots. These works employ model-based control approaches such as passivity-based control (PBC) \cite{CHAetalCDC}, partial feedback linearisation, proportional derivative (PD) control, and immersion and invariance (I\&I) adaptive control. Among these, \cite{DELetal} reports probably the earliest solution in the literature to the design and experimental validation of \textit{dynamic} feedback control for soft robots.

As illustrated above, one of the primary motivations for developing continuum robots is enhance agility, adaptability, and compliant interactions \cite{FANetal2}. Consequently, there is an urgent and rapidly growing need to develop high-performance control algorithms to regulate position and stiffness simultaneously, particularly in certain applications involving human interaction or in complicated environments, such as search and rescue, industrial inspection, medical service, and home living care. The problems of stiffness control and impedance control are well established for \emph{rigid} and \emph{softly-actuated} robotics \cite{BESetal,KIMetal,MENetal}. In contrast, simultaneous position-and-stiffness control of continuum robots remains an open research area. The first stiffness controller for continuum robots in the literature may refer to \cite{MAHDUP}, which extends a simple Cartesian impedance controller using a kinematic model. In \cite{bajo2016hybrid}, the authors tailor the classic hybrid motion/force controller for a static model of multi-backbone continuum robots, requiring estimation of external wrenches. Note that \cite{bajo2016hybrid,MAHDUP} are concerned with static/kinematic models, thus limiting their transient performance. In \cite{DELetal}, a Cartesian stiffness controller was proposed for dynamic control of a fully-actuated soft robot, facilitating interaction with environment. Note that these approaches are not applicable to \textit{underactuated} dynamical models of continuum robots.

This paper aims to address the above gap by proposing a novel dynamical model and a real-time control approach that regulates both position and stiffness concurrently for underactuated antagonistic tendon-driven continuum robots. Note that continuum robots have infinite DoF with only finite actuation inputs, the fact that makes them intrinsically underactuated systems \cite{ORTetal}. The main contributions of the paper are:
\begin{itemize}\setlength\itemsep{.2em}
    \item[1)] We propose a port-Hamiltonian dynamical model for a class of antagonistic tendon-driven continuum robots, which features a \textit{configuration-dependent input matrix} that enables us to interpret the underlying mechanism for open-loop stiffening.
    \item[2)] Stiffness flexibility is one of the motivations for developing continuum robots. Using the derived \textit{underactuated} dynamical model, we propose a novel potential energy shaping controller. Though simultaneous position-and-stiffness control has been widely studied for rigid and softly-actuated robots, to the best of the authors' knowledge, this work is probably among the earliest to design a controller capable of simultaneously controlling an underactuated continuum robot.
    \item[3)] We analyse the set of assignable equilibria for the proposed model class, which is actuated by tendons providing only \textit{non-negative} tensions. Furthermore, we demonstrate how to integrate input constraints into the controller design via an input transformation.
\end{itemize}

We conducted experiments in a variety of scenarios on a robotic platform OctRobot-I to validate the theoretical results presented in the paper. However, due to 2) and the lack of applicable control approaches in the existing literature, we were unable to include a fair experimental comparison with previous work in this study.


\textit{Notation.} All functions and mappings are assumed to be $C^2$-continuous. $I_n$ is the $n \times n$ identity matrix, $0_{n \times s}$ is an
$n \times s$ matrix of zeros, the vector $\mathbf{0}_{n}$ represents $\col(0,\ldots, 0) \in \rea^n$, and $\mathbf{1}_n := \col(1, \ldots, 1) \in \rea^n$. Throughout the paper, we adopt the convention of using bold font for variables denoting vectors, while scalars and matrices are represented in normal font. For $\bfx \in \rea^n$, $S \in \rea^{n \times n}$, $S=S^\top
>0$, we denote the Euclidean norm $\|\bfx\|^2:=\bfx^\top \bfx$, and the weighted--norm $\|\bfx\|^2_S:=\bfx^\top S \bfx$. Given a function $f:  \rea^n \to \rea$ we define the differential operators
$
\nabla f:=(\frac{\partial f }{ \partial x})^\top,\;\nabla_{x_i} f:=(\frac{
\partial f }{ \partial x_i})^\top,
$
where $x_i \in \rea^p$ is an element of the vector $\bfx$. The set $\caln$ is defined as $\caln:= \{1,\ldots,n\}$. For a full rank matrix $g\in \rea^{n\times m}$ ($m<n$), we denote the generalised inverse as $g^\dagger  = [g^\top g]^{-1} g^\top$ and $g^\bot$ a full rank left annihilator. When clear from the context, the arguments of functions and mappings may be omitted.

%
\section{Model and Problem Set}
\label{sec2}
%

\subsection{Modelling of A Class of Continuum Robots}

In this section, we present a \emph{control-oriented} rigid-link dynamical model specifically designed for a class of underactuated continuum robots driven by tendons. This model class encompasses a wide range of recently reported continuum robots in the literature, including the elephant trunk-inspired robot \cite{Zhangetal}, the deployable soft robotic arm \cite{FATetal}, the push puppet-inspired robot \cite{BERetal}, the dexterous tip-extending robot \cite{WANZHAetal}, and our own developed OctRobot-I \cite{FANLIU}, alongside other notable examples \cite{FRAGAR,CHEMISetal,MOetal}. By employing this versatile model, we aim to provide a general framework that can effectively describe and analyse a variety of underactuated continuum robots, enabling a deeper understanding of their stiffening mechanisms and facilitating control design.

In order to visualise the modelling process, we take OctRobot-I as an example to introduce the proposed dynamical model but keep its generality in mind that the model is not limited to this specific robotic platform. This robot imitates an octopus tentacle's structure and motion mechanism, as shown in Fig. \ref{fig:1-2}. The whole continuum manipulator consists of several sections in order to be able to deform in three-dimensional space. Each of them is made of connected spine segments that are driven by a pair of cables. More details of the continuum robot OctRobot-I are given in Section \ref{sec:61}, as well as in \cite{FANLIU}.

\begin{figure}[!htp]
    \centering
    \includegraphics[width = 0.42\textwidth, angle = 0]{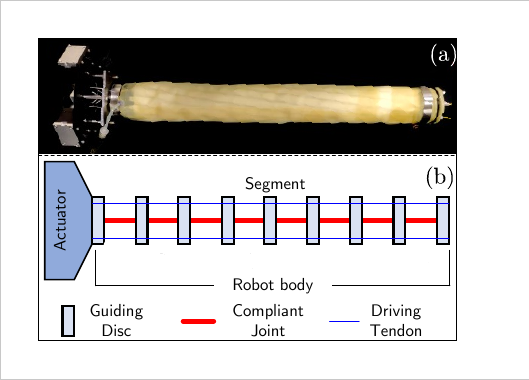}
    \caption{(a) A photo of the continuum robot OctRobot-I (b) Mechanical structure of a class of continuum robots}
    \label{fig:1-2}
\end{figure}

In this paper, we use a rigid-link model to approximate the dynamical behaviours of continuum robots for three reasons: 1) the considered class of continuum robots naturally partitions into spine segments; 2) rigid-link models simplify \emph{control-oriented} tasks; and 3) it is convenient to account for external loading.

\begin{table}[!htb]
\begin{tcolorbox}[
colback=white!10,
coltitle=blue!20!black,  
]
\begin{center}
 {\small \textsc{Nomenclature}}
\end{center}
\vspace{0.2cm}
  \renewcommand\arraystretch{1.4}
\small
\begin{tabular}{ll}
$\bfq = [q_1 ~ \ldots ~ q_n]^\top$ & Configuration variable \\
$\bfp \in \rea^n$ & Generalised momenta\\
$\bfq_\star \in \rea^n$ & Desired configuration \\
$U_{\tt G}, U_{\tt E}, U \in \rea$ & Gravitational, elastic, total potential\\
& energy functions\\
$H(\bfq,\bfp) \in \rea$&  Total Hamiltonian \\
$\tau_{\tt ext} \in \rea^n$ &  External torque \\
$\bfu \in \rea^2$ & Cable tensions  \\
$K_{\tt C} := \diag(K_{\tt T}, K_{\tt A})$ &  Stiffness matrix\\
$D(\bfq) \in \rea^{n\times n}_{\succ 0}$ &  Damping matrix
\end{tabular}
\end{tcolorbox}
\end{table}

To obtain the dynamical model, we make the following assumptions.
\begin{assumption}
\label{assp:1}
The continuum robot satisfies the properties:
\begin{itemize}

    \item[({\bf a})] The actuator dynamics is negligible, \textit{i.e.}, the motor is operating in the torque control mode with sufficiently short transient stages;

    \item[({\bf b})] The sections have a piecewise constant curvature (PCC), conforming to the segments\footnote{Each spine segment has constant curvatures but is variable in time.}, and the curvatures consistently have the same sign.
    
    \item[(\bf c)] The continuum robot allows for axial extension, but the axial deformation resulting from antagonistic tensions is negligible compared to the bending.\qed
\end{itemize}
\end{assumption}


 In this paper, we specifically concentrate on the two-dimensional case, limiting our analysis to a single section, in order to effectively illustrate the underlying mechanism.\footnote{It is promising to extend the main results to the three-dimensional case with multi-sections. We will consider it as a valuable avenue for further exploration.}

In the rigid-link dynamical model, we use a serial chain of rigid links with $n$ rotational joints to approximate one section of the continuum robot. Then, the configuration variable can be defined as 
$$
\bfq = \begmat{q_1&  \ldots&  q_n}^\top \in \calx \subset  \rea^n,
$$
with $q_i$ representing the approximate link angles, where $\calx$ is the feasible configuration space; see Fig. \ref{fig:1-1} for an illustration. Practically, all angles $q_i$ are within some subsets of $[-{\pi \over 2}, {\pi \over 2}]$ due to physical constraints. 

We model the continuum robot as a port-Hamiltonian system in the form of \cite{ORTetal,VANbook}
\begin{equation}
\label{model:pH}
	\begmat{ ~\dot \bfq ~\\ ~\dot \bfp~} 
	=
	\begmat{0_{n\times n} & I_n \\ - I_n & -D(\bfq)} \begmat{\nabla	_{\bfq} H \\ \nabla_{\bfp} H }+ \begmat{~\mathbf{0}_n ~ \\ ~G(\bfq) \bfu + \tau_{\tt ext}~}
\end{equation}
with the generalised momenta $\bfp \in \rea^n$, the damping matrix $D(\bfq) \in \rea^{n\times n}_{\succ 0}$, $\tau_{\tt ext} \in \rea^n$ the external torque, and the input matrix $G(\bfq) \in \rea^{n\times m}$ with $m<n$. The total energy of the robotic system is given by
\begin{equation}
\label{H}
H(\bfq,\bfp) = {1\over2} \bfp^\top M^{-1}(\bfq)\bfp + U(\bfq),
\end{equation}
with the inertia matrix $M(\bfq) \succ 0$, and the potential energy $U(\bfq)$, which consists of the gravitational part $U_{\tt G}$ and the elastic one $\ue(\bfq)$, \textit{i.e.}
$
U(\bfq) ~=~ \ug(\bfq) + \ue(\bfq).
$
The potential energy function has an isolated local minimum at its open-loop equilibrium $\bfq = \mathbf{0}_n$.

\begin{figure}[!hpt]
    \centering
    \includegraphics[width = 0.4\textwidth]{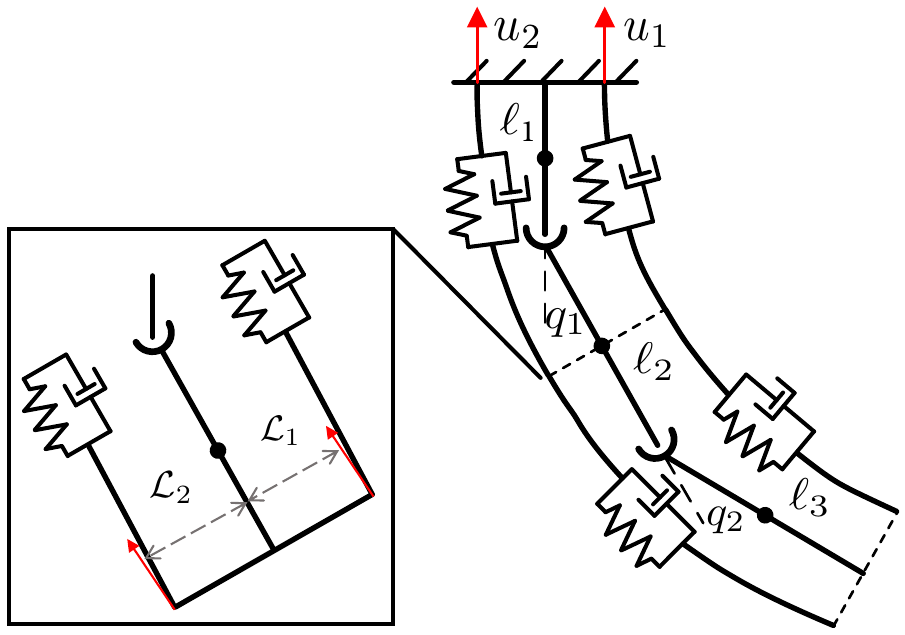}
    \caption{An illustration of configuration variables (The symbol ``$\bullet$'' indicates the lumped mass $m_i$; the angle $q_i$ assumes positive values for small counter-clockwise rotations.)
    }
\label{fig:1-1}
\end{figure}

Fig. \ref{fig:1-1} shows the above rigid-link model of continuum robots. Similar to \cite{DELetal}, we adopt the assumption that a lumped mass with the value $m_i~(i\in \caln)$ is virtually placed in the middle of each link, and the link lengths are $\ell_i$. We additionally make the following assumption on the mass and length.

\begin{assumption}\rm\label{ass:uniform}
  The continuum robot satisfies the uniformity assumptions:
  \begin{itemize}
      \item[({\bf a})] The masses verify $  m_i = m_j, ~ \forall i,j \in \caln.$
      \item[({\bf b})] The lengths satisfy the relation: $l_0 =\ell$ and $l_i = 2\ell$ ($i \in \caln$) for some $\ell>0$. The radius of the beam is $r$. \qed
  \end{itemize}
\end{assumption}

The gravitational and elastic potential energy functions $U_{\tt G}, U_{\tt E}$ can be derived according to the geometric deformation under the uniformity assumptions of the materials. To make the paper self-contained, the details on the modelling of the potential energy functions are given in Appendix.

The variable $\bfu \in \rea^m$ represents the control input, denoting the tensions along the cables generated by actuators. In the planar case, we have $m=2$ with two cables. Due to the specific structure, the tensions are one-directional, \textit{i.e.},
\begin{equation}
\label{constraint:direction}
u_i\ge 0 \quad (i=1,2).
\end{equation}

For the studied case, the input matrix $G(\bfq): \rea^n \to \rea^{n\times 2}$ can be conformally partitioned as
\begin{equation}
\label{Gpart}
G(\bfq) ~=~ \begmat{~ G_1(\bfq) & G_2(\bfq) ~}.
\end{equation}
In the following assumption, some key properties of the matrix $G(\bfq)$ are underlined when modelling the continuum robot.
\begin{assumption}\label{ass:1}\rm
The input matrix $G(\bfq)$ of the continuum robot model \eqref{model:pH} -- or equivalently in \eqref{Gpart} -- satisfies
\begin{itemize}
\item[({\bf a})] $G(\bfq)$ is state-dependent and $C^1$-continuous.

\item[({\bf b})] $G_1(\textbf{0}_n) = - G_2(\textbf{0}_n)$.

\item[({\bf c})] $\|G_1(\bfq) + G_2(\bfq)\| \neq 0$ for $\bfq\in {\calx} \backslash \{ \textbf{0}_n\}$. \qed
\end{itemize}	
\end{assumption}

\vspace{0.2cm}

Among the above three items, the state-dependency of the input matrix is a key feature of the proposed model, which is instrumental in showing the tunability of open-loop stiffness of tendon-driven continuum robots. We will give more details about the input matrix in the subsequent sections of the paper. The second point ({\bf b}) means that at the open-loop equilibrium (\emph{i.e.} $\bfq = \textbf{0}_n$), the tensions in the two cables are equal in magnitude but opposite in direction.

\begin{remark}
Let us now consider a single link in the zoomed-in subfigure in Fig. \ref{fig:1-1}. If the forces along the cables are assumed lossless, their directions are nonparallel to the centroid of the continuum robot. The torques imposed on the first approximate link are given by $u_1 \mathcal{L}_1(q_1)$ and $u_2 \call_2(q_1)$ with $\mathcal{L}_1, \mathcal{L}_2$ the lever's fulcrums, which are nonlinear functions of the configuration $q_1$. From some basic geometric relations, it satisfies $\call_1(0)= \call_2(0)$. This illustrates the rationality of Assumption \ref{ass:1}. \qed
\end{remark}

\begin{remark}
  In Assumption \ref{ass:1}(c), we assume that the geometry of the continuum robot satisfies the PCC condition. Additionally, the assumption that the \emph{curvatures} have an unchanged sign excludes the second type of configuration (S-shape) shown in Fig. \ref{fig:new}(b); see \cite{kim2013stiffness} for a comprehensive analysis of this case. This assumption is reasonable, as it covers the majority of scenarios for our platform. In contrast, the S-shape configuration necessitates precise symmetry in both the mechanical design and the wire arrangement. 
\end{remark}
\begin{figure}[!htp]
    \centering
    \includegraphics[width = 0.32\textwidth, angle = 0]{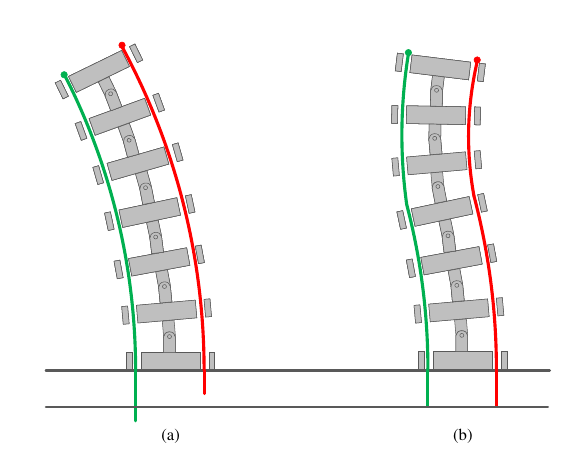}
    \caption{One-dimensional illustration of traditional continuum robots \cite{kim2013stiffness}}
    \label{fig:new}
\end{figure}

\subsection{Problem Set}
\label{sec:2-2}

In this paper, we study how to design a feedback controller that is capable of regulating the continuum robot deformation and achieving the variable stiffness capability. To be precise, the closed loop complies with the input constraint \eqref{constraint:direction} and achieves the following aims:

\begin{itemize}
\item[{\bf A1}:] In the absence of the external torque (\emph{i.e.} $\tau_{\tt ext} = \mathbf{0}_n$), it achieves the asymptotically accurate regulation of the position, that is
\begin{equation}
	\lim_{t\to\infty} \bfq(t)  = \bfq_\star,
\end{equation}
with a desired assignable configuration $\bfq_\star \in \calx$.
\item[{\bf A2}:] We are able to control the stiffness at the closed-loop equilibrium concurrently.
\end{itemize}

%
\section{Open-loop Stiffening}
\label{sec:3}
%

It is widely recognised that the antagonism mechanism is a popular method for stiffening of tendon-driven continuum robots \cite{FATetal}. By arranging a pair of cables on two sides of the robot and adjusting their tensions simultaneously, the robot body compresses or expands, and generates a reaction that counteracts the tension. As a consequence, the stiffness of the robot changes.

In this section, it is shown, via an intuitive example, that there is a redundant degree of freedom of input in the proposed model, which provides the possibility to regulate the stiffness in the open-loop system. This property is instrumental for the controller design to regulate position and stiffness simultaneously.

Now consider the case of the open-loop system with a pair of identical constant inputs 
\begequ
\label{u1u2}
u_1= u_2= \mu.
\endequ
This balance will keep the configuration variable $\bfq$ at the open-loop equilibrium $\bfq_\star=\mathbf{0}_n$ for any feasible $\mu\ge 0$. Especially, when $\mu =0$, the robot is in a slack state where its stiffness corresponds to its inherent properties determined by the materials and mechanical structures. Intuitively, as $\mu$ increases, the manipulator progressively transitions towards a state of higher rigidity or inflexibility. The proposed dynamical model should be capable of interpreting the above phenomenon -- physical common senses tell us that a larger value of $\mu >0$ implies a larger transverse stiffness.

In order to study the stiffness at the end-effector, we need the Jacobian $J(\bfq) \in \rea^{2\times n}$ from the contact force $\bff_{\tt ext}\in \rea^2$, \emph{i.e.}, satisfying
\begin{equation}
 \tau_{\tt ext} = J^\top(\bfq)  \bff_{\tt ext},
\end{equation}
in which $\tau_{\tt ext} \in \rea^n$ is the external torque vector acting on each link.

In the following proposition, we aim to demonstrate the property of stiffness tunability by changing the value $\mu$ in the proposed dynamical model.

\begin{proposition}
\rm \label{prop:1}
Consider the antagonistic tendon-driven model \eqref{model:pH} for continuum robots, with the constant inputs \eqref{u1u2} under Assumption \ref{ass:1}. If the following assumptions are satisfied:

\begin{itemize}
\item[{\bf H1}:] For $j=1,2$
\begin{equation}
\label{eq:diag}
\begin{aligned}
	{\partial G_{n,j} \over \partial q_n} (\bfq)  < 0, \quad 
 {\partial G_{n,j} \over \partial q_k} (\bfq)  = 0, ~k \in \caln \backslash\{n\}
\end{aligned}
\end{equation}
in a neighbourhood of the origin.

\item[{\bf H2}:] The forward kinematics (the mapping from the configuration $\bfq \in \calx$ to the end-effector Cartesian coordinate $\bfx \in \rea^2$) is a \textit{locally} injective immersion. 
\end{itemize}
Then, the input \eqref{u1u2} guarantees the origin $\mathbf{0}_n$ an equilibrium in the absence of the external perturbation, \emph{i.e.} $\tau_{\tt ext}=\mathbf{0}$. Furthermore, a large input value $\mu>0$ implies a larger transverse stiffness $K_{\tt T}$ of the end-effector at this open-loop equilibrium.
\end{proposition}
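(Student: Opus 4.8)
\emph{Proof proposal.} The statement is quasi-static, so the plan is to discard the momentum dynamics from the outset. First I would observe that along \eqref{model:pH} with $\tau_{\tt ext}=\mathbf 0_n$ and the constant inputs \eqref{u1u2}, an equilibrium forces $\nabla_\bfp H=\mathbf 0_n$ (hence $\bfp=\mathbf 0_n$) together with $-\nabla_\bfq U(\bfq)+\mu\big(G_1(\bfq)+G_2(\bfq)\big)=\mathbf 0_n$. Since the origin is an isolated local minimiser of $U$ one has $\nabla_\bfq U(\mathbf 0_n)=\mathbf 0_n$, and Assumption \ref{ass:1}({\bf 1b}) gives $G_1(\mathbf 0_n)+G_2(\mathbf 0_n)=\mathbf 0_n$; hence $\bfq=\mathbf 0_n$ solves the equilibrium equation for \emph{every} $\mu\ge 0$, which settles the first claim. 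I would then introduce the effective generalised force $\tau_{\tt eff}(\bfq):=-\nabla_\bfq U(\bfq)+\mu\big(G_1(\bfq)+G_2(\bfq)\big)$ and define the joint-space stiffness at the origin as $\mathcal S_\bfq(\mu):=-\partial\tau_{\tt eff}/\partial\bfq\big|_{\mathbf 0_n}=\nabla^2 U(\mathbf 0_n)-\mu\,\Gamma$, with $\Gamma$ the $n\times n$ Jacobian of $G_1+G_2$ at the origin.

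The core computation is $\Gamma$. Under {\bf H1} the only nonzero entry of $\Gamma$ is $\Gamma_{nn}=\partial_{q_n}\big(G_{n,1}+G_{n,2}\big)(\mathbf 0_n)<0$, so $-\Gamma=c\,e_ne_n^\top$ with $c:=-\Gamma_{nn}>0$ and $e_n$ the $n$-th canonical vector — a rank-one positive-semidefinite matrix. Combined with the assumed diagonality and the positive definiteness of $\nabla^2 U(\mathbf 0_n)$ (the latter from the origin being an isolated local minimiser), this makes $\mathcal S_\bfq(\mu)=\nabla^2 U(\mathbf 0_n)+\mu\,c\,e_ne_n^\top$ a diagonal, symmetric positive-definite matrix — so the notion of stiffness is well posed — that is Loewner-monotone in $\mu$:
\[
0\prec\mathcal S_\bfq(0)\preceq\mathcal S_\bfq(\mu_1)\preceq\mathcal S_\bfq(\mu_2),\qquad 0\le\mu_1\le\mu_2,
\]
with increment $\mathcal S_\bfq(\mu_2)-\mathcal S_\bfq(\mu_1)=(\mu_2-\mu_1)\,c\,e_ne_n^\top$.

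Next I would transport this to the end-effector. At $\bfq=\mathbf 0_n$ the equilibrium is unloaded, since both $\nabla_\bfq U(\mathbf 0_n)=\mathbf 0_n$ and the net tendon torque $\mu\big(G_1+G_2\big)(\mathbf 0_n)=\mathbf 0_n$ vanish; hence the Jacobian-derivative (preload) terms drop out and, with $J:=J(\mathbf 0_n)$ of full rank by {\bf H2}, the operational-space stiffness of the end-effector is exactly $\mathcal S_{\tt x}(\mu)=\big(J\,\mathcal S_\bfq(\mu)^{-1}J^\top\big)^{-1}$. Since inversion reverses and congruence by $J$ preserves the Loewner order, $\mu\mapsto J\,\mathcal S_\bfq(\mu)^{-1}J^\top$ is non-increasing and $\mathcal S_{\tt x}$ is non-decreasing; in particular, for the transverse unit direction $\mathbf t$ the transverse stiffness $K_{\tt T}(\mu)=\big(\mathbf t^\top J\,\mathcal S_\bfq(\mu)^{-1}J^\top\mathbf t\big)^{-1}$ does not decrease in $\mu$. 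For the strict version I would push the rank-one increment through the Sherman--Morrison identity, obtaining $J\,\mathcal S_\bfq(\mu_2)^{-1}J^\top=J\,\mathcal S_\bfq(\mu_1)^{-1}J^\top-\alpha\,ww^\top$ with $\alpha>0$ and $w:=J\,\mathcal S_\bfq(\mu_1)^{-1}e_n$, so that $K_{\tt T}$ increases strictly as long as $\mathbf t^\top w\neq 0$ — a genericity condition secured by the full-rank Jacobian of {\bf H2}.

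The step I expect to be the main obstacle is not any single calculation but the legitimacy of the last two transitions: first, adopting and justifying the operational-space (congruence) formula for the Cartesian stiffness of a redundant ($n>2$) manipulator, which is clean here only because the chosen equilibrium carries neither external nor internal preload — both terms of $\tau_{\tt eff}$ vanish at $\bfq=\mathbf 0_n$, thanks precisely to Assumption \ref{ass:1}({\bf 1b}) and $\nabla_\bfq U(\mathbf 0_n)=\mathbf 0_n$; and second, upgrading the Loewner monotonicity of $\mathcal S_{\tt x}$ to a genuine increase of the single scalar $K_{\tt T}$, i.e. ruling out the degenerate alignment $\mathbf t^\top J\,\mathcal S_\bfq^{-1}e_n=0$ in which the joint-$n$ stiffening is invisible transversally. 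The role of {\bf H1} is to make $\Gamma$ rank-one and symmetric (so that $\mathcal S_\bfq(\mu)$ remains a bona fide stiffness), while {\bf H2} is exactly what rules out the degenerate case.
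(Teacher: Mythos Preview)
Your overall architecture --- compute the joint-space stiffness $\mathcal S_\bfq(\mu)=\nabla^2 U(\mathbf 0_n)-\mu\,\Gamma$, show it is Loewner-monotone, then push to the end-effector via the operational-space congruence $\mathcal S_{\tt x}=(J\mathcal S_\bfq^{-1}J^\top)^{-1}$ --- is a genuinely different and more systematic route than the paper's. The gap is where you read off $\Gamma$. Hypothesis {\bf H1} constrains only the $n$-th \emph{row} of the Jacobian of each $G_j$: it says the scalar $G_{n,j}$ depends on $q_n$ alone, with $\partial_{q_n}G_{n,j}<0$. It says nothing about $\partial_{q_k}G_{i,j}$ for $i\neq n$. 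Hence $\Gamma=\nabla(G_1+G_2)\big|_{\mathbf 0_n}$ has $n$-th row $[0,\ldots,0,\Gamma_{nn}]$ with $\Gamma_{nn}<0$, but rows $1,\ldots,n-1$ are unconstrained. So $\Gamma$ need not be symmetric, need not be rank one, and $-\Gamma$ need not be positive semidefinite; in particular $\mathcal S_\bfq(\mu)$ is not guaranteed to be a bona fide (symmetric) stiffness matrix, the Loewner chain $\mathcal S_\bfq(\mu_1)\preceq\mathcal S_\bfq(\mu_2)$ does not follow, and the Sherman--Morrison rank-one step is not licensed. A smaller point: the proposition's phrase ``the stiffness matrix is diagonal'' refers to the Cartesian $K_{\tt C}=\mbox{diag}(K_{\tt T},K_{\tt A})$ in the proof, not to $\nabla^2 U$, so you cannot use it to diagonalise $\mathcal S_\bfq$.

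The paper's argument is organised precisely to need only the $n$-th row of $\Gamma$. It chooses local Cartesian coordinates aligned with the last link and lets the contact force act only there, which forces $J_1=[0,\ldots,0,l_n]$; it uses Assumption~\ref{ass:axial_stiffness} to eliminate the axial component; and it differentiates the shifted-equilibrium identity $f_\mu(\bar\bfq)-f_\mu(\bfq_\star)=K_{\tt T}J_1^\top(\bar x_1-x_{\star 1})$ along the local left inverse $T^{\tt L}$ supplied by {\bf H2}. The prefactor $J_1/|J_1|^2=[0,\ldots,0,l_n^{-1}]$ then projects $\nabla^2 U-\mu\big(\nabla G_1+\nabla G_2\big)^\top$ onto its $n$-th row, which is exactly the object {\bf H1} pins down, and monotonicity in $\mu$ drops out from the sign $\Gamma_{nn}<0$ together with the coordinate-induced positivity of the last entry of $\nabla_{x_1}T^{\tt L}$. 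Your machinery could be repaired along these lines --- work with $e_n^\top\mathcal S_\bfq(\mu)$ rather than the full matrix, and use the specific $J_1$ --- but as written the claim ``the only nonzero entry of $\Gamma$ is $\Gamma_{nn}$'' is the missing idea. (Separately, {\bf H2} gives $J$ full rank but does not by itself force $\mathbf t^\top J\mathcal S_\bfq^{-1}e_n\neq 0$; that non-degeneracy needs the same coordinate alignment the paper invokes.)
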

\begin{proof}
From Assumption \ref{ass:1}(b), the input term 
$$
G(\bfq)\bfu \big|_{\bfq = \mathbf{0}_n} = \mathbf{0}_n
$$ 
with \eqref{u1u2} guarantees that the origin $\bfq_\star = \mathbf{0}_n$ is an equilibrium in the case of $\tau_{\tt ext} = \mathbf{0}$.

The Cartesian coordinate $\bfx \in \rea^2$ of the end-effector can be uniquely determined by the configuration $\bfq$ as 
\begin{equation}
    \bfx= T(\bfq) 
\end{equation}
for some smooth function $T: \rea^n \to \rea^2$, with the open-loop equilibrium $\bfx_\star:= T(\bfq_\star)$. Note that the function $T$ depends on the coordinate selection. Without loss of generality, we assume $\bfx_\star = \mathbf{0}_2$ and the local coordinate of $\bfx :=\col(x_1, x_2)$ are selected as the tangential and the axial directions of the $n$-th link.

For a non-zero constant $\tau_{\tt ext}$, let us denote the \textit{shifted equilibrium} as $(\bar\bfq, \mathbf{0}_n)$ in the presence of the external perturbation, with the corresponding end-effector coordinate 
$
\bar\bfx = T(\bar\bfq).
$

In order to study the transverse stiffness at the end of the robot, we assume that the external force $\bff_{\tt ext}$ is only applied to the $n$-th link \cite{SAL}. Substituting it into the port-Hamiltonian model \eqref{model:pH}, it satisfies the following equations at the shifted equilibrium
\begin{equation}
\label{eq:nabla_U_f}
\begin{aligned}
    \nabla U(\bar\bfq) & ~=~ [G_1(\bar\bfq) + G_2(\bar\bfq)] \mu + J^\top(\bar\bfq)\bff_{\tt ext}
    \\
    \bff_{\tt ext} & ~=~ K_{\tt C}(\bar\bfx - \bfx_\star),
\end{aligned}
\end{equation}
in which $K_{\tt C} \in \rea^{2\times 2}$ is the stiffness matrix with the partition
$$
\begin{aligned}
K_{\tt C} &:= \diag(K_{\tt T}, K_{\tt A})
\end{aligned}
$$
with
$$
J(\bfq)  = \begmat{J_1(\bfq) \\ J_2(\bfq)},
$$
the axial stiffness $K_{\tt A}$ and the transverse stiffness $K_{\tt T}$. 
For the particular coordinate selection as mentioned above, the Jacobian matrix $J_1(\bfq)$ is in the form
\begin{equation}
J_1 = \begmat{ 0 & \ldots & 0 & l_n},
\end{equation}
in which $l_n>0$ represents the distance from the contact point to the centre of the $n$-th link.

Now, we have
\begin{equation}
\label{nabla_U:K_T}
    \nabla U(\bar\bfq) = [G_1(\bar\bfq) + G_2(\bar\bfq)]\mu + K_{\tt T} J_1^\top [\bar x_1 - x_{\star 1}] + J_2^\top[\bar x_2 - {x_\star}_{2}].
\end{equation}
For convenience of presentation and analysis, we define a function $f_\mu : \rea^n \to \rea^n$ as
\begin{equation}
    f_{\mu}(\bfq) := \nabla U(\bfq) - [G_1(\bfq) + G_2(\bfq)]\mu,
\end{equation}
which is parameterised by the constant force $\mu \ge 0$. Invoking the fact that $\bfq_\star = \mathbf{0}_n$ is an open-loop equilibrium, we have 
\begin{equation}
f_\mu(\bfq_\star) =0,
\end{equation}
and thus
\begin{equation}
\label{eq:f_mu'_star}
    f_\mu(\bar\bfq) - f_\mu(\bfq_\star) = K_{\tt T} J_1^\top(\bar\bfq)  [\bar x_1 - x_{\star 1}]  + J_2^\top (\bar\bfq)[\bar x_2 - {x_\star}_{2}].
\end{equation}

Noting the local injectivity assumption {\bf H2}, there exists a left inverse function $T^{\tt L}$ -- which is defined \textit{locally} -- of the function $T$ such that 
\begin{equation}
\bfq = T^{\tt L} (T(\bfq))
\end{equation}
in a small neighborhood of $\bfq_\star$.

Given that the axial deformation due to antagonistic tensions is negligible compared to the transverse deformation, for sufficiently small \(\bar{x}_1 - {x_\star}_1\), we have \(\bar{x}_2 - {x_\star}_2 = o(\bar{x}_1 - {x_\star}_1)\), where $o(\cdot)$ represents terms of higher-order smallnesss. Therefore, when considering $\bar x_1 \to {x_\star}_1$, we can drop the term $J_2^\top (\bar\bfq)[\bar x_2 - {x_\star}_{2}]$ in the limit analysis. From the equation \eqref{eq:f_mu'_star}, the transverse stiffness $K_{\tt T}$ at the open-loop equilibrium $\bfq_\star$ can be defined by taking the limit $\bar\bfq \to \bfq_\star$, \emph{i.e.}
\begin{equation}
\label{eq:K_T}
\begin{aligned}
K_{\tt T} &~=~   {J_1 (\bfq_\star) \over \|J_1(\bfq_\star)\|^2 } \lim_{\bar x_1 \to x_{\star 1}} {  f_\mu(\bar\bfq) - f_\mu(\bfq_\star) \over \bar x_1 - x_{\star1}}
\\
&~=~ {J_1 (\bfq_\star) \over \|J_1(\bfq_\star)\|^2 } [\nabla f_\mu(\bfq_\star)]^\top \nabla_{x_1} T^{\tt L}(\bfx_\star)
\\
&~=~ {J_1 \over \|J_1\|^2 }\big[\nabla^2 U - \mu\big( \nabla G_1  + \nabla G_2 \big)^\top\big]
\nabla_{x_1} T^{\tt L} \bigg|_{\bfq_\star}.
\end{aligned}
\end{equation}

The assumption {\bf H1} guarantees that the $(n,n)$-element of $\nabla G_j(\bfq_\star)$ for $j=1,2$ is negative. On the other hand, from the local coordinate selection, the variation of $x_1$ implies that $q_n$ will also change accordingly. As a consequence, the last element of $\nabla_{x_1} T^{\tt L}$ is non-zero, and indeed, it is positive. It is straightforward to see that $K_{\tt T}$ is increasing by selecting a larger $\mu>0$.
\end{proof}

With the proposed dynamical model, the above calculation shows the underlying mechanism of the tunability of open-loop stiffness. Later on, we will illustrate that the tension difference $(u_1 - u_2)$ provides another degree of freedom to regulate the robot configuration. 

\begin{remark}\rm 
Assumption {\bf H1} means that $G_{n,j}$ only depends on the state $q_n$ rather than other configuration variables. It is used to simplify the presentation and analysis. Indeed, from the above proof, it may be replaced by a weaker condition $J_1(\nabla G_1 + \nabla G_2) \nabla_{x_1} T^{\tt L} \neq 0$, for which we are still able to show the ability to tune the open-loop stiffness via changing the tendon force $\mu$. The assumption {\bf H2} means that we can determine a unique inverse kinematic solution in a small neighborhood of a given configuration $\bfq_\star$. While it is generally not true to ensure the existence of a \textit{global} inverse, achieving it within a local context is feasible.
\end{remark}


\begin{remark}\rm
In practice, when the value $\mu$ is increased beyond a certain threshold, the continuum robot may be observed with the phenomenon of \emph{buckling} \cite[Sec. III]{FANLIU}. However, the critical values are usually very large, and cannot be generated by actuators in many robotic platforms. In this paper, we do not take the buckling behaviour into account.
Also note that the unchanged sign of curvature condition in Assumption \ref{assp:1} roles out the scenario discussed in \cite{kim2013stiffness} where ``the manipulator can be moved to various configurations without changing the wire length.'' As a result, the control input is capable of adjusting the stiffness in our case.
\end{remark}

%
\section{Control Design}
\label{sec:control}
%

In this section, we will study how to design a state feedback law, based on the proposed model in Section \ref{sec2}, to regulate position and stiffness simultaneously.

To facilitate the controller design, we additionally assume the following for the input matrix $G(\bfq)$ in terms of the geometric constraints.

\begin{assumption}\rm \label{ass:G}
The matrix $G(\bfq)$ in \eqref{Gpart} is parameterised as
\begin{equation}
    G_1(\bfq)  =  \bfg_1(\bfq) + \bfg_0 , \quad
    G_2(\bfq)  =  \bfg_1(\bfq) - \bfg_0
\end{equation}
with a constant vector $\bfg_0 \in \rea^n$ and a $C^1$-continuous function $\bfg_1:\rea^n \to \rea^n$ satisfying the following:
\begin{itemize}
    \item[({\bf a})] $\bfg_1(\bfq)$ is a smooth \textit{odd} function;
    \item[({\bf b})] $\bfg_1(\bfq)$ is full column rank for $\bfq\in \calx \slash \{\mathbf{0}_n\}$;
    \item[({\bf c})]
    The constant vector $\bfg_0$ and the vector field $\bfg_1(\bfq)$ can be re-parameterised as
\begin{equation}
\label{eq:g0gq}
    \bfg_0 = g_0 \mathbf{1}_n , \quad \bfg_1(\bfq) = g_1(\bfq) \mathbf{1}_n,
\end{equation}
and $g_0+ g_1(\bfq) \neq 0$ for all $\bfq$.   \qed
\end{itemize}
\end{assumption}

\vspace{.1cm}

Clearly, the above is compatible with Assumption \ref{ass:1}. The vector-valued function $\bfg_1(\bfq)$ is related to the open-loop stiffness tunability outlined in Proposition \ref{prop:1}. At the end of Appendix, we provide some details on how to model the terms $g_0$ and $g_1$.

\subsection{Assignable Equilibria}

For underactuated mechanical systems, it is essential to identify the set of assignable equilibria, also referred to as achievable or feasible equilibria. Although this has been extensively explored, tendon-driven robots face a significant \emph{obstacle} in the form of the one-directional input constraint \eqref{constraint:direction}. 

In order to facilitate the control design, we make the following input transformation:
\begin{equation}
\label{trans:input}
\tau = T_{\tt u} \bfu, \quad  T_{\tt u}:=\begmat{1 & -1 \\ 0 & 1}
\end{equation}
with new input control 
$
\tau= \col(\tau_1,\; \tau_2) \in \rea^2.
$
For $\tau_1$, there is no sign constraint; the other input channel verifies $\tau_2 \ge 0$, and thus we define the admissible input set as
\begin{equation}
\label{E_tau}
\cale_\tau := \{\tau \in \rea^2: \tau_1 \in \rea, ~\tau_2 \ge 0\}.
\end{equation}
Invoking the intuitive idea in Section \ref{sec:3}, we may use these two inputs $\tau_1,\tau_2$ to regulate the position and stiffness concurrently. 

For convenience, we define the new input matrix as
\begin{equation}
\begin{aligned}
 G_\tau(\bfq)  = \begmat{\rho_1(\bfq) & \rho_2(\bfq)} 
=: G(\bfq)T_{\tt u}^{-1}
 \end{aligned}
\end{equation}
with
$$
\begin{aligned}
\rho_1(\bfq)   ~=~ \bfg_0 + \bfg_1(\bfq) 
, \quad
\rho_2(\bfq)   ~=~ 2\bfg_1(\bfq).
\end{aligned}
$$
With the above input transformation, the controlled model \eqref{model:pH} now becomes
\begin{equation}
\label{model:pH2}
	\begmat{ ~\dot \bfq ~\\ ~\dot \bfp~} 
	=
	\begmat{0_{n\times n} & I_n \\ - I_n & -D(\bfq)} \begmat{\nabla	_{\bfq} H \\ \nabla_{\bfp} H }+ \begmat{~\mathbf{0}_{n} ~ \\ ~G_\tau(\bfq) \tau ~}
\end{equation}
in the absence of the external perturbation $\tau_{\tt ext}$.

\begin{remark}\rm
    Indeed, the real constraint for $\tau_1$ should be $\tau_1 \ge -\tau_2$ rather than $\tau_1\in \rea$ in order to guarantee the constraint $u_1,u_2\ge0$. Since we are able to set the value of $\tau_2$ \textit{arbitrarily}, we consider the admissible input set $\cale_\tau$ defined above for convenience in the subsequent analysis.  \qed
\end{remark}

According to \cite{ORTetal} and invoking the full rankness of $T_u$, if \textit{there were not} input constraints, the assignable equilibrium set would be given by
$
\{ \bfq\in \rea^n : G(\bfq)^\bot \nabla U (\bfq) = \mathbf{0} \}.
$
Clearly, this \textit{does not} hold true for our case, because the feasible solution cannot be guaranteed to live within the set $\cale_\tau$ rather than $\tau\in \rea^2$. 

To address this point, in the following proposition we present the assignable equilibria set for the studied case with constrained inputs.

\begin{proposition}\rm \label{prop:assign} (\textit{Assignable Equilibria})
Consider the unperturbed model \eqref{model:pH} and the input transformation \eqref{trans:input} with $\tau_{\tt ext}=\mathbf{0}$ under the input constraint $\cale_\tau$ in \eqref{E_tau}. All the assignable equilibria are given by the set $\cale_q \cap \calx$, with the definition
\begin{equation*}
\begin{aligned}
& \cale_q  := 
 \\
&~~~~ \left\{ \bfq\in \rea^n \Bigg|
 \begin{aligned}
 (\bfg_1^\bot \bfg_0)^\bot \bfg_1^\bot \nabla U =0  
 \\
 \bfg_1^\top \nabla U - \bfg_1^\top(\bfg_0+\bfg_1) (\bfg_1^\bot \bfg_0)^\dagger \bfg_1^\bot \nabla U \ge 0
 \end{aligned}
 \right\}.
 \end{aligned}
\end{equation*}
\end{proposition}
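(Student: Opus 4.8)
The plan is to characterize the equilibria of \eqref{model:pH2} directly, imposing the constraint $\tau \in \cale_\tau$, and then translate the resulting solvability conditions on $\tau$ into conditions on $\bfq$ alone. An equilibrium of \eqref{model:pH2} with $\tau_{\tt ext} = \mathbf{0}_n$ requires $\bfp = \mathbf{0}_n$ and the force-balance equation $\nabla U(\bfq) = G_\tau(\bfq)\tau = \rho_1(\bfq)\tau_1 + \rho_2(\bfq)\tau_2$, i.e. $\nabla U = (\bfg_0 + \bfg_1)\tau_1 + 2\bfg_1 \tau_2$. The question is therefore: for which $\bfq$ does this linear system in $(\tau_1,\tau_2)$ admit a solution with $\tau_2 \ge 0$ (and $\tau_1$ free)?

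First I would handle the \emph{solvability} (existence of \emph{some} $\tau \in \rea^2$) by a left-annihilator argument. Writing $\bfg_1^\bot$ for a full-rank left annihilator of $\bfg_1(\bfq)$, apply $\bfg_1^\bot$ to the force balance to kill the $2\bfg_1\tau_2$ term, giving $\bfg_1^\bot \nabla U = \bfg_1^\bot(\bfg_0 + \bfg_1)\tau_1 = (\bfg_1^\bot \bfg_0)\tau_1$ (using $\bfg_1^\bot \bfg_1 = 0$). This is a linear equation in the scalar-or-vector quantity $\tau_1$; it is solvable iff $\bfg_1^\bot \nabla U$ lies in the range of $\bfg_1^\bot \bfg_0$, which is exactly the first displayed condition $(\bfg_1^\bot \bfg_0)^\bot \bfg_1^\bot \nabla U = 0$. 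When it is solvable, the (generically unique) value is $\tau_1 = (\bfg_1^\bot \bfg_0)^\dagger \bfg_1^\bot \nabla U$ — here I would invoke Assumption \ref{ass:G}(3c), which via the $\mathbf{1}_n$ reparameterisation makes $\bfg_1^\bot \bfg_0$ effectively a nonzero vector so the pseudoinverse is the honest left inverse and $\tau_1$ is pinned down uniquely.

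Next I would recover $\tau_2$ and impose its sign. Given the forced value of $\tau_1$, project the force balance onto $\bfg_1$ itself: left-multiplying by $\bfg_1^\top$ gives $\bfg_1^\top \nabla U = \bfg_1^\top(\bfg_0+\bfg_1)\tau_1 + 2|\bfg_1|^2 \tau_2$, so $\tau_2 = \tfrac{1}{2|\bfg_1|^2}\big(\bfg_1^\top \nabla U - \bfg_1^\top(\bfg_0+\bfg_1)\tau_1\big)$ — well-defined since $\bfg_1(\bfq)$ is full column rank away from the origin by (3b). Substituting $\tau_1 = (\bfg_1^\bot \bfg_0)^\dagger \bfg_1^\bot \nabla U$ and noting $2|\bfg_1|^2 > 0$, the requirement $\tau_2 \ge 0$ is equivalent to $\bfg_1^\top \nabla U - \bfg_1^\top(\bfg_0+\bfg_1)(\bfg_1^\bot \bfg_0)^\dagger \bfg_1^\bot \nabla U \ge 0$, which is precisely the second defining inequality of $\cale_q$. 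Intersecting with the feasible configuration space $\calx$ (so that $\bfq$ is physically admissible) yields the claimed set $\cale_q \cap \calx$. I would also remark that $\tau_1$ unconstrained is without loss of generality because, as noted in the preceding remark, $\tau_2$ can be chosen arbitrarily large to absorb the true constraint $\tau_1 \ge -\tau_2$.

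The main obstacle I anticipate is dimensional bookkeeping around the annihilator and pseudoinverse: $\bfg_1(\bfq)$ is an $n\times 1$ column (under (3c)), so $\bfg_1^\bot$ is $(n-1)\times n$, $\bfg_1^\bot \bfg_0$ is a vector in $\rea^{n-1}$, and one must be careful that $(\bfg_1^\bot \bfg_0)^\bot$ is a covector (scalar-valued functional) making the first condition one scalar equation, while $(\bfg_1^\bot \bfg_0)^\dagger$ is a row vector so that $\tau_1$ comes out scalar as it should. Verifying that the forced $\tau_1$ is genuinely unique (not merely one element of an affine solution set) requires $\bfg_1^\bot \bfg_0 \neq 0$, which is where (3c)'s nondegeneracy $g_0 + g_1(\bfq) \neq 0$ — or rather the non-collinearity of $\bfg_0$ with $\bfg_1$ implicit in the construction — must be used carefully; a degenerate case $\bfg_0 \parallel \bfg_1$ would have to be excluded or treated separately. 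Beyond that, the argument is a routine linear-algebra decomposition of the force-balance equation into its component along $\bfg_1$ and its components orthogonal to $\bfg_1$.
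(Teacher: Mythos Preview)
Your proposal is correct and mirrors the paper's proof almost exactly: both premultiply the force balance $\nabla U = (\bfg_0+\bfg_1)\tau_1 + 2\bfg_1\tau_2$ by $\bfg_1^\bot$ to isolate and solve for $\tau_1$ via the pseudoinverse, then project onto $\bfg_1^\top$ to recover $\tau_2$ and impose $\tau_2\ge 0$. One small caveat in your commentary: you invoke (\textbf{3c}) to guarantee $\bfg_1^\bot\bfg_0 \neq 0$, but (\textbf{3c}) actually forces $\bfg_0$ and $\bfg_1$ both to be multiples of $\mathbf{1}_n$, hence parallel, so $\bfg_1^\bot\bfg_0 = 0$ there; the paper's proof invokes only (\textbf{3b}) for the annihilator construction and simply leaves the pseudoinverse to handle rank deficiency.
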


\vspace{0.2cm}

\begin{proof}
In terms of Assumption \ref{ass:G}(\textbf{b}), we can always find a left annihilator $\bfg_1^\bot(\bfq) \in \rea^{(n-1)\times n}$, which is full rank for all $\bfq \in \calx\slash\{\mathbf{0}_n\}$. For an equilibrium $(\bfq,{\bf 0})$, there should exist $\tau_1$ and $\tau_2$ satisfying 
\begin{equation}
\label{id:eq1}
\begin{aligned}
    \nabla U (\bfq) & ~= ~ G_\tau(\bfq)\tau
    \\
    & ~= ~
    [\bfg_0 + \bfg_1(\bfq)]\tau_1 + 2 \bfg_1(\bfq)\tau_2.
\end{aligned}
\end{equation}
Considering the full-rankness of the square matrix $\col(\bfg_1^\bot (\bfq), \bfg_1^\top (\bfq)) \in \rea^{n\times n}$
for $\bfq \in  \calx\slash\{\mathbf{0}_n\}$, we have
\begin{equation}
\eqref{id:eq1} ~\iff ~ \begmat{\bfg_1^\bot  \\ \bfg_1^\top } \nabla U
=
\begmat{ \bfg_1^\bot  \bfg_0 \tau_1 \\ \bfg_1^\top(\bfg_0 + \bfg_1) \tau + 2\|\bfg_1\|^2\tau_2 }.
\end{equation}
Its solvability relies on finding all the points $\bfq\in \calx \subset \rea^n$ satisfying
\begin{align}
    \bfg_1^\bot \nabla U & = \bfg_1^\bot \bfg_0 \tau_1 \label{c1}
    \\
    \bfg_1^\top \nabla U & = \bfg_1^\top(\bfg_0 + \bfg_1) \tau_1 + 2\|\bfg_1\|^2\tau_2  \label{c2}
\end{align}
at the same time under the constraint $\tau \in \cale_\tau$. Clearly, all the feasible equilibria satisfying \eqref{c1} live in the set 
$$
\{\bfq\in \rea^n:(\bfg_1^\bot(\bfq) \bfg_0)^\bot \bfg_1^\bot(\bfq) \nabla U(\bfq) =0  \},
$$
and the corresponding input $\tau_1$ is given by
\begin{equation}
\label{solution:tau1}
    \tau_1  = (\bfg_1^\bot \bfg_0)^\dagger \bfg_1^\bot \nabla U.
\end{equation}

On the other hand, Assumption \ref{ass:G}(\textbf{b}) imposes the constraint $\|\bfg_1\|^2>0$, thus \eqref{c2} admits a positive solution to $\tau_2 \ge 0$ if and only if
\begin{equation}
\bfg_1^\top \nabla U  -  \bfg_1^\top(\bfg_0+\bfg_1) \tau_1 \ge 0.
\end{equation}
Inserting \eqref{solution:tau1} into the above equation completes the proof.
\end{proof}

\vspace{0.1cm}

After imposing Assumption \ref{ass:G}(\textbf{c}) to the input matrix $G(\bfq)$, we are interested in a class of particular equilibria. In this paper, we call them the \textit{homogeneous equilibria} that are characterised by the set
\begin{equation}
 \cale_\theta :=\{ \bfq\in \rea^n :  q_i =\theta, ~ \forall i\in \caln \}
\end{equation}
for some constant $\theta$. This definition is tailored for the proposed continuum robot model under the assumptions in the paper.

In the following, we show all homogeneous equilibria belong to the assignable equilibrium set $\cale_q$ in Proposition \ref{prop:assign}.

\begin{proposition}\rm 
Consider the model \eqref{model:pH} of the continuum robot under Assumptions \ref{ass:G}-\ref{ass:Ue}. Then, all homogeneous equilibria are assignable, \emph{i.e.} $\cale_\theta \subset \cale_q$.
\end{proposition}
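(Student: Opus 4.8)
The goal is to show that every homogeneous equilibrium $\bfq_\star = \col(\theta,\ldots,\theta)$ lies in the set $\cale_q$ defined in Proposition~\ref{prop:assign}; that is, I must verify the two conditions defining $\cale_q$ at such points. The key structural fact I will exploit is Assumption~(\textbf{\ref{ass:G}c}): at a homogeneous configuration $\bfq_\star$, the vector field $\bfg_1(\bfq_\star) = g_1(\theta)\mathbf{1}_n$ and $\bfg_0 = g_0 \mathbf{1}_n$ are both \emph{collinear with $\mathbf{1}_n$}. Hence $\bfg_0 + \bfg_1(\bfq_\star) = (g_0 + g_1(\theta))\mathbf{1}_n$, and by (\textbf{\ref{ass:G}c}) this is nonzero. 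Equally importantly, $\bfg_1^\bot(\bfq_\star)\bfg_0 = g_0\,\bfg_1^\bot(\bfq_\star)\mathbf{1}_n = \mathbf{0}$, because $\bfg_1^\bot$ annihilates $\bfg_1(\bfq_\star)$, which is a scalar multiple of $\mathbf{1}_n$, hence annihilates $\mathbf{1}_n$ itself (using $g_1(\theta)\neq 0$, which follows since $g_0 + g_1 \neq 0$ would otherwise need checking — I should confirm $g_1(\theta)\ne0$ on the relevant domain, or argue the degenerate case separately).

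First I would compute $\bfg_1^\bot(\bfq_\star)\bfg_0 = \mathbf{0}$ as above. Then the first defining condition of $\cale_q$, namely $(\bfg_1^\bot \bfg_0)^\bot \bfg_1^\bot \nabla U = 0$, becomes trivial: since $\bfg_1^\bot \bfg_0 = \mathbf{0}$, the matrix $(\bfg_1^\bot\bfg_0)^\bot$ is (a choice of) full annihilator of the zero vector, i.e. the identity (up to the convention for the annihilator of $\mathbf{0}$), so the condition reads $\bfg_1^\bot(\bfq_\star)\nabla U(\bfq_\star) = \mathbf{0}$. I need this to hold, which is where Assumption~\ref{ass:Ue} (on the structure of $\ue$, and presumably $\ug$) enters: at a homogeneous equilibrium the gradient $\nabla U(\bfq_\star)$ must itself be collinear with $\mathbf{1}_n$ — or more precisely lie in the span of $\bfg_1(\bfq_\star) = g_1(\theta)\mathbf{1}_n$ — so that $\bfg_1^\bot$ kills it. I would invoke the symmetry/uniformity properties of the potential energy functions given in the Appendix (the uniform material and PCC assumptions) to argue $\nabla U(\col(\theta,\ldots,\theta)) = \lambda(\theta)\mathbf{1}_n$ for some scalar $\lambda(\theta)$; consequently $\bfg_1^\bot \nabla U = \lambda(\theta)\,\bfg_1^\bot\mathbf{1}_n = \mathbf{0}$ as well, and simultaneously $\tau_1$ from \eqref{solution:tau1} is the "$0/0$" case which I should interpret as $\tau_1 = 0$ being a valid choice (consistent with \eqref{c1}, which becomes $\mathbf{0} = \mathbf{0}$).

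Second, with $\tau_1 = 0$ available, the remaining inequality in $\cale_q$, i.e. $\bfg_1^\top \nabla U - \bfg_1^\top(\bfg_0+\bfg_1)(\bfg_1^\bot\bfg_0)^\dagger \bfg_1^\bot\nabla U \ge 0$, collapses (the second term vanishes since $\bfg_1^\bot\nabla U = \mathbf{0}$) to $\bfg_1^\top(\bfq_\star)\nabla U(\bfq_\star) \ge 0$. Using $\bfg_1(\bfq_\star) = g_1(\theta)\mathbf{1}_n$ and $\nabla U(\bfq_\star) = \lambda(\theta)\mathbf{1}_n$, this is $n\,g_1(\theta)\lambda(\theta) \ge 0$, which determines $\tau_2 = g_1^\top\nabla U/(2|\bfg_1|^2) = \lambda(\theta)/(2 g_1(\theta))$; I must check this is $\ge 0$. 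This is the step I expect to be the real obstacle: it is not automatic and will require the sign/monotonicity information about $\ue$ encoded in Assumption~\ref{ass:Ue} together with the sign of $g_1(\theta)$ coming from the geometry of the tendon routing (the lever-arm functions $\call_1,\call_2$ and their relation near the origin, cf. the Remark after Assumption~\ref{ass:1}). The plan is to show that the "elastic restoring plus gravity" gradient $\lambda(\theta)$ and the effective moment arm $g_1(\theta)$ have matching signs on $\calx$ — intuitively, the antagonistic pair can always be loaded to hold any homogeneous bend because pulling the appropriate cable produces a torque opposing the bend — so $\lambda(\theta)/g_1(\theta) \ge 0$ and hence $\tau_2 \ge 0$, placing the required $(\tau_1,\tau_2) = (0,\ \lambda(\theta)/(2g_1(\theta)))$ in $\cale_\tau$. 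Assembling these verifications gives $\bfq_\star \in \cale_q$, hence $\cale_\theta \subset \cale_q$.
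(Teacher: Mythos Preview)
Your structural observations are correct up to the point where you commit to $\tau_1=0$; that commitment is the gap. You rightly note that at a homogeneous $\bfq_\star=\theta\mathbf{1}_n$ the vectors $\bfg_0$, $\bfg_1(\bfq_\star)$ and $\nabla U(\bfq_\star)$ are all scalar multiples of $\mathbf{1}_n$, so $\bfg_1^\bot\bfg_0=\mathbf{0}$ and $\bfg_1^\bot\nabla U=\mathbf{0}$, and hence \eqref{c1} degenerates to $\mathbf{0}=\mathbf{0}$. But this means $\tau_1$ is \emph{completely unconstrained} by \eqref{c1}; it does not mean the pseudoinverse formula \eqref{solution:tau1} selects $\tau_1=0$ as ``the'' value to plug into the second condition. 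By fixing $\tau_1=0$ you reduce \eqref{c2} to $\tau_2=\lambda(\theta)/(2g_1(\bfq_\star))$, and you then face the sign condition $g_1(\bfq_\star)\lambda(\theta)\ge 0$. That condition is \emph{not} implied by Assumptions~\ref{ass:G}--\ref{ass:Ue}: no sign for $g_1$ is assumed, and under the physically natural sign (cf.\ hypothesis~{\bf H1}, which gives $g_1(\bfq_\star)<0$ for small $\theta>0$) one has $\lambda(\theta)=\alpha_1\sin(n\theta)+\alpha_2\theta>0$, so the product is negative and your argument fails outright.

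The remedy---and the paper's route---is to swap the roles of $\tau_1$ and $\tau_2$: fix any $\tau_2\ge 0$ and solve \eqref{c2} for $\tau_1$. Since the coefficient of $\tau_1$ in \eqref{c2} is $\bfg_1^\top(\bfg_0+\bfg_1)=n\,g_1(\bfq_\star)\big(g_0+g_1(\bfq_\star)\big)\neq 0$ for $\theta\neq 0$ (by Assumption~(\textbf{\ref{ass:G}b,c})), this single scalar equation always has a solution $\tau_1\in\rea$, with no sign obstruction. The paper phrases this by writing $G_\tau(\bfq_\star)\tau=G_{\tt N}\tau_{\tt N}$ with $G_{\tt N}=\mathbf{1}_n$ and $\tau_{\tt N}=(g_0+g_1)\tau_1+2g_1\tau_2$, observing that $\tau_1\mapsto\tau_{\tt N}$ is a bijection $\rea\to\rea$ for each fixed $\tau_2\ge 0$, and thereby reducing assignability to $G_{\tt N}^\bot\nabla U(\bfq_\star)=0$---which your computation $\nabla U(\bfq_\star)\propto\mathbf{1}_n$ already yields. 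The moral is that the explicit description of $\cale_q$ in Proposition~\ref{prop:assign} tacitly assumes $\bfg_1^\bot\bfg_0\neq\mathbf{0}$ when it pins down $\tau_1$ via \eqref{solution:tau1}; at homogeneous configurations this premise fails, so you should return to the primitive solvability condition \eqref{id:eq1} rather than mechanically substitute into the degenerate formula.
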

\begin{proof}
For the case with $\theta = 0$, since $g_1(\mathbf{0}_{n}) =0$, the equilibrium $\bfq_\star= \theta \mathbf{1}_n$ makes the equation \eqref{id:eq1} solvable with $\tau_1 = 0$ and any $\tau_2  \ge 0$.

For the case with $\theta \neq 0$ and any \textit{fixed} $\tau_1\ge 0$, the determination of the set $\cale_q$ is equivalent to solving \eqref{id:eq1}, which can be written as
$
\nabla U(\bfq)  =  [\bfg_0 + \bfg_1(\bfq)]\tau_1 + 2 \bfg_1(\bfq)\tau_2.
$
We compactly formulate the above as
\begin{equation}\label{pde:tau_n}
\begin{aligned}
        \nabla U(\bfq) = G_{\tt N}\tau_{\tt N}
\end{aligned}
\end{equation}
with the new definitions
\begin{equation}
\label{GnTn}
\begin{aligned}
    G_{\tt N}  := \mathbf{1}_n
    , ~
    \tau_{\tt N}(\tau)  := [g_0+g_1(\bfq)]\tau_1 + 2g_1(\bfq) \tau_2.
\end{aligned}
\end{equation}
For any fixed $\tau_2\ge 0$, invoking \eqref{eq:g0gq} from Assumption \ref{ass:G}, the mapping $\tau_1 \to \tau_{\tt N}$ is a diffeomorphism from $\rea \to \rea$. It implies that there is no constraint for $\tau_{\tt N}$. As a consequence, the PDE \eqref{pde:tau_n} becomes 
\begin{equation}
\label{GN_nabla_U}
G_{\tt N}^\bot \nabla U(\bfq_\star) =0
\end{equation}
at the desired equilibrium $\bfq_\star$.

A feasible full-rank annihilator of $G_{\tt N}$ is given by
\begin{equation}
    G_{\tt N}^\bot = \begmat{1& -1 & 0& \ldots & 0 \\ 0 & 1& -1 & \ldots &0 \\  && \ddots&\ddots  \\ 0 & \ldots& 0& 1& -1} \in \rea^{(n-1)\times n},
\end{equation}
and the Jacobian $\nabla U $ at the desired equilibrium $\bfq_\star$ is in the form
\begin{equation}
    \nabla U (\bfq_\star) = \alpha_1 \underbrace{\begmat{~\sin(q_\Sigma)~ \\ \vdots\\ ~\sin(q_\Sigma)~}}_{ \sin(n\theta) \mathbf{1}_n }
    + \alpha_2 \underbrace{\begmat{~\bfq_{\star,1} ~\\ ~\vdots~ \\ ~\bfq_{\star,n} ~}}_{\theta \mathbf{1}_n}
\end{equation}
with 
$
q_\Sigma := \sum_{i\in \caln} \bfq_i.
$
It is straightforward to verify that \eqref{GN_nabla_U} holds true for any $\tau_2\ge 0$ with a homogeneous equilibrium $\bfq_\star = \theta\mathbf{1}_n$. Since there is no constraint for the input variable $\tau_1$, the equilibrium for this case is also assignable under the constraint \eqref{E_tau}. We complete the proof.
\end{proof}

In the sequel of the paper, our focus will be on control algorithm design aimed at regulating certain homogeneous equilibria that have been demonstrated to be assignable within the proposed class of models for continuum robots.

\subsection{Simultaneous Position-and-Stiffness Control}

We now aim at stabilising an arbitrary homogeneous equilibrium $\bfq_\star$ in the subset of $\cale_\theta$ with a tunable stiffness of the closed loop. To the end, we employ the passivity-based control (PBC) method since it has a clear energy interpretation and simplifies both modelling and controller design. This makes it suitable for continuum robots to preserve the system compliance \cite{FRAGAR}.

Our basic idea is to fix $\tau_2$ at some constant value $\tau_2^{\star}\ge 0$. We then utilise the input $\tau_1$ to achieve potential energy shaping for the regulation task. Compared to the more general approach of interconnection and damping assignment (IDA) PBC \cite{ORTetalAUT}, on one hand, potential energy shaping may provide a simpler controller form, and on the other hand, as pointed out in \cite{KEPetal} changing the inertia is prone to fail in practice -- albeit being theoretically sound with additional degrees of freedom.  

For a given input $\tau_2 = \tau_2^\star \ge 0$, the actuation into the dynamics is given by
\begin{equation}
\label{tau_N}
\begin{aligned}
    G_\tau(\bfq) \tau & ~= ~\rho_1(\bfq) \tau_1 + \rho_2(\bfq) \tau_2^\star 
    \\
    &:=~ G_{\tt N} \tau_{\tt N}(\col(\tau_1, \tau_2^\star)),
\end{aligned}
\end{equation}
with the function $\tau_{\tt N}: \rea^2 \to \rea$ defined in \eqref{GnTn}. From Assumption \ref{ass:G}, the vector field $\rho_1(\bfq) \neq 0$ for all $\bfq \in \calx$. Now the design target becomes using the control input $\tau_1$ (with a fixed $\tau_2^\star$) to shape the potential energy function $U(\bfq)$ into a new one -- the desired potential energy function $U_{\tt d}(\bfq)$. To this end, we need to solve the PDE \cite{ORTetal}
\begin{equation}
\label{pde:shaping}
G_{\tt N}^\bot \big[ \nabla U(\bfq) - \nabla U_{\tt d}(\bfq) \big] = 0.
\end{equation}
Note that the solution to the function $U_{\tt d}$ must adhere to the constraints
\begin{align}
    \nabla U_{\tt d}(\bfq_\star) & ~=~ 0 
    \\
    \nabla^2 U_{\tt d}(\bfq_\star) & ~\succ ~ 0,
\end{align}
in order to make the desired configuration $\bfq_\star$ an asymptotically stable equilibrium.

We are now in the position to propose the controller for simultaneous control of position and stiffness.

\begin{proposition}
\label{prop:control}\rm
Consider the continuum robotic model \eqref{model:pH}, \eqref{trans:input} with the constraint \eqref{E_tau} satisfying Assumptions \ref{ass:1}-\ref{ass:Ue}, and the full-rank damping matrix $D(\bfq)$ is uniformly positive definite. The feedback controller 
\begin{equation}
\label{u:control}
    u = T_u^{-1} \tau
\end{equation}
with the transformed input
\begin{equation}
\label{tau+}
    \tau =  \tau_{\tt es}+ \tau_{\tt da}  + \tau_{\tt st} 
\end{equation}
and the terms
\begin{equation}
\label{tau:3}
\begin{aligned}
    \tau_{\tt st} & = \begmat{
   -{ 2g_1(\bfq) \over g_0+g_1(\bfq)}  \\  1
     }\tau_2^\star
     \\
    \tau_{\tt es} & = \begmat{
   {1\over g_0+g_1(\bfq)} G_{\tt N}^\dagger (\nabla U_{\tt d} - \nabla U)  \\  0
     }
     \\
     \tau_{\tt da} & = \begmat{ - {1\over g_0+g_1(\bfq)} G_{\tt N}^\top K_{\tt d} M^{-1}(\bfq)\bfp  \\ 0},
\end{aligned}
\end{equation}
where $G_{\tt N} = \mathbf{1}_n$, $\tau_2^\star >0$, $K_{\tt d} \succ 0$ is a gain matrix, and the desired potential energy function is given by
\begin{equation}
\label{Ud}
\begin{aligned}
    U_{\tt d}(\bfq) = - \gamma \cos(q_\Sigma - q_{\Sigma}^\star) + {\alpha_2 \over 2}\|\bfq - \bfq_\star\|^2
    ,~
    q_{\Sigma}^\star = \sum_{i \in \caln} \bfq_{\star,i},
\end{aligned}
\end{equation}
the gain $\gamma>0$, and some desired regulation configuration $\bfq_\star \in \cale_\theta$, achieves the following closed-loop properties:
\vspace{.5em}
\begin{itemize}
  \setlength\itemsep{1em}
    \item[{\bf P1}:] (\textit{Position regulation in free motion}) If the external force $\tau_{\tt ext} =0$ and $\gamma <\alpha_2 $, then the desired equilibrium point $\bfq_\star$ is globally asymptotically stable (GAS) with
    \begin{equation}
        \lim_{t\to + \infty} \bfq(t) = \bfq_\star.
    \end{equation}

    \item[{\bf P2}:] (\textit{Compliant behavior}) The overall closed-loop stiffness (\emph{i.e.}, from the external torque $\tau_{\tt ext} \in \rea^n$ to the configuration $\bfq \in \rea^n$) is 
\begin{equation}
\label{K_O}
    K_{\tt O} = \gamma \mathbf{1}_{n\times n} + \alpha_2 I_n,
\end{equation}
where $\mathbf{1}_{n\times n}$ is an $n\times n$ matrix of ones.
\qed
\end{itemize}
\end{proposition}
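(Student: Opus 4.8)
The statement is a potential-energy-shaping result (a special case of IDA-PBC in which the inertia matrix is left untouched), so the plan is: (i) show that the feedback \eqref{u:control}--\eqref{tau:3} converts the constrained model \eqref{model:pH}--\eqref{trans:input} into the target port-Hamiltonian system with total energy $H_{\tt d}(\bfq,\bfp)={1\over2}\bfp^\top M^{-1}(\bfq)\bfp+U_{\tt d}(\bfq)$ and $U_{\tt d}$ as in \eqref{Ud}; and then (ii) read off {\bf P1} and {\bf P2} from that system. For step (i), note first that the only nonzero second component among the three terms in \eqref{tau:3} is that of $\tau_{\tt st}$, so the realised second input is exactly $\tau_2\equiv\tau_2^\star$ and we are in the setting of \eqref{tau_N} with the single effective input channel $G_{\tt N}=\mathbf 1_n$. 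Using $\rho_1(\bfq)=(g_0+g_1(\bfq))\mathbf 1_n$ and $\rho_2(\bfq)=2g_1(\bfq)\mathbf 1_n$ from Assumption \ref{ass:G}, the prefactor $1/(g_0+g_1(\bfq))$ appearing in $\tau_{\tt es}$, $\tau_{\tt da}$ and in the first entry of $\tau_{\tt st}$ cancels against $\rho_1$ when forming $G_\tau(\bfq)\tau$; the $\tau_{\tt st}$ contribution then cancels the $\rho_2\tau_2^\star$ term, and what remains is precisely the energy-shaping action (it replaces $\nabla_\bfq H$ by $\nabla_\bfq H_{\tt d}$ in the $\bfp$-equation) together with the damping injection $-K_{\tt d}\mathbf 1_{n\times n}M^{-1}(\bfq)\bfp$.

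The crucial sub-step is verifying that the specific $U_{\tt d}$ of \eqref{Ud} solves the matching PDE \eqref{pde:shaping}. Using the closed form of $\nabla U$ (gravitational part proportional to $\sin(q_\Sigma)\mathbf 1_n$, elastic part $\alpha_2\bfq$, as recorded in the preceding equilibrium analysis) and $\nabla U_{\tt d}(\bfq)=\gamma\sin(q_\Sigma-q_\Sigma^\star)\mathbf 1_n+\alpha_2(\bfq-\bfq_\star)$, the difference $\nabla U-\nabla U_{\tt d}$ equals $\big[\alpha_1\sin q_\Sigma-\gamma\sin(q_\Sigma-q_\Sigma^\star)\big]\mathbf 1_n+\alpha_2\bfq_\star$; since $\bfq_\star\in\cale_\theta$ is homogeneous, $\bfq_\star=\theta\mathbf 1_n$, so this vector lies in $\mathrm{span}\{\mathbf 1_n\}=\mathrm{range}(G_{\tt N})$, i.e. $G_{\tt N}^\bot(\nabla U-\nabla U_{\tt d})=0$, and consequently $G_{\tt N}G_{\tt N}^\dagger(\nabla U_{\tt d}-\nabla U)=\nabla U_{\tt d}-\nabla U$. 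One also checks $\nabla U_{\tt d}(\bfq_\star)=0$ and $\nabla^2U_{\tt d}(\bfq_\star)=\gamma\mathbf 1_{n\times n}+\alpha_2I_n\succ0$. Finally, since $\tau_2^\star>0$ and $\tau_1\in\rea$, the computed input stays in $\cale_\tau$, so the feedback is admissible.

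For {\bf P1} (set $\tau_{\tt ext}=\mathbf 0_n$), use $H_{\tt d}$ as Lyapunov candidate: it is $C^2$, bounded below and radially unbounded (the term ${\alpha_2\over2}|\bfq-\bfq_\star|^2$ dominates the bounded cosine, and ${1\over2}\bfp^\top M^{-1}\bfp$ is positive definite in $\bfp$), so its sublevel sets are compact; along trajectories $\dot H_{\tt d}=-(M^{-1}\bfp)^\top\big(D(\bfq)+K_{\tt d}\mathbf 1_{n\times n}\big)(M^{-1}\bfp)\le0$ because $D\succ0$ and $K_{\tt d}\mathbf 1_{n\times n}\succeq0$. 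LaSalle's invariance principle then forces convergence to the largest invariant set in $\{\dot H_{\tt d}=0\}$; there $\bfp\equiv0$, hence $\dot\bfp=-\nabla U_{\tt d}(\bfq)\equiv0$, so $\bfq$ is a critical point of $U_{\tt d}$. The remaining task — and this is where the hypothesis $\gamma<\alpha_2$ enters — is to show $\bfq_\star$ is the unique critical point: $\nabla U_{\tt d}(\bfq)=0$ forces $\bfq-\bfq_\star=s\mathbf 1_n$ for a scalar $s$, reducing to a scalar fixed-point equation $s=-(\gamma/\alpha_2)\sin(ns)$, and $\gamma<\alpha_2$ renders $\nabla^2U_{\tt d}=\gamma\cos(q_\Sigma-q_\Sigma^\star)\mathbf 1_{n\times n}+\alpha_2I_n\succ0$ everywhere, so $U_{\tt d}$ is strictly convex and $s=0$ is its only zero. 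Hence the invariant set is $\{(\bfq_\star,\mathbf 0_n)\}$ and $\bfq_\star$ is GAS by the usual proper-Lyapunov-function argument.

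For {\bf P2}, re-introduce a constant $\tau_{\tt ext}$; since the controller does not use $\tau_{\tt ext}$, the closed loop is the same target system with $+\tau_{\tt ext}$ added to $\dot\bfp$, so a shifted equilibrium $(\bar\bfq,\mathbf 0_n)$ must satisfy $\nabla U_{\tt d}(\bar\bfq)=\tau_{\tt ext}$. Because $\nabla^2U_{\tt d}(\bfq_\star)=\gamma\mathbf 1_{n\times n}+\alpha_2I_n\succ0$, the implicit function theorem gives a unique local branch $\bar\bfq(\tau_{\tt ext})$ with $\bar\bfq(\mathbf 0_n)=\bfq_\star$, and differentiating $\nabla U_{\tt d}(\bar\bfq)=\tau_{\tt ext}$ at $\tau_{\tt ext}=\mathbf 0_n$ yields the closed-loop stiffness $K_{\tt O}=\nabla^2U_{\tt d}(\bfq_\star)=\gamma\mathbf 1_{n\times n}+\alpha_2I_n$, i.e. \eqref{K_O} (manifestly tunable through $\gamma$, while $\tau_2^\star$ keeps the tendons taut). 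I expect the main obstacle to be step (i) — in particular the PDE verification, which relies on the exact closed form of $\nabla U$ from the Appendix and on the homogeneity of $\bfq_\star$, together with the sign/scaling bookkeeping among $\tau_{\tt es}$, $G_{\tt N}^\dagger$ and $\nabla U-\nabla U_{\tt d}$; a secondary delicate point is upgrading the uniqueness of the critical point of $U_{\tt d}$ in {\bf P1} from local to genuinely global over the admissible set $\calx$, i.e. pinning down the precise smallness condition on $\gamma$ (relative to $\alpha_2$, and possibly $n$) under which $U_{\tt d}$ is globally strictly convex.
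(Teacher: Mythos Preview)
Your proposal follows essentially the same route as the paper: verify that $\tau_{\tt st}$ contributes nothing to $G_\tau\tau$, check that the specific $U_{\tt d}$ in \eqref{Ud} solves the matching PDE \eqref{pde:shaping} (using the closed form of $\nabla U$ and the homogeneity $\bfq_\star=\theta\mathbf 1_n$), obtain the target port-Hamiltonian closed loop with augmented damping, invoke LaSalle for {\bf P1}, and read the stiffness off $\nabla^2 U_{\tt d}(\bfq_\star)$ for {\bf P2}; the only addition in the paper is that it also establishes asymptotic stability of the shifted equilibrium via the shifted Lyapunov function $V=H_{\tt d}-\bfq^\top\tau_{\tt ext}$. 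Your closing caution about an $n$-dependence in the convexity condition is in fact well founded: the nontrivial eigenvalue of $\nabla^2 U_{\tt d}=\gamma\cos(q_\Sigma-q_\Sigma^\star)\mathbf 1_{n\times n}+\alpha_2 I_n$ is $\alpha_2+n\gamma\cos(\cdot)$ (since $\mathbf 1_{n\times n}$ has eigenvalue $n$ on $\mathbf 1_n$), so genuine global strict convexity requires $n\gamma<\alpha_2$ rather than the paper's stated $\gamma<\alpha_2$.
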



\begin{proof}
First, it is straightforward to verify that the vector $\tau_{\tt st}$ in \eqref{tau:3} is in the null space of $\tau_{\tt N}(\tau)$ for any $\tau_2^\star$, \emph{i.e.},
\begin{equation}
\label{tntst=0}
\tau_{\tt N}(\tau_{\tt st}) = 0, \quad \forall \tau_2^\star \in \rea_{\ge 0}.
\end{equation}
Hence, the term $\tau_{\tt st}$ does not change the closed-loop dynamics. 

Now, let us study the effect of the potential energy shaping term $\tau_{\tt es}$. The Jacobian of the desired potential energy function $U_{\tt d}$ is given by
\begin{equation}
    \nabla U_{\tt d}(\bfq) = \gamma \sin(q_\Sigma - q_\Sigma^\star) \mathbf{1}_n + \alpha_2(\bfq - \bfq_\star).
\end{equation}
It satisfies the following:
\begin{align}
\label{Jacobian:Ud}
    \nabla U_{\tt d} (\bfq_\star) & ~=~0 
    \\
    \nabla^2 U_{\tt d} (\bfq) & ~=~  \gamma \cos(q_\Sigma - q_\Sigma^\star)\mathbf{1}_{n\times n} 
    + \alpha_2 I_n \succ 0,
\end{align}
where the second line holds true for all $\bfq\in \rea^n$ by noting that the eigenvalues of the symmetric matrix $\nabla^2 U_{\tt d}$ are given by
$$
\{\underbrace{\alpha_2, \ldots, \alpha_2}_{n-1}, \alpha_2 + \gamma  |\cos(q_\Sigma - q_\Sigma^\star)|\}
$$
with all elements positive from the condition $\gamma < \alpha_2$ in {\bf P1}. This implies that the desired potential energy function $U_{\tt d}$ is convex and achieves its global minimum at $\bfq_\star$.

For the function $U_{\tt d}$, we have
\begin{equation*}
\begin{aligned}
        & G_{\tt N}^\bot [\nabla U - \nabla U_{\tt d}] 
        \\
        ~=~ & G_{\tt N}^\bot [ \alpha_1 \sin(q_\Sigma)\mathbf{1}_n + \alpha_2 \bfq - \gamma \sin(q_\Sigma - q_\Sigma^\star) \mathbf{1}_n \\
        & \quad ~ - \alpha_2 (\bfq -
 \bfq_\star ) ]
        \\
        ~=~ & G_{\tt N}^\bot [ \alpha_1 \sin(q_\Sigma)\mathbf{1}_n  - \gamma \sin(q_\Sigma - q_\Sigma^\star) \mathbf{1}_n + \alpha_2 \bfq_\star  ]
 \\
 ~=~& 0,
\end{aligned}
\end{equation*}
where in the last equation we have used the fact $\bfq_\star \in \cale_\theta$, so that  the PDE \eqref{pde:shaping} is verified. Together with \eqref{tntst=0}, the controller \eqref{tau+} makes the closed-loop dynamics take the form
\begin{equation}
\label{cl_dyn:ext}
\begmat{ ~\dot \bfq ~\\ ~\dot \bfp~} 
	=
	\begmat{0_{n \times n} & I_n \\ - I_n & -\mathsf{D}} \begmat{ \nabla_\bfq H_{\tt d} \\  \nabla_\bfp H_{\tt d}} + \begmat{~\mathbf{0}_n ~ \\ ~ \tau_{\tt ext}~},
\end{equation}
with 
\begin{equation}
\begin{aligned}
    H_{\tt d}(\bfq,\bfp) & ~:=~ {1\over2} \bfp^\top M^{-1}(\bfq) \bfp + U_{\tt d}(\bfq)
    \\
    \mathsf{D}(\bfq) & ~:=~ D(\bfq) + G_{\tt N} K_{\tt d} G_{\tt N}^\top \succ 0.
\end{aligned}
\end{equation}
We use the function $\mathsf{D}(\bfq)$ to represent the closed-loop damping. For free motion (\emph{i.e.} $\tau_{\tt ext} = 0$), following the standard Lyapunov analysis we have
\begin{equation}
    \dot {H}_{\tt d} \le -(\nabla H_{\tt d})^\top  \mathsf{D}(\bfq) \nabla H_{\tt d} \le 0.
\end{equation}
For the closed-loop system \eqref{cl_dyn:ext}, the set
$
\{(\bfq, \bfp): (\nabla H_{\tt d})^\top  \mathsf{D}(\bfq) \nabla H_{\tt d} =0\}
$
contains only a single point $(\bfq_\star, \mathbf{0}_n)$. According to LaSalle's invariance principle \cite[Sec 4.2]{KHA}, we are able to show the global asymptotic stability of the desired equilibrium $(\bfq_\star, \mathbf{0}_n)$. Hence, we have proven {\bf P1}.

The next step is to verify the stiffness property {\bf P2} in a small neighborhood $B_\varepsilon(\bfq_\star)$ of $\bfq_\star$ with a sufficiently small $\varepsilon>0$. For a constant external force $\tau_{\tt ext}$, the shifted equilibrium $(\bar\bfq,\mathbf{0}_n)$ should satisfy
\begin{equation}
    - \nabla_\bfq U_{\tt d} + \tau_{\tt ext} =0,
\end{equation}
or equivalently
\begin{equation}
\label{phi:q}
  \phi(\bar\bfq):=  \gamma \sin(\bar q_\Sigma - q_\Sigma^\star) \mathbf{1}_n + \alpha_2 (\bar\bfq - \bfq_\star) = \tau_{\tt ext},
\end{equation}
with the definition
$
\bar q_\Sigma:=\sum_{i \in \caln} \bar \bfq_i.
$
Note that $\nabla \phi = \nabla^2 U_{\tt d} \succ 0$, which means that $
\phi: \bfq\mapsto \tau_{\tt ext}
$
is a (locally) injective immersion. Hence, in the small neighborhood $B_\varepsilon(\bfq_\star)$ of $\bfq_\star$, there is a \emph{unique} solution $\bar\bfq$ to \eqref{phi:q} for a given $\tau_{\tt ext}$.

We show that the shifted equilibrium $(\bar\bfq, \mathbf{0}_n )$ is asymptotically stable by considering the Lyapunov function
\begin{equation}
	V(\bfq,\bfp) = H_{\tt d}(\bfq,\bfp) - \bfq^\top \tau_{\tt ext}.
\end{equation}
From the above analysis, it is clear that
\begin{equation}
\begin{aligned}
    \nabla V(\bar\bfq,\mathbf{0})  = \begmat{\phi(\bar\bfq) - \tau_{\tt ext} \\ \mathbf{0}}  = \mathbf{0}
    , \quad
    \nabla^2 V(\bar\bfq,\mathbf{0}) \succ 0.
\end{aligned}
\end{equation}
Hence, $V$ qualifies as a Lyapunov function. Its time derivative along the system trajectory is given by
\begin{equation}
    \begin{aligned}
        \dot V &~=~ - \|\nabla_\bfp H_{\tt d}\|_{\mathsf{D}}^2 - (\nabla_\bfp H_{\tt d})^\top \tau_{\tt ext} - \dot \bfq^\top \tau_{\tt ext}
        \\
        &~= ~ - \|\nabla_\bfp H_{\tt d}\|_{\mathsf{D}}^2
        ~\le ~ 0.
    \end{aligned}
\end{equation}
It yields the Lyapunov stability of the closed-loop dynamics \eqref{cl_dyn:ext} in the presence of a constant external torque $\tau_{\tt ext}$, and all the system states are bounded. On the other hand, the set 
$
\cale_{u}:= \{(\bfq,\bfp):\|\nabla_\bfp H_{\tt d}(\bfq)\| = \mathbf{0} \}
$
only contains a single isolated equilibrium $(\bar\bfq,\mathbf{0})$. According to LaSalle's invariance principle, $(\bar\bfq,\mathbf{0})$ is an asymptotically stable equilibrium, in which $\bar\bfq$ depends on the (arbitrary) constant torque $\tau_{\tt ext}$ -- in terms of the unique solution to the algebraic equation \eqref{phi:q}.

The overall stiffness $K_{\tt O}$ is defined by $\tau_{\tt ext} =  K_{\tt O}(\bar\bfq-\bfq_\star)$. Substituting it into \eqref{phi:q}, we have
\begin{equation}
     \gamma \sin(\mathbf{1}_n^\top \bar\bfq - q_\Sigma^\star) \mathbf{1}_n + \alpha_2 (\bar\bfq - \bfq_\star) =  K_{\tt O}(\bar\bfq-\bfq_\star).
\end{equation}
The stiffenss at the desired equilibrium $\bfq_\star$ is calculated from any direction of the limit $\bar\bfq\to \bfq_\star$, thus obtaining
\begin{equation}
\begin{aligned}
    K_{\tt O}\Big|_{\bfq_\star} & ~=~ {\partial \phi\over \partial \bfq}(\bfq_\star)
    \\
    & ~=~ \lim_{\bar\bfq\to \bfq_\star} \gamma \cos(\mathbf{1}_n^\top \bar\bfq - q_\Sigma^\star) \mathbf{1}_{n\times n} + \alpha_2 I_n
    \\
    & ~=~ \gamma \mathbf{1}_{n\times n} + \alpha_2 I_n.
\end{aligned}
\end{equation}
This verifies the property {\bf P2}, and we complete the proof.
\end{proof}

The above shows that the proposed controller \eqref{u:control}-\eqref{tau:3} can achieve the position regulation with the closed-loop stiffness $K_{\tt O}$ given by \eqref{K_O}. It implies our ability to set a prescribed stiffness by selecting the control gain $\gamma>0$ properly. More discussions are provided in the next section.

%
\section{Discussions}
\label{sec:5}
%

The following remarks about the proposed controller are given in order.

\begin{itemize}
    \item[1)]
In {\bf P2}, we study the \textit{overall} stiffness -- from the external torque vector $\tau_{\tt ext}$ to the configuration $\bfq \in \calx$, rather than the transverse stiffness at the end-effector. Consider the external force $f_{\tt ext}\in \rea$ at the end-effector along the transverse direction of the $n$-th link with the Jacobian $J = [0,\ldots, 0, \ell]$. With a \textit{small} force $f_{\tt ext}$, the coordinate of the end-effector would shift from
$
\left(
\ell\left(\sum_{k \in \caln}\sin(k\theta_\star)\right),
\ell  \left(\sum_{k \in \caln}\cos(k\theta_\star)\right)
\right)
$
to
$
(F_x( f_{\tt ext}), F_y(f_{\tt ext}))
$
with
$$
\begin{bmatrix}  ~F_x ~\\ ~F_y~ \end{bmatrix}
 = 
\begin{bmatrix}
    \ell \sin(\beta) + \ell\sum_{k \in \caln\backslash \{n\}}\sin(k\theta_\star + \gamma\ell f_{\tt ext})
    \\
    \ell \cos(\beta) + \ell\sum_{k \in \caln\backslash \{n\}}\cos(k\theta_\star + \gamma\ell f_{\tt ext})
\end{bmatrix}
$$
and $\beta := n\theta_\star + (\gamma + \alpha_2) \ell f_{\tt ext}$. Hence, the transverse stiffness is given by 
\begin{equation*}\small
    \begin{aligned}
        K_{\tt T} = \lim_{ f_{\tt ext} \to 0 }
        { \sqrt{[F_x(f_{\tt ext}) - F_x(0)]^2 + [F_y(f_{\tt ext}) - F_y(0)]^2} \over  f_{\tt ext}} .
    \end{aligned}
\end{equation*}
As a result, we have
\begin{equation}
\label{K_T:prop}
\boxed{
  ~ K_{\tt T} ~\propto~  \kappa_1\gamma + \kappa_2 ~
   }
\end{equation}
with some non-zero constants $\kappa_1, \kappa_2$ for $\theta_\star \neq 0$. This important affine relationship will be experimentally verified in the next section. It means that for a given desired equilibrium $\bfq_\star \in \calx\backslash\{\mathbf{0}\}$, the transverse stiffness is affine in the gain $\gamma$, thus providing a way to tune the closed-loop stiffness \textit{linearly}. 


\item[2)]  

The proposed controller can be roughly viewed as a nonlinear PD controller. The first term $\tau_{\tt st}$ is used to compensate the ``anisotropy'' in the input matrix $G(\bfq)$ due to its state-dependency property; the potential energy shaping term $\tau_{\tt es}$ and the damping injection term $\tau_{\tt da}$, indeed, play the role of nonlinear PD control. To be precise, the term $\tau_{\tt es}$ is the error between the nonlinear functions of the position $\bfq$ and its desired value $\bfq_\star$; and the term $\tau_{\tt da}$ can be viewed as the negative feedback of velocity errors. This is not surprising, since the original idea of energy shaping has its roots in the pioneering work of Takegaki and Arimoto in robot manipulator control \cite{TAKARI}, in which they proposed a very well-known ``PD + gravity compensation'' feedback \cite{ORTetal01}.


\item[3)] 

To ensure that $\nabla^2 U_{\tt d}(\bfq_\star) \succ 0$, it is necessary to impose the condition $\gamma< \alpha_2$ on the control gains. However, this condition may restrict the range of closed-loop stiffness values within an interval. If this condition is not imposed, it is only possible to guarantee the positive definiteness of $\nabla U_{\tt d}$ in the vicinity of $\bfq_\star$, which would result in \emph{local} asymptotic stability. We provide some experimental evidence regarding this point in the next section.


\item[4)] 

Let us now look at the proposed controller \eqref{u:control}-\eqref{tau+}. Note that the term $M^{-1} (\bfq)\bfp$ corresponds to the generalised velocity of $\bfp$. Thus, the controller depends on only three \textit{plant parameters} ($\alpha_1, \alpha_2$ and $g_0$) and a nonlinear function $g_1$ -- which need to be identified in advance -- along with two adaptation gains (\textit{i.e.} $K_{\tt d}$ and $\gamma$). This means that it is unnecessary to identify all parameters and functions in the plant model. This makes the resulting controller robust \textit{vis-\`a-vis} different types of uncertainties. 

\item[5)] Continuum robots inherently admit infinite degrees of freedom, and thus increasing the link number $n \in \mathbb{N}_+$ in the rigid-link model will enhance precision. On the other hand, a higher dimension $n$ will bring computational challenges to obtain the real-time detection/estimation of the configuration $\mathbf{q}$. Consequently, it is necessary to make a tradeoff between accuracy and computation burden, for the selection of the number $n$.
\end{itemize}


%
\section{Experimental Results}
\label{sec:6}
%

\subsection{Experimental setup}
\label{sec:61}

\begin{figure}[!htp]
    \centering
    \includegraphics[width = 0.75\linewidth]{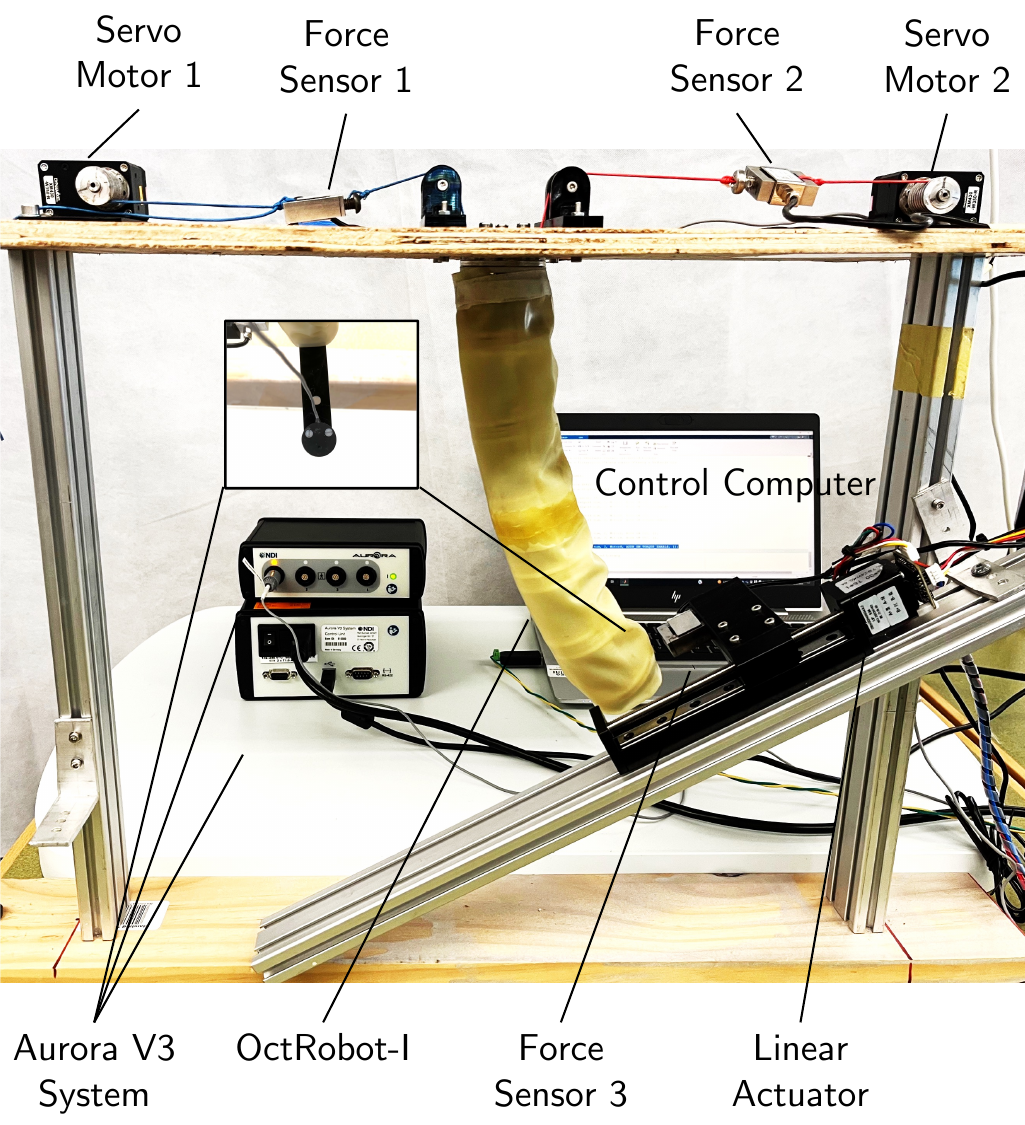}
    \caption{Photo of the entire experimental platform}
    \label{fig:platform}
\end{figure}

The proposed approach was tested using the OctRobot-I, a continuum robot developed in our lab at the University of Technology Sydney \cite{FANLIU}. We considered the planar case with six segments (\emph{i.e.} $n=6$) with an overall length of 252 mm, and a diameter of approximately 50 mm, which meets the critical assumptions outlined in the paper. Notably, the OctRobot-I has a jamming sheath that can provide an extra degree of freedom for stiffening, though the present paper does not delve into this feature's stiffening capabilities. Additional details of the OctRobot-I can be found in \cite{FANLIU}.

As shown in Fig. \ref{fig:platform}, the test platform used in the experiments consists of the one-section robot (OctRobot-I), two servo motors (XM430-W350, DYNAMIXEL) with customized aluminum spools, three force sensors (JLBS-M2-10kg), a linear actuator, and an electromagnetic tracking system (Aurora V3, NDI). The control experiments and data collections were conducted using the software MATLAB\textsuperscript{\texttrademark}. We installed the Aurora sensor at the distal point of the robot to provide its real-time coordinate $z_e$. In the position regulation tasks, no external load was applied to the robot, and we observed that it nearly satisfied the constant curvature condition with the approximation $q_i = q_j ~(i,j \in \caln)$ available.\footnote{Note that in the theoretical analysis, we do not assume $q_i = q_j ~(i,j\in \caln)$.} Together with the coordinate $z_e$ and some basic geometric relations, we are able to estimate the configuration vector $\bfq$ in real-time. In our experimental setup, where we assumed $q_i=q_j$, we use $\theta(t)$ to represent the estimated value $q_i$ in the sequel of this section.

The servo motors in the platform can provide accurate position information with high accuracy, making it easier to control cable lengths between the servo motors and the actuator unit. Using Hooke's law, it is possible to consider the cable length proportional to the force for each cable, with a few coefficients to be identified off-line using collected data sets. To verify the linear relationship between the cable length and the tension force, as well as to obtain the coefficients, we conducted a group of experiments with different configurations and recorded the cable lengths and the corresponding forces. Each configuration was repeatedly conducted three times under identical conditions, and all the data were utilized for identification. In Fig. \ref{fig:lvsf}, we plot the relation between the right cable length $L_2$ and the corresponding force, and the one between the length difference $\Delta L:= L_1 - L_2$ and the force difference $\tau_1 := u_1 - u_2$ of these two cables. The correlation coefficients are 0.9977 and 0.9987, which imply the strong linearity between cable lengths and forces. Thus, it is reasonable to use the cable lengths -- driven by motors -- as the ``real'' input signals. Note that the above-mentioned linearity only holds in static or low-speed conditions. To satisfy this, we used a high gain for the cable length control loop to yield a very short transient stage.

The force sensors were used in the open-loop stiffening experiments to provide the real-time force signals, and then we were able to study the relation between the value $\mu$ and the open-loop stiffness. Additionally, these sensors have proven important for examining the relation between cable lengths and applied forces as mentioned above. However, in closed-loop control, we removed force sensors on the platform and directly regulated the cable lengths. Note that these sensors may cause significant inertial disturbances to the loop.

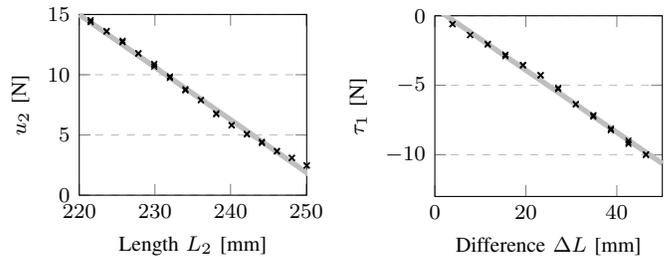
\begin{figure}
    \centering
    \begin{tikzpicture}
    \begin{axis}[
      xmin=220, xmax=250.06,
      ymin=0, ymax=15,
      ymajorgrids=true,
      grid style=dashed,
      legend pos=north west,
          label style={font=\footnotesize},
      tick label style={font=\footnotesize},
      width = 0.52\linewidth,
      height = 0.4\linewidth,
    xlabel={Length $L_2$ [mm]},
    ylabel={$u_2$ [N]}
    ]

\addplot[
        domain = 220:251, color=lightgray, line width = 2pt
    ] {-0.4364*x + 111.0000};

    \addplot[only marks, mark=x, color=black, line width = 0.5pt, mark size=1.5000pt] table[row sep=crcr]{%
x	y\\
250.059167947769	2.47541427612305\\
250.059167947769	2.46364617347717\\
250.059167947769	2.45972347259521\\
248.09723991394	3.09127187728882\\
248.09723991394	3.09519457817078\\
248.09723991394	3.09715580940247\\
246.124763946533	3.68163204193115\\
246.124763946533	3.644366979599\\
246.124763946533	3.61886954307556\\
244.141739959717	4.32102584838867\\
244.141739959717	4.37594318389893\\
244.141739959717	4.44458961486816\\
242.148168182373	5.03691101074219\\
242.1587159729	5.06044721603394\\
242.148168182373	5.10751914978027\\
240.133499908447	5.79006195068359\\
240.144047698975	5.83517265319824\\
240.133499908447	5.80771398544312\\
238.108283843994	6.71188688278198\\
238.108283843994	6.80799198150635\\
238.108283843994	6.74915218353271\\
236.072519989014	7.9043755531311\\
236.083067779541	7.91025972366333\\
236.083067779541	7.8729944229126\\
234.036755218506	8.73205661773682\\
234.036755218506	8.81639385223389\\
234.026208343506	8.69675254821777\\
231.979896697998	9.87551212310791\\
231.979896697998	9.79117488861084\\
231.979896697998	9.72645092010498\\
229.901940765381	10.6620054244995\\
229.901940765381	10.8149890899658\\
229.912487640381	10.9150171279907\\
227.813436126709	11.760350227356\\
227.813436813354	11.8034992218018\\
227.823984832764	11.7426986694336\\
225.724931488037	12.7037506103516\\
225.735480194092	12.8351593017578\\
225.735480194092	12.7076730728149\\
223.615331268311	13.6040010452271\\
223.615331268311	13.6432275772095\\
223.615331268311	13.5667352676392\\
221.505731048584	14.3551902770996\\
221.505731048584	14.5474004745483\\
221.505731048584	14.4238367080688\\
};

    \end{axis}
\end{tikzpicture}
\begin{tikzpicture}
    \begin{axis}[
      xmin=0, xmax=50,
      ymin=-13, ymax=0,
      ymajorgrids=true,
      grid style=dashed,
      legend pos=north west,
      width = 0.52\linewidth,
      height = 0.4\linewidth,
          label style={font=\footnotesize},
      tick label style={font=\footnotesize},
    xlabel={Difference $\Delta L$ [mm]},
    ylabel={${\tau}_1$ [N]}
    ]

\addplot[only marks, mark=x, line width = 0.5pt, mark size=1.5000pt, draw=black] table[row sep=crcr]{%
x	y\\
3.88166396141052	-0.609671711921692\\
3.87111617088318	-0.607710242271423\\
3.88166396141052	-0.584174156188965\\
7.75278053283691	-1.38047432899475\\
7.76332832336426	-1.38831973075867\\
7.75278053283691	-1.38439691066742\\
11.6238963317871	-2.07674634456635\\
11.6344441223145	-2.02771329879761\\
11.6344441223145	-2.0198677778244\\
15.5055601501465	-2.79067063331604\\
15.5055601501465	-2.81616806983948\\
15.4950123596191	-2.91423439979553\\
19.3766761779785	-3.5222464799881\\
19.3555801391602	-3.54774403572083\\
19.3766761779785	-3.58893191814423\\
23.2477922058105	-4.25578415393829\\
23.2372444152832	-4.30481743812561\\
23.2477922058105	-4.27932012081146\\
27.1189083337784	-5.17172503471375\\
27.1083603286743	-5.28352081775665\\
27.1083603286743	-5.21879696846008\\
30.9900241470337	-6.37598168849945\\
30.9794763565063	-6.38774991035461\\
30.9794763565063	-6.3387166261673\\
34.8294969177246	-7.17228150367737\\
34.8294969177246	-7.26838672161102\\
34.8505918121338	-7.14482271671295\\
38.6795155334473	-8.24316775798798\\
38.6795155334473	-8.17452120780945\\
38.6900633239746	-8.10587453842163\\
42.5611793518066	-8.97278249263763\\
42.5506315612793	-9.12576615810394\\
42.5506324768066	-9.2297168970108\\
46.411199798584	-9.96913814544678\\
46.4006513214111	-10.0338617563248\\
46.3901033020019	-9.96913850307465\\
50.2612202453613	-10.8321239948273\\
50.2506715393066	-10.9282287359238\\
50.2506715393066	-10.8183945417404\\
54.1112406921387	-11.5617387294769\\
54.1112406921387	-11.5891971588135\\
54.1112406921387	-11.5460476875305\\
57.9296168518066	-12.2050547599792\\
57.9401655578613	-12.4031488895416\\
57.9296168518066	-12.2678172588348\\
};


\addplot[
        domain = 0:50, color=lightgray, line width = 2pt
    ] {-0.2238*x + 0.5622};


    \end{axis}
\end{tikzpicture}
    \caption{The linearity between forces and lengths: The length $L_1$ vs the force $u_1$ of the right cable; and the length difference $\Delta L:= L_1 - L_2$ vs the force difference $\tau_1 := u_1 - u_2$ (``$\times$'' represents test data, and the dash lines are the fitted functions.)}
    \label{fig:lvsf}
\end{figure}

\subsection{Open-loop stiffening experiments}

In this subsection, we aim to validate the results regarding open-loop stiffening presented in Section \ref{sec:3}. For this purpose, we utilised a linear actuator placed at the end-effector to generate a small displacement $\delta x > 0$, as illustrated in Fig. \ref{fig:OpenStiff}(a). The actuator was connected to the force sensors for measuring the external force, denoted as $f_{\tt ext}$, in relation to the displacement. By calculating the ratio of the measured force to the applied displacement, \emph{i.e.}, ${f_{\tt ext} \over \delta x}$, we were able to estimate the transverse stiffness, given that $\delta x$ was sufficiently small. This procedure allowed us to verify the findings related to open-loop stiffening as outlined in Section \ref{sec:3}.
\begin{figure}[!htp]
 \centering
\setkeys{Gin}{width=0.35\linewidth}
\subfloat[Experimental setup]{\includegraphics[width = 0.28\linewidth]{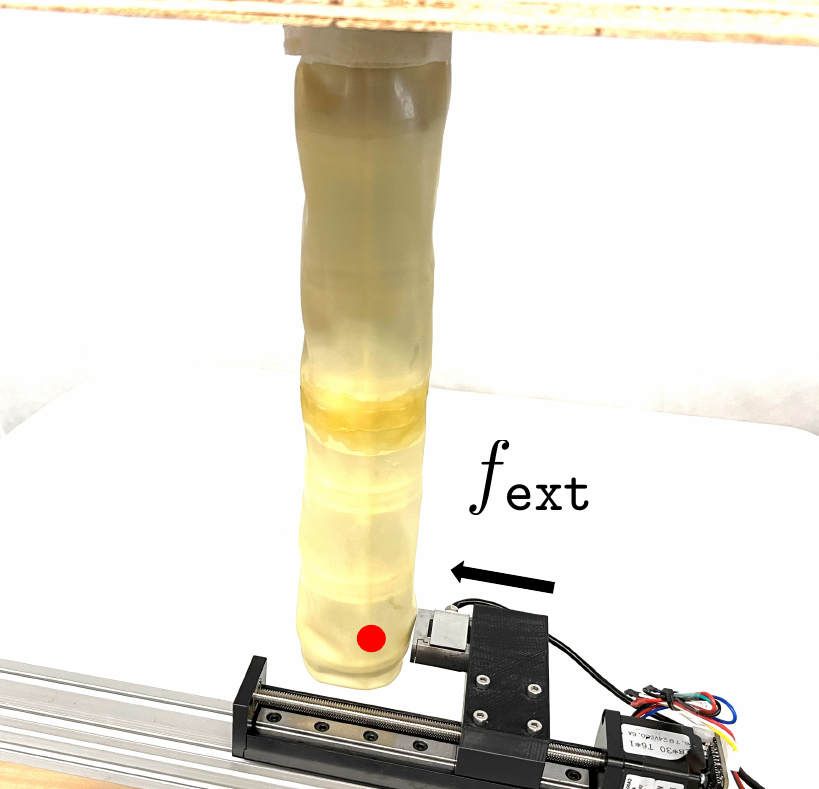}}%
\hfill
\subfloat[Experimental results of transverse stiffness]{\begin{tikzpicture}
    \begin{axis}[
      width  = 0.65 \linewidth,
      height = 0.4 \linewidth,
      xmin=0, xmax=44.6,
      ymin=-0.2, ymax=1.7,
      ymajorgrids=true,
      grid style=dashed,
      label style={font=\footnotesize},
      tick label style={font=\footnotesize},
      legend pos=north west,
    x label style={above=5mm},
    xlabel={Force $\mu$ [N]},
    ylabel={Stiffness [N/mm]}
    ]

\addplot[domain = 0:50, color=lightgray, line width = 2pt] {0.02491*x  -0.04697 };

    \addplot[only marks, mark=x, color=black, line width = 0.5pt, mark size=2.000pt]
        coordinates {(0.000,0.023)(4.361,0.050)(8.877,0.112)(13.435,0.290)(17.939,0.412)(22.487,0.492)(27.919,0.602)(32.138,0.742)(36.588,0.930)(41.304,0.993)(46.014,1.092)};

    \addplot[name path=us_top,color=carminepink!70] coordinates {(0.000,0.0250)(4.361,0.0670)(8.877,0.1510)(13.435,0.3160)(17.939,0.4720)(22.487,0.6060)(27.919,0.762)(32.138,0.904)(36.588,1.032)(41.304,1.099)(46.014,1.3)};

    \addplot[name path=us_down,color=carminepink!70] 
    coordinates {(0.000,0.02)(4.361,0.033)(8.877,0.073)(13.435,0.264)(17.939,0.352)(22.487,0.378)(27.919,0.442)(32.138,0.58)(36.588,0.828)(41.304,0.887)(46.014,0.884)};

   \addplot[carminepink!50,fill opacity=0.5] fill between[of=us_top and us_down];

    \end{axis}
\end{tikzpicture}}
\hfill
\caption{Experiments for open-loop stiffening}
\label{fig:OpenStiff}
\end{figure}

We measured the stiffness values under different open-loop tendon forces $\mu>0$ in the interval $[0,45]$ N. Each experiment was repeatedly conducted three times under the same conditions in order to improve reliability. The experimental results are shown in Fig. \ref{fig:OpenStiff}(b), where ``$\times$'' represents the mean values of the calculated stiffness for all $\mu$, and the error bars are $\pm 1$ standard deviation. This clearly verifies the theoretical results in Proposition \ref{prop:1}. The correlation coefficient between $\mu$ and the stiffness is 0.989, which illustrates the strong linearity -- exactly coinciding with the equation \eqref{eq:K_T}. 

\subsection{Closed-loop experiments}

In order to apply the proposed real-time control algorithm to the experimental platform, we first conducted the identification procedure to estimate the parameters outlined in the fourth discussing point in Section \ref{sec:5}. It relies on the fact that at any \emph{static} configuration $\bfq$ (\textit{i.e.} with $\bfp =0$) the identity $\nabla_\bfq U(\bfq) = G(\bfq) \bfu$  holds true, and thus $J(\theta) =0$ with the cost function
\begin{align}
  & J(\theta,u_1,u_2) \\
  & \hspace{.3cm} := \Big|\alpha_1 \sin(n\theta) + \alpha_2\theta - [g_0 +g_1(\theta)] u_1 + [g_0 - g_1(\theta)]u_2\Big|^2,\nonumber
\end{align}
that contains all the quantities to be identified. According to the modelling procedure in Appendix, we simply parameterised $g_0 + g_1(\theta) = c_1 + c_2 \sin(\theta)$ with two constants $\alpha_1>0$ and $\alpha_2<0$, complying with the assumptions on the input matrix. We regulated the continuum robot to different equilibria $\theta^j$ ($j=1,\ldots, w$ with some $w\in \mathbb{N}_+$) by driving the cables, and recorded the corresponding forces $(u_1^j, u_2^j)$. 

The identification procedure boils down to the optimisation 
\begin{equation}
    \underset{c_1, \alpha_1,\alpha_2>0, c_2<0}{\arg\min} ~\sum_{j\in \{1,\ldots, w\}}  J(\theta^j, u_1^j, u_2^j) .
\end{equation}
We ran the identification experiments to collect data at 15 equilibria points (\emph{i.e.} $w = 15$) and repeated for six times. Using this data set, the identified parameters were
$
    c_1= 1.2143,  c_2 = -2.9015,  \alpha_1 = 8.6114
$
and $\alpha_2 = 0.001$.

To evaluate the performance of position control, we first considered a desired configuration $\mathbf{q}_\star = [\theta_\star, \ldots, \theta_\star]^\top$ with $\theta_\star = 5$ deg for the proposed control scheme. We conducted experiments for the cases without external forces under various values of the gains $\gamma$ and $K_{\tt d}$, as shown in Figs. \ref{fig:ctrl_agl1_Kd}-\ref{fig:ctrl_agl1_gamma}, respectively. The second row of these figures depicts the configuration variable at the steady-state stage during $[2,8]$ s. It is worth noting that the control inputs $u_i$ ($i=1,2$) are mapped to the cable length $L_i$, as explained in Section \ref{sec:61}. In all these scenarios, the transient stages lasted for less than 1.5 seconds, and the configuration variable quickly converged to small neighborhoods of the desired angle, demonstrating the high accuracy of the proposed control approach. There were no apparent overshooting in configuration variables. Our results indicate that selecting either a sufficiently small or large $K_{\tt d}$ can negatively affect the control performance during the transient stage. On the other hand, setting a large $\gamma>0$ may lead to chattering due to measurement noise at the steady-state stage, which is well understood as the deleterious effect of high-gain design in the control literature \cite{ORTetal01}. 

\begin{figure*}[!htp]
    \centering
    \includegraphics[width = 0.95\textwidth, angle = 0]{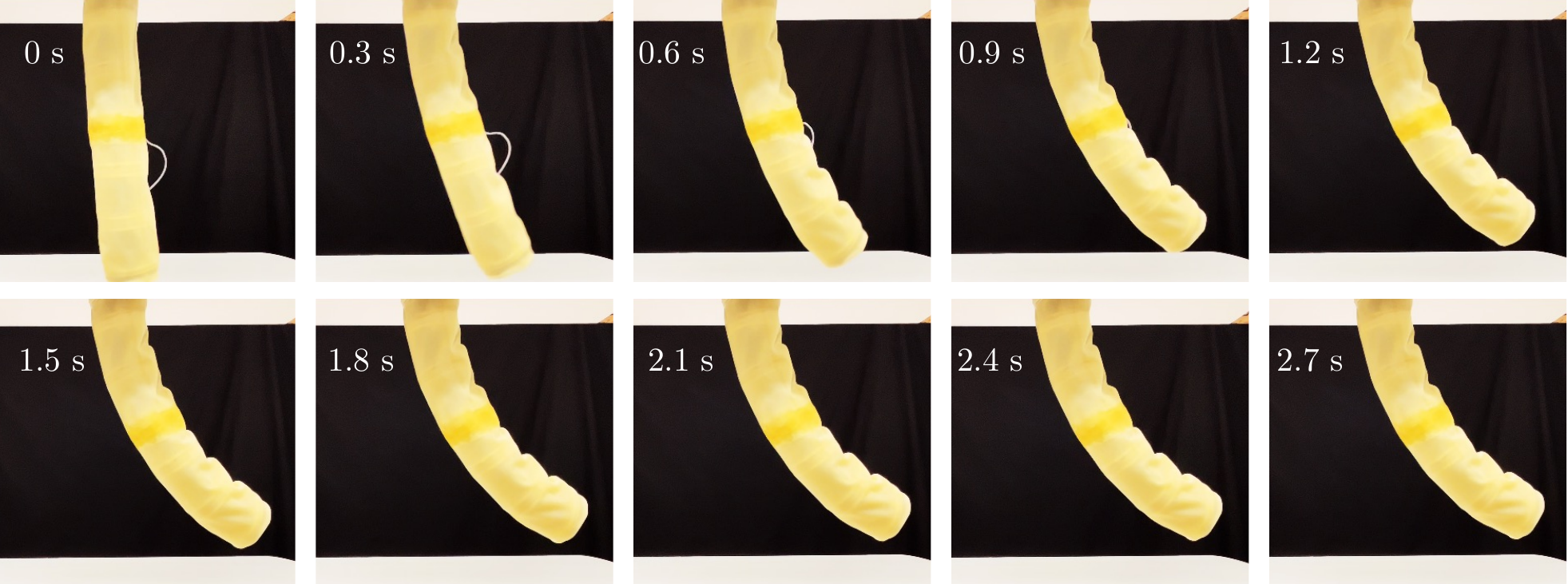}
    \caption{Photo sequence of the position control with $\theta_\star = 10$ deg (overall bending as 60 deg) and $\gamma=1, K_{\tt d}=0.1$}
    \label{fig:photo_sequence}
\end{figure*}

\begin{figure*}[!htp]
 \centering
\setkeys{Gin}{width=0.23\linewidth}
\subfloat{\begin{tikzpicture}[spy using outlines={rectangle, magnification=10,  connect spies}]
    \begin{axis}[
      xmin=0, xmax=3,
      ymin=-0.2, ymax=6,
      ymajorgrids=true,
      grid style=dashed,
      legend pos=north west,
      width = 0.26\linewidth,
      height = 0.2\linewidth,
    label style={font=\footnotesize},
      tick label style={font=\footnotesize},
   y label style={at={(-0.23,.35)}},
    ylabel={Position $q_i$ [deg]}
    ]


    \addplot[color= blue, line width = 1pt] table {
0	0
0.0487324000000000	0.0571194725109317
0.0974648000000000	0.00716201049496108
0.150304600000000	0.591477696607778
0.206128400000000	1.34988812935604
0.253611900000000	1.78215830048176
0.305590900000000	2.27855511893232
0.351815300000000	2.50615689245564
0.402593400000000	3.20712615641229
0.473879300000000	3.45772108911153
0.521701700000000	3.59367461800630
0.576204500000000	3.55209649150558
0.621306600000000	3.46933947732990
0.679257300000000	3.54101966249684
0.753773900000000	3.67746724460794
0.801874300000000	3.88551853741958
0.875632900000000	4.20135059562660
0.920843800000000	4.41580150078830
0.973573500000000	4.49527782301339
1.02277970000000	4.51017780232997
1.09737210000000	4.49152057200293
1.14822380000000	4.48601619082811
1.19960290000000	4.49060108949309
1.26341830000000	4.57283718154546
1.34012220000000	4.65707087439488
1.39105240000000	4.74824147319439
1.44134890000000	4.78523484230877
1.51456050000000	4.83468907686631
1.58755350000000	4.84720482308325
1.66072420000000	4.85030904448449
1.71099380000000	4.85075879714618
1.78502430000000	4.85278309169274
1.84310370000000	4.87547161856021
1.89349410000000	4.85479415771481
1.96739920000000	4.92516244966948
2.01623760000000	4.95141383989991
2.06677470000000	4.97309186674926
2.13957490000000	4.96860339695397
2.18988980000000	4.95325211503427
2.26262350000000	4.96111589362212
2.31145540000000	4.94012373428327
2.38696080000000	4.93922642504921
2.45719020000000	4.94128379142853
2.53128950000000	4.97995321555521
2.58210240000000	4.99136191548834
2.63070030000000	4.98630477143591
2.68108080000000	4.97212414759924
2.72986300000000	4.96843204637111
2.80388460000000	4.96104703927220
2.85519870000000	4.96577522099735
2.92950870000000	4.96408994278639
2.98101730000000	4.96161027650487
3.02886160000000	4.95922184658255
3.10396570000000	4.95911031096950
3.15479550000000	4.95888728931020
3.22768620000000	4.95827001294337
3.27975280000000	4.95835702065788
3.33168450000000	4.95903067374719
3.40513040000000	4.95959358625474
3.45427510000000	4.96018801869675
3.52847130000000	4.96022301177717
3.57386590000000	4.95977494422954
3.62581250000000	4.95981114454899
3.68250200000000	4.95945504253893
3.73351150000000	4.96003835430557
3.78411790000000	4.95992358953660
3.83403450000000	4.95981327805675
3.90944840000000	4.96054451124211
3.95267110000000	4.96018278474058
4.02749720000000	4.96032445497862
4.07862070000000	4.96044616027851
4.13442500000000	4.95961845543031
4.20690800000000	4.95930488618330
4.25480820000000	4.95887750900481
4.32944060000000	4.95869941311035
4.37556050000000	4.95851361169176
4.44908110000000	4.95919188398835
4.49793060000000	4.95973904273607
4.57176340000000	4.95996654694631
4.62296640000000	4.96025776748601
4.69685730000000	4.95938088838603
4.75160990000000	4.95859672675151
4.80374250000000	4.95821929158789
4.87659520000000	4.95812947859219
4.92746240000000	4.95844241390669
4.98141700000000	4.95884243803438
5.05133400000000	4.95904753315599
5.10318640000000	4.95913459027288
5.17699030000000	4.95860774523223
5.22735680000000	4.95818580727223
5.28349340000000	4.95806952137343
5.34107430000000	4.95728174222656
5.39463210000000	4.95740652800301
5.46921630000000	4.95745306165965
5.52100140000000	4.95821237650412
5.59321130000000	4.95880222905750
5.63858930000000	4.95906602110536
5.71161120000000	4.95905109585246
5.76349570000000	4.95855688153853
5.83793340000000	4.95750170479235
5.89117050000000	4.95700846541048
5.94000130000000	4.95669206263883
6.01058080000000	4.95668683175103
6.06368430000000	4.95701250133930
6.13681840000000	4.95788374541570
6.18662990000000	4.95879997144536
6.25728180000000	4.95932078419700
6.30980020000000	4.95946164877826
6.36388870000000	4.95887809891190
6.40977320000000	4.95781884921095
6.46053680000000	4.95686916826692
6.50955030000000	4.95635034665031
6.58343870000000	4.95613259016851
6.63542220000000	4.95653991067689
6.68901350000000	4.95648778895858
6.74046670000000	4.95649367914884
6.79176900000000	4.95629156787002
6.84142170000000	4.95572599575297
6.91480810000000	4.95535549453097
6.96816780000000	4.95484246845093
7.00973040000000	4.95456946332923
7.05867330000000	4.95403415918045
7.11133440000000	4.95397484457256
7.18544300000000	4.95431335182668
7.23620790000000	4.95489237634417
7.29016600000000	4.95557044849587
7.34350370000000	4.95562074585908
7.41738410000000	4.95480810218885
7.46959410000000	4.95418127446411
7.54234150000000	4.95316829015513
7.59555280000000	4.95200708451099
7.64339490000000	4.95209933812858
7.71462220000000	4.95225394152961
7.78803370000000	4.95277589991744
7.84034940000000	4.95316054935106
7.89046980000000	4.95317543459079
7.94242980000000	4.95184447182015
8.01722950000000	4.95083423538828
8.08872840000000	4.94930556082976
8.14115190000000	4.94826582527981
8.19378590000000	4.94766965311757
8.23903440000000	4.94738999576557
8.28772880000000	4.94674764199208
8.33547410000000	4.94614883099921
8.41308920000000	4.94613222689715
8.46307290000000	4.94637824877810
8.53755090000000	4.94651066588923
8.58940150000000	4.94598381440933
8.64484310000000	4.94587767012979
8.71903830000000	4.94593600722323
8.76757640000000	4.94690762969947
8.81907150000000	4.94890977579475
8.86252430000000	4.94963837641798
8.91464090000000	4.95106323057676
8.98859260000000	4.95360430394290
9.03989700000000	4.95502526321103
9.09159680000000	4.95569693788836
9.14326220000000	4.95702684558497
9.21785050000000	4.95683544997571
9.26509340000000	4.95717523213427
9.33814230000000	4.95831082405184
9.39003030000000	4.95872517771563
9.43812240000000	4.95931397170452
9.49193690000000	4.95998777863754
9.54377220000000	4.96035780871557
9.61904710000000	4.96040588832609
9.66864780000000	4.95965339320678
9.71964000000000	4.95905545353369
9.79305270000000	4.95825944091683
9.84562940000000	4.95810582172400
9.91673040000000	4.95855046871308
9.96663690000000	4.95990840014358
10.0373609000000	4.95998914533498
10.0879254000000	4.96095255093484
10.1585437000000	4.96123500857680
10.2067080000000	4.96067668971093
10.2791101000000	4.95987550023394
10.3317213000000	4.95882229010837
10.3828650000000	4.95803741934602
10.4565585000000	4.95817915410135
10.5077424000000	4.95793400030457
10.5646813000000	4.95772603312093
10.6379389000000	4.95840530261400
10.6841437000000	4.95817271595357
10.7594528000000	4.95773050327384
10.8108819000000	4.95746068552987
10.8647451000000	4.95637134611249
10.9372444000000	4.95528293509485
10.9904679000000	4.95554709700483
11.0404265000000	4.95508227825593
11.1148392000000	4.95526942330849
11.1646992000000	4.95629494387902
11.2377483000000	4.95644751379434
11.2868871000000	4.95735026403333
11.3582398000000	4.95709027744847
11.4069668000000	4.95649347536131
11.4808795000000	4.95578386988353
11.5554047000000	4.95511939962895
11.6259108000000	4.95521284798879
11.6789337000000	4.95569092215540

 };

    \end{axis}

\end{tikzpicture} }
\hfill
\subfloat{\begin{tikzpicture}
    \begin{axis}[
      label style={font=\footnotesize},
      tick label style={font=\footnotesize},
      xmin=0, xmax=3,
      ymin=-0.2, ymax=6,
      ymajorgrids=true,
      grid style=dashed,
      legend pos=north west,
      width = 0.26\linewidth,
      height = 0.2\linewidth,
    ]


    \addplot[color=blue, line width = 1pt] table {
0	0
0.0702578000000000	0.00433362481546621
0.140515600000000	0.532892025400373
0.187789100000000	0.973648389621454
0.262076000000000	1.55512253420422
0.308745600000000	1.97253820495782
0.364577100000000	2.33680483965311
0.437808400000000	2.93325149254827
0.487995200000000	3.20000474247410
0.544334000000000	3.30811170314803
0.615377000000000	3.36511098419767
0.668287200000000	3.41431550877347
0.741237400000000	3.53711922921779
0.793452400000000	3.72248421611395
0.863594800000000	3.99804331456495
0.912167200000000	4.22259730731308
0.984637700000000	4.35945067693008
1.03772920000000	4.42996381119947
1.08767120000000	4.40947001005043
1.16004530000000	4.43800279673973
1.23705000000000	4.43881123670197
1.29269880000000	4.55854097072357
1.36504500000000	4.65052645380426
1.43813100000000	4.74086491387146
1.49111870000000	4.77268954060935
1.53716770000000	4.79280440797281
1.61281120000000	4.77582274631906
1.66273970000000	4.79780094243468
1.73878690000000	4.81972038758784
1.79215060000000	4.83740746683777
1.84464480000000	4.85616376789465
1.91861270000000	4.87038601477063
1.96832290000000	4.93162339206160
2.03927980000000	4.93775745092926
2.08907450000000	4.91558537609342
2.13715090000000	4.93089097993523
2.19446000000000	4.93438389402621
2.26731980000000	4.93540763764190
2.31399870000000	4.93474976579536
2.36822950000000	4.93542116582549
2.44395090000000	4.93661251814027
2.51629250000000	4.92762969747306
2.56945110000000	4.93136634921536
2.64091180000000	4.93652628521015
2.68596540000000	4.94043648278148
2.73347740000000	4.95500366380115
2.78084500000000	4.95646942413570
2.82617970000000	4.95826730553405
2.87554220000000	4.95797582733957
2.92072350000000	4.95650370915464
2.96835740000000	4.95510628267875
3.01544280000000	4.95447657321026
3.06263540000000	4.95246139025144
3.10900470000000	4.95156657438156
3.15940700000000	4.95251930532138
3.20359640000000	4.95304252503134
3.25357260000000	4.95288736573008
3.29855520000000	4.95310210024413
3.34626250000000	4.95213531626587
3.40122650000000	4.95140676502475
3.44657730000000	4.95057870370702
3.49138760000000	4.95025815610661
3.57070660000000	4.95032269369161
3.64428250000000	4.95074083465547
3.69181170000000	4.95096174432673
3.73643390000000	4.95088681885993
3.78808380000000	4.95127747574198
3.83399380000000	4.95089001323739
3.88249020000000	4.95058195718698
3.93437800000000	4.95016029446226
4.00928860000000	4.94932651677280
4.05950120000000	4.94896206366936
4.10919350000000	4.94900008165323
4.16018750000000	4.94857491374962
4.20908590000000	4.94850916639453
4.26475120000000	4.94911525967985
4.31328070000000	4.94914434853249
4.35703160000000	4.94990083573402
4.40988930000000	4.95093575896612
4.46627860000000	4.95136287307887
4.54180530000000	4.95079892000712
4.61515840000000	4.94911214513096
4.67064560000000	4.94731633523216
4.71308970000000	4.94671233607467
4.78620080000000	4.94662908999490
4.84098620000000	4.94710852322468
4.91114910000000	4.94669524022625
4.96432380000000	4.94649964307830
5.01258570000000	4.94626857756290
5.09341650000000	4.94505693581467
5.13396530000000	4.94495830518737
5.17799010000000	4.94450019085877
5.23025500000000	4.94486827348201
5.30420640000000	4.94463182280006
5.44382930000000	4.94470362741070
5.49061850000000	4.94447709951594
5.53903200000000	4.94524506817213
5.59404150000000	4.94528974647679
5.65108850000000	4.94578235473745
5.69758840000000	4.94603362159362
5.74587560000000	4.94662662072298
5.79680840000000	4.94645452537930
5.93445340000000	4.94780285275711
5.97571300000000	4.94812781215256
6.04791580000000	4.94839487318039
6.10008900000000	4.94824149559212
6.15062100000000	4.94836545362548
6.19296470000000	4.94827289784847
6.24081700000000	4.94857752153202
6.31437140000000	4.94877140294215
6.38634590000000	4.95013627954779
6.43551720000000	4.95040233663464
6.48984780000000	4.95094500854862
6.56251810000000	4.95100166494727
6.61531960000000	4.94962879082734
6.66567000000000	4.94934656212631
6.71182790000000	4.94867865760224
6.76399320000000	4.94873764633892
6.80845150000000	4.94869008401916
6.85541410000000	4.94890884258659
6.90914850000000	4.94918674883230
6.98126300000000	4.94933038916490
7.03251720000000	4.94927157526108
7.09396700000000	4.94922679023351
7.14209410000000	4.94929023286982
7.18983100000000	4.94914682520746
7.23722040000000	4.94849441886807
7.28343260000000	4.94814524668893
7.33214310000000	4.94759447375422
7.37240680000000	4.94744147485515
7.42941890000000	4.94728075253963
7.48433830000000	4.94740196149418
7.53294640000000	4.94774957574898
7.57727280000000	4.94813321116824
7.62455960000000	4.94834990601281
7.66831660000000	4.94837472418432
7.71711020000000	4.94858841201660
7.76220450000000	4.94768188405754
7.80934000000000	4.94739871981847
7.85646170000000	4.94693209340038
7.90180400000000	4.94662143824202
7.94990750000000	4.94597034355759
7.99197430000000	4.94541858245655
8.06551870000000	4.94491488543953
8.11589060000000	4.94442545468910
8.16591730000000	4.94416657561737
8.21303680000000	4.94422944690088
8.27356010000000	4.94381134476714
8.31711490000000	4.94396077769189
8.37461340000000	4.94389693582737
8.42495330000000	4.94337290025038
8.47407030000000	4.94339478806530
8.52391330000000	4.94287343671722
8.59665120000000	4.94256496626497
8.64879800000000	4.94229370264706
8.69799980000000	4.94242234501201
8.75080210000000	4.94289172778704
8.79785840000000	4.94268624551473
8.84201350000000	4.94249546589017
8.89596130000000	4.94293785301128
8.94100800000000	4.94340440875220
8.98643190000000	4.94301465330969
9.03577600000000	4.94322840883859
9.08357360000000	4.94282129542026
9.12978930000000	4.94253215782993
9.17441290000000	4.94247429288088
9.22127780000000	4.94202712253653
9.27003470000000	4.94220851245795
9.31675200000000	4.94257392616126
9.36169920000000	4.94281093981094
9.40813810000000	4.94284051041270
9.45630980000000	4.94259348168853
9.50201390000000	4.94303583151848
9.54780710000000	4.94263485834691
9.59484180000000	4.94237288473623
9.64328340000000	4.94193935798892
9.69035220000000	4.94164733354636
9.73619580000000	4.94091764330253
9.78097800000000	4.94125860186696
9.82941040000000	4.94123967723039
9.87721770000000	4.94145380680110
9.92084110000000	4.94146486931585
9.96939640000000	4.94192837717599
10.0157046000000	4.94179685461292
10.0627986000000	4.94213360074515
10.1071622000000	4.94190291678920
10.1575748000000	4.94147826064046
10.2134808000000	4.94097111390283
10.2823027000000	4.94053085478868
10.3302605000000	4.94027087839770
10.3752132000000	4.94043635276735
10.4256201000000	4.94097660814021
10.4724145000000	4.94161605094075
10.5283391000000	4.94237701659132
10.5762795000000	4.94283860473035
10.6286338000000	4.94244000051561
10.6863439000000	4.94174629419027
10.7601990000000	4.94068012209927
10.8061174000000	4.93981200523151
10.8810273000000	4.93995658085636
    };

    \end{axis}

\end{tikzpicture}}
\hfill
\subfloat{\begin{tikzpicture}
    \begin{axis}[
      label style={font=\footnotesize},
      tick label style={font=\footnotesize},
      xmin=0, xmax=3,
      ymin=-0.2, ymax=6,
      ymajorgrids=true,
      grid style=dashed,
      legend pos=north west,
      width = 0.26\linewidth,
      height = 0.2\linewidth,
    ]


    \addplot[color=blue, line width = 1pt] table {
0	0
0.0643425000000000	-0.000323599209366166
0.128685000000000	0.367086286677908
0.179990500000000	0.874936005498207
0.252648600000000	1.36734059283998
0.302044600000000	1.85066162914253
0.350144400000000	2.26262733336101
0.396763100000000	2.62883792788476
0.450177800000000	3.14502186161659
0.502453700000000	3.47208519473605
0.551189800000000	3.59697550018217
0.608195100000000	3.64955287437470
0.652493600000000	3.64422074212966
0.728552000000000	3.74515650788916
0.777774400000000	3.90985966690603
0.851622600000000	4.12745686950485
0.905830800000000	4.39035372750780
0.955523500000000	4.48482583902768
1.02737480000000	4.53154598135477
1.08015320000000	4.50545372905943
1.15260420000000	4.49814795994616
1.22881620000000	4.52485108227802
1.29924180000000	4.60236577406786
1.35085350000000	4.66165155754448
1.40091180000000	4.71945865607535
1.45428870000000	4.77270848109475
1.52652020000000	4.80903923292787
1.57532690000000	4.76783618320251
1.62653600000000	4.80176765944201
1.67408940000000	4.79731972508602
1.74732140000000	4.80367317053766
1.79952680000000	4.80362391288903
1.87166830000000	4.80589841907348
1.92292720000000	4.81778940757326
1.97431730000000	4.87240239157427
2.04630120000000	4.91096173793458
2.09779230000000	4.88107712616123
2.17052630000000	4.89137513234770
2.24498890000000	4.90694831453459
2.31860950000000	4.89947253568598
2.38932230000000	4.90347285273103
2.46233360000000	4.89997231602863
2.51546670000000	4.89986346141249
2.59004720000000	4.90098554486797
2.64201290000000	4.90365247969613
2.69392870000000	4.90440758181687
2.76697280000000	4.90396611822388
2.81353890000000	4.90153411767034
2.89141130000000	4.89922957813826
2.94285050000000	4.89690418238230
2.99279240000000	4.89708043628124
3.06632040000000	4.89891012587438
3.11758500000000	4.90447560431376
3.19272160000000	4.90840272527916
3.24291310000000	4.91278866242060
3.29436300000000	4.91412958696296
3.36604880000000	4.91383684174034
3.41098350000000	4.91239229470400
3.46188500000000	4.91152975803171
3.51265330000000	4.91180319850702
3.56054780000000	4.91201986695688
3.63277620000000	4.91292022375470
3.68521280000000	4.91507083301862
3.73704870000000	4.91626524535708
3.80829610000000	4.91728082596052
3.86010160000000	4.91764885798754
3.93342480000000	4.91731911311927
3.99064070000000	4.91588498670877
4.03295790000000	4.91562842523653
4.08445240000000	4.91666991165876
4.13528670000000	4.91664344512778
4.18031730000000	4.91795699369077
4.23250960000000	4.91887843913113
4.28595870000000	4.91875286900078
4.33425200000000	4.91836060972755
4.40759900000000	4.91829423536037
4.48068700000000	4.91647678004461
4.53229340000000	4.91649511565165
4.58475100000000	4.91629256964319
4.63096940000000	4.91629769619460
4.68064090000000	4.91689750948544
4.73215560000000	4.91749909203210
4.80514050000000	4.91745681596649
4.85505230000000	4.91790776012981
4.92631040000000	4.91794875773431
4.97549510000000	4.91760175837730
5.04962110000000	4.91736601941900
5.09581520000000	4.91710270285229
5.16866020000000	4.91754586275799
5.22395340000000	4.91769613627411
5.27664090000000	4.91752508451705
5.35232550000000	4.91740741120368
5.42946230000000	4.91670070464697
5.47994600000000	4.91702219911088
5.55592010000000	4.91760717506031
5.60624260000000	4.91748501607924
5.67921880000000	4.91777359650201
5.73095430000000	4.91823304600814
5.78207800000000	4.91857715183053
5.85512000000000	4.91933611350249
5.92532020000000	4.91953514590745
5.99801270000000	4.91904481455455
6.07309350000000	4.91800505922291
6.12482010000000	4.91824763030747
6.17628030000000	4.91787329906587
6.25184820000000	4.91800068287679
6.29781980000000	4.91796338468584
6.34405570000000	4.91812422877166
6.42389830000000	4.91832299693420
6.46959010000000	4.91866721015377
6.51471690000000	4.91882454688452
6.56369930000000	4.91901468505757
6.60987410000000	4.91827127416870
6.65659500000000	4.91881295185888
6.70683790000000	4.91811131678087
6.74919720000000	4.91809802076700
6.79594750000000	4.91794617544609
6.84371260000000	4.91854138952285
6.88870100000000	4.91811121264613
6.93238680000000	4.91805719289453
7.00794520000000	4.91790054806042
7.05609250000000	4.91776832766219
7.10247280000000	4.91752351441608
7.15088140000000	4.91789610024960
7.19550360000000	4.91790611826728
7.24637400000000	4.91805205900583
7.29426300000000	4.91783281301653
7.34042450000000	4.91814128599729
7.38668400000000	4.91813798553365
7.43277010000000	4.91813594310496
7.47850140000000	4.91820046607847
7.55670270000000	4.91836192982246
7.60610480000000	4.91805846724046
7.65153650000000	4.91836152915754
7.72340840000000	4.91827231857651
7.77281670000000	4.91836646511488
7.82997110000000	4.91856923074653
7.88450030000000	4.91841784268803
7.95760360000000	4.91823318671294
8.01047690000000	4.91884747790762
8.06586830000000	4.91889020880168
8.12036990000000	4.91903871206086
8.16678920000000	4.91924262356962
8.21360230000000	4.91916165429132
8.28674770000000	4.91869612904469
8.33698240000000	4.91887971359696
8.40945730000000	4.91873178884592
8.46729220000000	4.91916118102506
8.51267900000000	4.91941784539689
8.58536440000000	4.91911859385167
8.63650070000000	4.91924601089361
8.70908670000000	4.91868740291754
8.78112380000000	4.91864106856427
8.85341390000000	4.91926357095608
8.90297750000000	4.92011094800543
8.96614270000000	4.92094838055899
9.03744700000000	4.92129536523999
9.08835840000000	4.92191470268786
9.22819730000000	4.92238398734388
9.27726480000000	4.92261513670270
9.31996270000000	4.92267239667696
9.37377590000000	4.92246115655626
9.43374060000000	4.92243306984625
9.47259030000000	4.92263567964922
9.52630490000000	4.92280710842587
9.58112210000000	4.92400113740552
9.62543410000000	4.92469150375642
9.67754220000000	4.92560119168146
9.72372000000000	4.92582283689467
9.76403180000000	4.92633991598941
9.84415370000000	4.92609849062309
9.89161650000000	4.92565128154752
9.93858310000000	4.92587358121525
9.98914990000000	4.92599006385960
10.0465661000000	4.92594251026230
10.0908864000000	4.92644119473783
10.1349466000000	4.92701420839401
10.1833960000000	4.92740850252741
10.2304337000000	4.92783941451260
10.2834075000000	4.92820886030886
10.3432734000000	4.92848987719785
10.3857632000000	4.92851745768151
10.4305597000000	4.92846228190248
10.4768493000000	4.92818547852879
10.5242689000000	4.92839657424089
10.5717099000000	4.92839273116260
10.6223214000000	4.92766670284667
10.6846309000000	4.92751077559445
10.7448955000000	4.92722613952469
10.8004118000000	4.92728902979007
10.8474494000000	4.92722673653006
10.8954420000000	4.92715991484203
10.9468326000000	4.92708213683426
10.9976671000000	4.92725831238574
11.0437357000000	4.92703471813408
11.0897934000000	4.92639813249518
11.1385333000000	4.92682224243959
11.1808989000000	4.92661728113457
11.2546059000000	4.92725577374360
11.3054231000000	4.92692198039331

    };

    \end{axis}
\end{tikzpicture}}
\hfill
\subfloat{\begin{tikzpicture}
    \begin{axis}[
      label style={font=\footnotesize},
      tick label style={font=\footnotesize},
      xmin=0, xmax=3,
      ymin=-0.2, ymax=6,
      ymajorgrids=true,
      grid style=dashed,
      legend pos=north west,
      width = 0.26\linewidth,
      height = 0.2\linewidth,
    ]


    \addplot[color=blue, line width = 1pt] table {
0	0
0.0699137000000000	0.0695829824538597
0.139827400000000	0.00742801435080527
0.194102700000000	0.904543720503210
0.243916900000000	1.28429008089629
0.318287600000000	1.72595253344791
0.367103200000000	2.66045736236351
0.443274200000000	3.00496096643265
0.492855500000000	3.57568939372661
0.545328700000000	3.64838495881620
0.615871000000000	3.44818945139158
0.665425800000000	3.28080238871453
0.737518800000000	3.40807863633447
0.782812400000000	3.63410775437722
0.842091600000000	4.03203461717936
0.915809700000000	4.49375495767409
0.966468700000000	4.67044415737199
1.03915690000000	4.48157926235574
1.09039560000000	4.32252794306582
1.14207730000000	4.41164192651038
1.18624300000000	4.41036839188663
1.23742990000000	4.44648166756981
1.29333670000000	4.55501441413845
1.36637840000000	4.71280366382640
1.41663260000000	4.84957205200941
1.46470550000000	4.83855050905946
1.53808770000000	4.81432982748483
1.61079750000000	4.78490694330609
1.66405270000000	4.76115951075144
1.71628120000000	4.75789627529267
1.78681480000000	4.78730213177791
1.83741000000000	4.82668888397286
1.88954520000000	4.84646765411576
1.94469740000000	4.87401974872372
2.02077230000000	4.88919110199932
2.06874850000000	4.90042475167124
2.14151810000000	4.89690985353244
2.19046000000000	4.88282833434550
2.26387860000000	4.87251993192899
2.31193920000000	4.87943809503698
2.38382540000000	4.88114680672221
2.42911650000000	4.87975968092012
2.50460640000000	4.88164722132076
2.55386040000000	4.93650927441771
2.62781960000000	4.94220166347850
2.67542040000000	4.94610146658095
2.73018150000000	4.93834467604327
2.80759430000000	4.91950616844546
2.85714990000000	4.91559474526264
2.93292070000000	4.91009647216892
3.00625250000000	4.91433393503578
3.08050290000000	4.91276239070212
3.13096320000000	4.92845319054010
3.20954100000000	4.92966086318551
3.28319390000000	4.92427328812564
3.33546770000000	4.92595451851365
3.40944320000000	4.92556159234550
3.45846300000000	4.92152204782944
3.53127440000000	4.91949220720298
3.57886240000000	4.91805065860113
3.62521450000000	4.91874307884969
3.67860200000000	4.92261427949738
3.73051010000000	4.92533143585468
3.77912090000000	4.92853370712067
3.85302030000000	4.93100317161255
3.90209650000000	4.93179722328969
3.97713010000000	4.93027941859968
4.02614210000000	4.92826445033546
4.07917320000000	4.92714242216167
4.15102430000000	4.92632076003954
4.19735050000000	4.92688272596155
4.24341020000000	4.92774870133829
4.29729610000000	4.92850478297130
4.36837350000000	4.92944476201945
4.42018700000000	4.92972965010466
4.47241690000000	4.92858407290506
4.54443800000000	4.92808823434217
4.59277480000000	4.92698388145191
4.66432510000000	4.92584789558756
4.71592330000000	4.92667017965943
4.79045940000000	4.92689148172703
4.83558790000000	4.92821149833239
4.91186060000000	4.92950666274551
4.96080200000000	4.92999150643396
5.03008380000000	4.93027796370892
5.08021380000000	4.93117087249784
5.13469410000000	4.93116835362986
5.18084930000000	4.93148213224100
5.23046010000000	4.93229590395648
5.28126140000000	4.93242290598004
5.35332670000000	4.93307392343141
5.40415820000000	4.93434799075294
5.45042660000000	4.93541593795178
5.50249240000000	4.93602201978368
5.55464570000000	4.93702516216057
5.62790190000000	4.93687798995352
5.67754230000000	4.93667240884940
5.72970100000000	4.93631477335762
5.78081820000000	4.93654890793159
5.83111440000000	4.93706981482628
5.88087670000000	4.93697262681137
5.95273140000000	4.93752028220045
6.00030180000000	4.93755589504437
6.04896470000000	4.93801861073786
6.09961320000000	4.93816182758953
6.15256720000000	4.93828114179663
6.22651190000000	4.93800804875306
6.27682850000000	4.93711487349554
6.34885270000000	4.93725328450543
6.39690600000000	4.93718983312838
6.44872540000000	4.93772511923670
6.52333810000000	4.93769297275693
6.57160050000000	4.93834716852281
6.64542100000000	4.93835545416495
6.69113550000000	4.93823212763622
6.74693520000000	4.93833040268483
6.82225890000000	4.93803750058179
6.87245330000000	4.93801707564258
6.94514420000000	4.93795950716580
7.01563640000000	4.93717498333986
7.09011700000000	4.93674980629963
7.14254890000000	4.93656640374146
7.21533040000000	4.93760097387709
7.26448740000000	4.93902318367436
7.32049080000000	4.94027120203984
7.39445410000000	4.94193133595635
7.44117300000000	4.94266887559995
7.51739930000000	4.94266903810411
7.56822100000000	4.94210521290933
7.63955130000000	4.94211184845650
7.71922830000000	4.94115722880780
7.79069860000000	4.94078317247567
7.84410460000000	4.94163205022576
7.89107560000000	4.94134831809123
7.96379290000000	4.94241657447546
8.01673110000000	4.94257770150424
8.09096280000000	4.94263697170823
8.14213200000000	4.94303342213661
8.18757810000000	4.94261693740586
8.26129620000000	4.94251532775703
8.31260930000000	4.94228891087882
8.38675750000000	4.94320235225498
8.43807680000000	4.94471291943045
8.48338650000000	4.94578948302783
8.53913100000000	4.94722152976082
8.58833550000000	4.94750858452701
8.63805540000000	4.94822369883824
8.69103920000000	4.94821186433935
8.74064900000000	4.94882568696393
8.78978460000000	4.94846777498711
8.86457880000000	4.94847673041092
8.91526890000000	4.94827143984031
8.98770450000000	4.94792257396559
9.03914410000000	4.94829438135668
9.08520050000000	4.94862153665730
9.15839670000000	4.94904525983142
9.21065370000000	4.94999303498586
9.28312770000000	4.95097822389924
9.33757670000000	4.95249849717607
9.39246670000000	4.95341680712646
9.46612550000000	4.95337748929286
9.51787250000000	4.95283599454807
9.59048520000000	4.95212795317848
9.63931710000000	4.95149568440121
9.69090780000000	4.95120415477935
9.76187610000000	4.95192564787996
9.81277130000000	4.95164853438464
9.88740460000000	4.95153562874616
9.93451580000000	4.95148763231204
10.0084573000000	4.95107488319626
10.0592610000000	4.95111161666863
10.1117554000000	4.95128330088890
10.1858978000000	4.95133538794188
10.2367241000000	4.95125757399710
10.2923635000000	4.95048590593296
10.3373338000000	4.94983754197129
10.3872943000000	4.94975050344027
10.4652493000000	4.94967709562230
10.5216448000000	4.95000004248391
10.5694969000000	4.95051742344125
10.6202200000000	4.95056033794297
10.6663138000000	4.95059746208255
10.7123956000000	4.95111174336543
10.7615487000000	4.95141728531145
10.8071981000000	4.95145174934436
10.8557634000000	4.95134616325238
10.9017323000000	4.95139993119884
10.9468140000000	4.95107787157061
10.9931837000000	4.95111824729736
11.0414305000000	4.95136159858582
11.0861015000000	4.95162092167447
11.1329093000000	4.95223723087526
11.1846944000000	4.95203930312746
11.2273296000000	4.95224211604115
11.2721457000000	4.95232078815673
11.3150864000000	4.95262723703068
11.3618545000000	4.95237490064763
11.4100199000000	4.95228482709713
11.4560203000000	4.95286808195191
11.5007451000000	4.95263015832138

    };

    \end{axis}
\end{tikzpicture}}
\hfill

\vspace{-0.5 cm}

\subfloat{\begin{tikzpicture}
    \begin{axis}[
        label style={font=\footnotesize},
      tick label style={font=\footnotesize},
      xmin=2, xmax=8,
      ymin=4.85, ymax=5,
      ymajorgrids=true,
      grid style=dashed,
      legend pos=north west,
      width = 0.26\linewidth,
      height = 0.2\linewidth,
   ylabel={Steady-state $q_i$ [deg]}
    ]


    \addplot[color=blue, line width = 1pt] table {
0	0
0.0487324000000000	0.0571194725109317
0.0974648000000000	0.00716201049496108
0.150304600000000	0.591477696607778
0.206128400000000	1.34988812935604
0.253611900000000	1.78215830048176
0.305590900000000	2.27855511893232
0.351815300000000	2.50615689245564
0.402593400000000	3.20712615641229
0.473879300000000	3.45772108911153
0.521701700000000	3.59367461800630
0.576204500000000	3.55209649150558
0.621306600000000	3.46933947732990
0.679257300000000	3.54101966249684
0.753773900000000	3.67746724460794
0.801874300000000	3.88551853741958
0.875632900000000	4.20135059562660
0.920843800000000	4.41580150078830
0.973573500000000	4.49527782301339
1.02277970000000	4.51017780232997
1.09737210000000	4.49152057200293
1.14822380000000	4.48601619082811
1.19960290000000	4.49060108949309
1.26341830000000	4.57283718154546
1.34012220000000	4.65707087439488
1.39105240000000	4.74824147319439
1.44134890000000	4.78523484230877
1.51456050000000	4.83468907686631
1.58755350000000	4.84720482308325
1.66072420000000	4.85030904448449
1.71099380000000	4.85075879714618
1.78502430000000	4.85278309169274
1.84310370000000	4.87547161856021
1.89349410000000	4.85479415771481
1.96739920000000	4.92516244966948
2.01623760000000	4.95141383989991
2.06677470000000	4.97309186674926
2.13957490000000	4.96860339695397
2.18988980000000	4.95325211503427
2.26262350000000	4.96111589362212
2.31145540000000	4.94012373428327
2.38696080000000	4.93922642504921
2.45719020000000	4.94128379142853
2.53128950000000	4.97995321555521
2.58210240000000	4.99136191548834
2.63070030000000	4.98630477143591
2.68108080000000	4.97212414759924
2.72986300000000	4.96843204637111
2.80388460000000	4.96104703927220
2.85519870000000	4.96577522099735
2.92950870000000	4.96408994278639
2.98101730000000	4.96161027650487
3.02886160000000	4.95922184658255
3.10396570000000	4.95911031096950
3.15479550000000	4.95888728931020
3.22768620000000	4.95827001294337
3.27975280000000	4.95835702065788
3.33168450000000	4.95903067374719
3.40513040000000	4.95959358625474
3.45427510000000	4.96018801869675
3.52847130000000	4.96022301177717
3.57386590000000	4.95977494422954
3.62581250000000	4.95981114454899
3.68250200000000	4.95945504253893
3.73351150000000	4.96003835430557
3.78411790000000	4.95992358953660
3.83403450000000	4.95981327805675
3.90944840000000	4.96054451124211
3.95267110000000	4.96018278474058
4.02749720000000	4.96032445497862
4.07862070000000	4.96044616027851
4.13442500000000	4.95961845543031
4.20690800000000	4.95930488618330
4.25480820000000	4.95887750900481
4.32944060000000	4.95869941311035
4.37556050000000	4.95851361169176
4.44908110000000	4.95919188398835
4.49793060000000	4.95973904273607
4.57176340000000	4.95996654694631
4.62296640000000	4.96025776748601
4.69685730000000	4.95938088838603
4.75160990000000	4.95859672675151
4.80374250000000	4.95821929158789
4.87659520000000	4.95812947859219
4.92746240000000	4.95844241390669
4.98141700000000	4.95884243803438
5.05133400000000	4.95904753315599
5.10318640000000	4.95913459027288
5.17699030000000	4.95860774523223
5.22735680000000	4.95818580727223
5.28349340000000	4.95806952137343
5.34107430000000	4.95728174222656
5.39463210000000	4.95740652800301
5.46921630000000	4.95745306165965
5.52100140000000	4.95821237650412
5.59321130000000	4.95880222905750
5.63858930000000	4.95906602110536
5.71161120000000	4.95905109585246
5.76349570000000	4.95855688153853
5.83793340000000	4.95750170479235
5.89117050000000	4.95700846541048
5.94000130000000	4.95669206263883
6.01058080000000	4.95668683175103
6.06368430000000	4.95701250133930
6.13681840000000	4.95788374541570
6.18662990000000	4.95879997144536
6.25728180000000	4.95932078419700
6.30980020000000	4.95946164877826
6.36388870000000	4.95887809891190
6.40977320000000	4.95781884921095
6.46053680000000	4.95686916826692
6.50955030000000	4.95635034665031
6.58343870000000	4.95613259016851
6.63542220000000	4.95653991067689
6.68901350000000	4.95648778895858
6.74046670000000	4.95649367914884
6.79176900000000	4.95629156787002
6.84142170000000	4.95572599575297
6.91480810000000	4.95535549453097
6.96816780000000	4.95484246845093
7.00973040000000	4.95456946332923
7.05867330000000	4.95403415918045
7.11133440000000	4.95397484457256
7.18544300000000	4.95431335182668
7.23620790000000	4.95489237634417
7.29016600000000	4.95557044849587
7.34350370000000	4.95562074585908
7.41738410000000	4.95480810218885
7.46959410000000	4.95418127446411
7.54234150000000	4.95316829015513
7.59555280000000	4.95200708451099
7.64339490000000	4.95209933812858
7.71462220000000	4.95225394152961
7.78803370000000	4.95277589991744
7.84034940000000	4.95316054935106
7.89046980000000	4.95317543459079
7.94242980000000	4.95184447182015
8.01722950000000	4.95083423538828
8.08872840000000	4.94930556082976
8.14115190000000	4.94826582527981
8.19378590000000	4.94766965311757
8.23903440000000	4.94738999576557
8.28772880000000	4.94674764199208
8.33547410000000	4.94614883099921
8.41308920000000	4.94613222689715
8.46307290000000	4.94637824877810
8.53755090000000	4.94651066588923
8.58940150000000	4.94598381440933
8.64484310000000	4.94587767012979
8.71903830000000	4.94593600722323
8.76757640000000	4.94690762969947
8.81907150000000	4.94890977579475
8.86252430000000	4.94963837641798
8.91464090000000	4.95106323057676
8.98859260000000	4.95360430394290
9.03989700000000	4.95502526321103
9.09159680000000	4.95569693788836
9.14326220000000	4.95702684558497
9.21785050000000	4.95683544997571
9.26509340000000	4.95717523213427
9.33814230000000	4.95831082405184
9.39003030000000	4.95872517771563
9.43812240000000	4.95931397170452
9.49193690000000	4.95998777863754
9.54377220000000	4.96035780871557
9.61904710000000	4.96040588832609
9.66864780000000	4.95965339320678
9.71964000000000	4.95905545353369
9.79305270000000	4.95825944091683
9.84562940000000	4.95810582172400
9.91673040000000	4.95855046871308
9.96663690000000	4.95990840014358
10.0373609000000	4.95998914533498
10.0879254000000	4.96095255093484
10.1585437000000	4.96123500857680
10.2067080000000	4.96067668971093
10.2791101000000	4.95987550023394
10.3317213000000	4.95882229010837
10.3828650000000	4.95803741934602
10.4565585000000	4.95817915410135
10.5077424000000	4.95793400030457
10.5646813000000	4.95772603312093
10.6379389000000	4.95840530261400
10.6841437000000	4.95817271595357
10.7594528000000	4.95773050327384
10.8108819000000	4.95746068552987
10.8647451000000	4.95637134611249
10.9372444000000	4.95528293509485
10.9904679000000	4.95554709700483
11.0404265000000	4.95508227825593
11.1148392000000	4.95526942330849
11.1646992000000	4.95629494387902
11.2377483000000	4.95644751379434
11.2868871000000	4.95735026403333
11.3582398000000	4.95709027744847
11.4069668000000	4.95649347536131
11.4808795000000	4.95578386988353
11.5554047000000	4.95511939962895
11.6259108000000	4.95521284798879
11.6789337000000	4.95569092215540
    };

    \end{axis}

\end{tikzpicture}}
\hfill
\subfloat{
\begin{tikzpicture}
   \begin{axis}[
      label style={font=\footnotesize},
      tick label style={font=\footnotesize},
      xmin=2, xmax=8,
      ymin=4.85, ymax=5,
      ymajorgrids=true,
      grid style=dashed,
      legend pos=north west,
      width = 0.26\linewidth,
      height = 0.2\linewidth,
    ]


    \addplot[color=blue, line width = 1pt] table {
0	0
0.0702578000000000	0.00433362481546621
0.140515600000000	0.532892025400373
0.187789100000000	0.973648389621454
0.262076000000000	1.55512253420422
0.308745600000000	1.97253820495782
0.364577100000000	2.33680483965311
0.437808400000000	2.93325149254827
0.487995200000000	3.20000474247410
0.544334000000000	3.30811170314803
0.615377000000000	3.36511098419767
0.668287200000000	3.41431550877347
0.741237400000000	3.53711922921779
0.793452400000000	3.72248421611395
0.863594800000000	3.99804331456495
0.912167200000000	4.22259730731308
0.984637700000000	4.35945067693008
1.03772920000000	4.42996381119947
1.08767120000000	4.40947001005043
1.16004530000000	4.43800279673973
1.23705000000000	4.43881123670197
1.29269880000000	4.55854097072357
1.36504500000000	4.65052645380426
1.43813100000000	4.74086491387146
1.49111870000000	4.77268954060935
1.53716770000000	4.79280440797281
1.61281120000000	4.77582274631906
1.66273970000000	4.79780094243468
1.73878690000000	4.81972038758784
1.79215060000000	4.83740746683777
1.84464480000000	4.85616376789465
1.91861270000000	4.87038601477063
1.96832290000000	4.93162339206160
2.03927980000000	4.93775745092926
2.08907450000000	4.91558537609342
2.13715090000000	4.93089097993523
2.19446000000000	4.93438389402621
2.26731980000000	4.93540763764190
2.31399870000000	4.93474976579536
2.36822950000000	4.93542116582549
2.44395090000000	4.93661251814027
2.51629250000000	4.92762969747306
2.56945110000000	4.93136634921536
2.64091180000000	4.93652628521015
2.68596540000000	4.94043648278148
2.73347740000000	4.95500366380115
2.78084500000000	4.95646942413570
2.82617970000000	4.95826730553405
2.87554220000000	4.95797582733957
2.92072350000000	4.95650370915464
2.96835740000000	4.95510628267875
3.01544280000000	4.95447657321026
3.06263540000000	4.95246139025144
3.10900470000000	4.95156657438156
3.15940700000000	4.95251930532138
3.20359640000000	4.95304252503134
3.25357260000000	4.95288736573008
3.29855520000000	4.95310210024413
3.34626250000000	4.95213531626587
3.40122650000000	4.95140676502475
3.44657730000000	4.95057870370702
3.49138760000000	4.95025815610661
3.57070660000000	4.95032269369161
3.64428250000000	4.95074083465547
3.69181170000000	4.95096174432673
3.73643390000000	4.95088681885993
3.78808380000000	4.95127747574198
3.83399380000000	4.95089001323739
3.88249020000000	4.95058195718698
3.93437800000000	4.95016029446226
4.00928860000000	4.94932651677280
4.05950120000000	4.94896206366936
4.10919350000000	4.94900008165323
4.16018750000000	4.94857491374962
4.20908590000000	4.94850916639453
4.26475120000000	4.94911525967985
4.31328070000000	4.94914434853249
4.35703160000000	4.94990083573402
4.40988930000000	4.95093575896612
4.46627860000000	4.95136287307887
4.54180530000000	4.95079892000712
4.61515840000000	4.94911214513096
4.67064560000000	4.94731633523216
4.71308970000000	4.94671233607467
4.78620080000000	4.94662908999490
4.84098620000000	4.94710852322468
4.91114910000000	4.94669524022625
4.96432380000000	4.94649964307830
5.01258570000000	4.94626857756290
5.09341650000000	4.94505693581467
5.13396530000000	4.94495830518737
5.17799010000000	4.94450019085877
5.23025500000000	4.94486827348201
5.30420640000000	4.94463182280006
5.44382930000000	4.94470362741070
5.49061850000000	4.94447709951594
5.53903200000000	4.94524506817213
5.59404150000000	4.94528974647679
5.65108850000000	4.94578235473745
5.69758840000000	4.94603362159362
5.74587560000000	4.94662662072298
5.79680840000000	4.94645452537930
5.93445340000000	4.94780285275711
5.97571300000000	4.94812781215256
6.04791580000000	4.94839487318039
6.10008900000000	4.94824149559212
6.15062100000000	4.94836545362548
6.19296470000000	4.94827289784847
6.24081700000000	4.94857752153202
6.31437140000000	4.94877140294215
6.38634590000000	4.95013627954779
6.43551720000000	4.95040233663464
6.48984780000000	4.95094500854862
6.56251810000000	4.95100166494727
6.61531960000000	4.94962879082734
6.66567000000000	4.94934656212631
6.71182790000000	4.94867865760224
6.76399320000000	4.94873764633892
6.80845150000000	4.94869008401916
6.85541410000000	4.94890884258659
6.90914850000000	4.94918674883230
6.98126300000000	4.94933038916490
7.03251720000000	4.94927157526108
7.09396700000000	4.94922679023351
7.14209410000000	4.94929023286982
7.18983100000000	4.94914682520746
7.23722040000000	4.94849441886807
7.28343260000000	4.94814524668893
7.33214310000000	4.94759447375422
7.37240680000000	4.94744147485515
7.42941890000000	4.94728075253963
7.48433830000000	4.94740196149418
7.53294640000000	4.94774957574898
7.57727280000000	4.94813321116824
7.62455960000000	4.94834990601281
7.66831660000000	4.94837472418432
7.71711020000000	4.94858841201660
7.76220450000000	4.94768188405754
7.80934000000000	4.94739871981847
7.85646170000000	4.94693209340038
7.90180400000000	4.94662143824202
7.94990750000000	4.94597034355759
7.99197430000000	4.94541858245655
8.06551870000000	4.94491488543953
8.11589060000000	4.94442545468910
8.16591730000000	4.94416657561737
8.21303680000000	4.94422944690088
8.27356010000000	4.94381134476714
8.31711490000000	4.94396077769189
8.37461340000000	4.94389693582737
8.42495330000000	4.94337290025038
8.47407030000000	4.94339478806530
8.52391330000000	4.94287343671722
8.59665120000000	4.94256496626497
8.64879800000000	4.94229370264706
8.69799980000000	4.94242234501201
8.75080210000000	4.94289172778704
8.79785840000000	4.94268624551473
8.84201350000000	4.94249546589017
8.89596130000000	4.94293785301128
8.94100800000000	4.94340440875220
8.98643190000000	4.94301465330969
9.03577600000000	4.94322840883859
9.08357360000000	4.94282129542026
9.12978930000000	4.94253215782993
9.17441290000000	4.94247429288088
9.22127780000000	4.94202712253653
9.27003470000000	4.94220851245795
9.31675200000000	4.94257392616126
9.36169920000000	4.94281093981094
9.40813810000000	4.94284051041270
9.45630980000000	4.94259348168853
9.50201390000000	4.94303583151848
9.54780710000000	4.94263485834691
9.59484180000000	4.94237288473623
9.64328340000000	4.94193935798892
9.69035220000000	4.94164733354636
9.73619580000000	4.94091764330253
9.78097800000000	4.94125860186696
9.82941040000000	4.94123967723039
9.87721770000000	4.94145380680110
9.92084110000000	4.94146486931585
9.96939640000000	4.94192837717599
10.0157046000000	4.94179685461292
10.0627986000000	4.94213360074515
10.1071622000000	4.94190291678920
10.1575748000000	4.94147826064046
10.2134808000000	4.94097111390283
10.2823027000000	4.94053085478868
10.3302605000000	4.94027087839770
10.3752132000000	4.94043635276735
10.4256201000000	4.94097660814021
10.4724145000000	4.94161605094075
10.5283391000000	4.94237701659132
10.5762795000000	4.94283860473035
10.6286338000000	4.94244000051561
10.6863439000000	4.94174629419027
10.7601990000000	4.94068012209927
10.8061174000000	4.93981200523151
10.8810273000000	4.93995658085636
    };

    \end{axis}

\end{tikzpicture}}
\hfill
\subfloat{\begin{tikzpicture}
    \begin{axis}[
      label style={font=\footnotesize},
      tick label style={font=\footnotesize},
      xmin=2, xmax=8,
      ymin=4.85, ymax=5,
      ymajorgrids=true,
      grid style=dashed,
      legend pos=north west,
      width = 0.26\linewidth,
      height = 0.2\linewidth,
 y label style={at={(0.2,0.5)}},
    ]


    \addplot[color=blue, line width = 1pt] table {
0	0
0.0643425000000000	-0.000323599209366166
0.128685000000000	0.367086286677908
0.179990500000000	0.874936005498207
0.252648600000000	1.36734059283998
0.302044600000000	1.85066162914253
0.350144400000000	2.26262733336101
0.396763100000000	2.62883792788476
0.450177800000000	3.14502186161659
0.502453700000000	3.47208519473605
0.551189800000000	3.59697550018217
0.608195100000000	3.64955287437470
0.652493600000000	3.64422074212966
0.728552000000000	3.74515650788916
0.777774400000000	3.90985966690603
0.851622600000000	4.12745686950485
0.905830800000000	4.39035372750780
0.955523500000000	4.48482583902768
1.02737480000000	4.53154598135477
1.08015320000000	4.50545372905943
1.15260420000000	4.49814795994616
1.22881620000000	4.52485108227802
1.29924180000000	4.60236577406786
1.35085350000000	4.66165155754448
1.40091180000000	4.71945865607535
1.45428870000000	4.77270848109475
1.52652020000000	4.80903923292787
1.57532690000000	4.76783618320251
1.62653600000000	4.80176765944201
1.67408940000000	4.79731972508602
1.74732140000000	4.80367317053766
1.79952680000000	4.80362391288903
1.87166830000000	4.80589841907348
1.92292720000000	4.81778940757326
1.97431730000000	4.87240239157427
2.04630120000000	4.91096173793458
2.09779230000000	4.88107712616123
2.17052630000000	4.89137513234770
2.24498890000000	4.90694831453459
2.31860950000000	4.89947253568598
2.38932230000000	4.90347285273103
2.46233360000000	4.89997231602863
2.51546670000000	4.89986346141249
2.59004720000000	4.90098554486797
2.64201290000000	4.90365247969613
2.69392870000000	4.90440758181687
2.76697280000000	4.90396611822388
2.81353890000000	4.90153411767034
2.89141130000000	4.89922957813826
2.94285050000000	4.89690418238230
2.99279240000000	4.89708043628124
3.06632040000000	4.89891012587438
3.11758500000000	4.90447560431376
3.19272160000000	4.90840272527916
3.24291310000000	4.91278866242060
3.29436300000000	4.91412958696296
3.36604880000000	4.91383684174034
3.41098350000000	4.91239229470400
3.46188500000000	4.91152975803171
3.51265330000000	4.91180319850702
3.56054780000000	4.91201986695688
3.63277620000000	4.91292022375470
3.68521280000000	4.91507083301862
3.73704870000000	4.91626524535708
3.80829610000000	4.91728082596052
3.86010160000000	4.91764885798754
3.93342480000000	4.91731911311927
3.99064070000000	4.91588498670877
4.03295790000000	4.91562842523653
4.08445240000000	4.91666991165876
4.13528670000000	4.91664344512778
4.18031730000000	4.91795699369077
4.23250960000000	4.91887843913113
4.28595870000000	4.91875286900078
4.33425200000000	4.91836060972755
4.40759900000000	4.91829423536037
4.48068700000000	4.91647678004461
4.53229340000000	4.91649511565165
4.58475100000000	4.91629256964319
4.63096940000000	4.91629769619460
4.68064090000000	4.91689750948544
4.73215560000000	4.91749909203210
4.80514050000000	4.91745681596649
4.85505230000000	4.91790776012981
4.92631040000000	4.91794875773431
4.97549510000000	4.91760175837730
5.04962110000000	4.91736601941900
5.09581520000000	4.91710270285229
5.16866020000000	4.91754586275799
5.22395340000000	4.91769613627411
5.27664090000000	4.91752508451705
5.35232550000000	4.91740741120368
5.42946230000000	4.91670070464697
5.47994600000000	4.91702219911088
5.55592010000000	4.91760717506031
5.60624260000000	4.91748501607924
5.67921880000000	4.91777359650201
5.73095430000000	4.91823304600814
5.78207800000000	4.91857715183053
5.85512000000000	4.91933611350249
5.92532020000000	4.91953514590745
5.99801270000000	4.91904481455455
6.07309350000000	4.91800505922291
6.12482010000000	4.91824763030747
6.17628030000000	4.91787329906587
6.25184820000000	4.91800068287679
6.29781980000000	4.91796338468584
6.34405570000000	4.91812422877166
6.42389830000000	4.91832299693420
6.46959010000000	4.91866721015377
6.51471690000000	4.91882454688452
6.56369930000000	4.91901468505757
6.60987410000000	4.91827127416870
6.65659500000000	4.91881295185888
6.70683790000000	4.91811131678087
6.74919720000000	4.91809802076700
6.79594750000000	4.91794617544609
6.84371260000000	4.91854138952285
6.88870100000000	4.91811121264613
6.93238680000000	4.91805719289453
7.00794520000000	4.91790054806042
7.05609250000000	4.91776832766219
7.10247280000000	4.91752351441608
7.15088140000000	4.91789610024960
7.19550360000000	4.91790611826728
7.24637400000000	4.91805205900583
7.29426300000000	4.91783281301653
7.34042450000000	4.91814128599729
7.38668400000000	4.91813798553365
7.43277010000000	4.91813594310496
7.47850140000000	4.91820046607847
7.55670270000000	4.91836192982246
7.60610480000000	4.91805846724046
7.65153650000000	4.91836152915754
7.72340840000000	4.91827231857651
7.77281670000000	4.91836646511488
7.82997110000000	4.91856923074653
7.88450030000000	4.91841784268803
7.95760360000000	4.91823318671294
8.01047690000000	4.91884747790762
8.06586830000000	4.91889020880168
8.12036990000000	4.91903871206086
8.16678920000000	4.91924262356962
8.21360230000000	4.91916165429132
8.28674770000000	4.91869612904469
8.33698240000000	4.91887971359696
8.40945730000000	4.91873178884592
8.46729220000000	4.91916118102506
8.51267900000000	4.91941784539689
8.58536440000000	4.91911859385167
8.63650070000000	4.91924601089361
8.70908670000000	4.91868740291754
8.78112380000000	4.91864106856427
8.85341390000000	4.91926357095608
8.90297750000000	4.92011094800543
8.96614270000000	4.92094838055899
9.03744700000000	4.92129536523999
9.08835840000000	4.92191470268786
9.22819730000000	4.92238398734388
9.27726480000000	4.92261513670270
9.31996270000000	4.92267239667696
9.37377590000000	4.92246115655626
9.43374060000000	4.92243306984625
9.47259030000000	4.92263567964922
9.52630490000000	4.92280710842587
9.58112210000000	4.92400113740552
9.62543410000000	4.92469150375642
9.67754220000000	4.92560119168146
9.72372000000000	4.92582283689467
9.76403180000000	4.92633991598941
9.84415370000000	4.92609849062309
9.89161650000000	4.92565128154752
9.93858310000000	4.92587358121525
9.98914990000000	4.92599006385960
10.0465661000000	4.92594251026230
10.0908864000000	4.92644119473783
10.1349466000000	4.92701420839401
10.1833960000000	4.92740850252741
10.2304337000000	4.92783941451260
10.2834075000000	4.92820886030886
10.3432734000000	4.92848987719785
10.3857632000000	4.92851745768151
10.4305597000000	4.92846228190248
10.4768493000000	4.92818547852879
10.5242689000000	4.92839657424089
10.5717099000000	4.92839273116260
10.6223214000000	4.92766670284667
10.6846309000000	4.92751077559445
10.7448955000000	4.92722613952469
10.8004118000000	4.92728902979007
10.8474494000000	4.92722673653006
10.8954420000000	4.92715991484203
10.9468326000000	4.92708213683426
10.9976671000000	4.92725831238574
11.0437357000000	4.92703471813408
11.0897934000000	4.92639813249518
11.1385333000000	4.92682224243959
11.1808989000000	4.92661728113457
11.2546059000000	4.92725577374360
11.3054231000000	4.92692198039331

    };

    \end{axis}
\end{tikzpicture}}
\hfill
\subfloat{\begin{tikzpicture}
    \begin{axis}[
      label style={font=\footnotesize},
      tick label style={font=\footnotesize},
      xmin=2, xmax=8,
      ymin=4.85, ymax=5,
      ymajorgrids=true,
      grid style=dashed,
      legend pos=north west,
      width = 0.26\linewidth,
      height = 0.2\linewidth,
 y label style={at={(0.2,0.5)}},
    ]


    \addplot[color=blue, line width = 1pt] table {
0	0
0.0699137000000000	0.0695829824538597
0.139827400000000	0.00742801435080527
0.194102700000000	0.904543720503210
0.243916900000000	1.28429008089629
0.318287600000000	1.72595253344791
0.367103200000000	2.66045736236351
0.443274200000000	3.00496096643265
0.492855500000000	3.57568939372661
0.545328700000000	3.64838495881620
0.615871000000000	3.44818945139158
0.665425800000000	3.28080238871453
0.737518800000000	3.40807863633447
0.782812400000000	3.63410775437722
0.842091600000000	4.03203461717936
0.915809700000000	4.49375495767409
0.966468700000000	4.67044415737199
1.03915690000000	4.48157926235574
1.09039560000000	4.32252794306582
1.14207730000000	4.41164192651038
1.18624300000000	4.41036839188663
1.23742990000000	4.44648166756981
1.29333670000000	4.55501441413845
1.36637840000000	4.71280366382640
1.41663260000000	4.84957205200941
1.46470550000000	4.83855050905946
1.53808770000000	4.81432982748483
1.61079750000000	4.78490694330609
1.66405270000000	4.76115951075144
1.71628120000000	4.75789627529267
1.78681480000000	4.78730213177791
1.83741000000000	4.82668888397286
1.88954520000000	4.84646765411576
1.94469740000000	4.87401974872372
2.02077230000000	4.88919110199932
2.06874850000000	4.90042475167124
2.14151810000000	4.89690985353244
2.19046000000000	4.88282833434550
2.26387860000000	4.87251993192899
2.31193920000000	4.87943809503698
2.38382540000000	4.88114680672221
2.42911650000000	4.87975968092012
2.50460640000000	4.88164722132076
2.55386040000000	4.93650927441771
2.62781960000000	4.94220166347850
2.67542040000000	4.94610146658095
2.73018150000000	4.93834467604327
2.80759430000000	4.91950616844546
2.85714990000000	4.91559474526264
2.93292070000000	4.91009647216892
3.00625250000000	4.91433393503578
3.08050290000000	4.91276239070212
3.13096320000000	4.92845319054010
3.20954100000000	4.92966086318551
3.28319390000000	4.92427328812564
3.33546770000000	4.92595451851365
3.40944320000000	4.92556159234550
3.45846300000000	4.92152204782944
3.53127440000000	4.91949220720298
3.57886240000000	4.91805065860113
3.62521450000000	4.91874307884969
3.67860200000000	4.92261427949738
3.73051010000000	4.92533143585468
3.77912090000000	4.92853370712067
3.85302030000000	4.93100317161255
3.90209650000000	4.93179722328969
3.97713010000000	4.93027941859968
4.02614210000000	4.92826445033546
4.07917320000000	4.92714242216167
4.15102430000000	4.92632076003954
4.19735050000000	4.92688272596155
4.24341020000000	4.92774870133829
4.29729610000000	4.92850478297130
4.36837350000000	4.92944476201945
4.42018700000000	4.92972965010466
4.47241690000000	4.92858407290506
4.54443800000000	4.92808823434217
4.59277480000000	4.92698388145191
4.66432510000000	4.92584789558756
4.71592330000000	4.92667017965943
4.79045940000000	4.92689148172703
4.83558790000000	4.92821149833239
4.91186060000000	4.92950666274551
4.96080200000000	4.92999150643396
5.03008380000000	4.93027796370892
5.08021380000000	4.93117087249784
5.13469410000000	4.93116835362986
5.18084930000000	4.93148213224100
5.23046010000000	4.93229590395648
5.28126140000000	4.93242290598004
5.35332670000000	4.93307392343141
5.40415820000000	4.93434799075294
5.45042660000000	4.93541593795178
5.50249240000000	4.93602201978368
5.55464570000000	4.93702516216057
5.62790190000000	4.93687798995352
5.67754230000000	4.93667240884940
5.72970100000000	4.93631477335762
5.78081820000000	4.93654890793159
5.83111440000000	4.93706981482628
5.88087670000000	4.93697262681137
5.95273140000000	4.93752028220045
6.00030180000000	4.93755589504437
6.04896470000000	4.93801861073786
6.09961320000000	4.93816182758953
6.15256720000000	4.93828114179663
6.22651190000000	4.93800804875306
6.27682850000000	4.93711487349554
6.34885270000000	4.93725328450543
6.39690600000000	4.93718983312838
6.44872540000000	4.93772511923670
6.52333810000000	4.93769297275693
6.57160050000000	4.93834716852281
6.64542100000000	4.93835545416495
6.69113550000000	4.93823212763622
6.74693520000000	4.93833040268483
6.82225890000000	4.93803750058179
6.87245330000000	4.93801707564258
6.94514420000000	4.93795950716580
7.01563640000000	4.93717498333986
7.09011700000000	4.93674980629963
7.14254890000000	4.93656640374146
7.21533040000000	4.93760097387709
7.26448740000000	4.93902318367436
7.32049080000000	4.94027120203984
7.39445410000000	4.94193133595635
7.44117300000000	4.94266887559995
7.51739930000000	4.94266903810411
7.56822100000000	4.94210521290933
7.63955130000000	4.94211184845650
7.71922830000000	4.94115722880780
7.79069860000000	4.94078317247567
7.84410460000000	4.94163205022576
7.89107560000000	4.94134831809123
7.96379290000000	4.94241657447546
8.01673110000000	4.94257770150424
8.09096280000000	4.94263697170823
8.14213200000000	4.94303342213661
8.18757810000000	4.94261693740586
8.26129620000000	4.94251532775703
8.31260930000000	4.94228891087882
8.38675750000000	4.94320235225498
8.43807680000000	4.94471291943045
8.48338650000000	4.94578948302783
8.53913100000000	4.94722152976082
8.58833550000000	4.94750858452701
8.63805540000000	4.94822369883824
8.69103920000000	4.94821186433935
8.74064900000000	4.94882568696393
8.78978460000000	4.94846777498711
8.86457880000000	4.94847673041092
8.91526890000000	4.94827143984031
8.98770450000000	4.94792257396559
9.03914410000000	4.94829438135668
9.08520050000000	4.94862153665730
9.15839670000000	4.94904525983142
9.21065370000000	4.94999303498586
9.28312770000000	4.95097822389924
9.33757670000000	4.95249849717607
9.39246670000000	4.95341680712646
9.46612550000000	4.95337748929286
9.51787250000000	4.95283599454807
9.59048520000000	4.95212795317848
9.63931710000000	4.95149568440121
9.69090780000000	4.95120415477935
9.76187610000000	4.95192564787996
9.81277130000000	4.95164853438464
9.88740460000000	4.95153562874616
9.93451580000000	4.95148763231204
10.0084573000000	4.95107488319626
10.0592610000000	4.95111161666863
10.1117554000000	4.95128330088890
10.1858978000000	4.95133538794188
10.2367241000000	4.95125757399710
10.2923635000000	4.95048590593296
10.3373338000000	4.94983754197129
10.3872943000000	4.94975050344027
10.4652493000000	4.94967709562230
10.5216448000000	4.95000004248391
10.5694969000000	4.95051742344125
10.6202200000000	4.95056033794297
10.6663138000000	4.95059746208255
10.7123956000000	4.95111174336543
10.7615487000000	4.95141728531145
10.8071981000000	4.95145174934436
10.8557634000000	4.95134616325238
10.9017323000000	4.95139993119884
10.9468140000000	4.95107787157061
10.9931837000000	4.95111824729736
11.0414305000000	4.95136159858582
11.0861015000000	4.95162092167447
11.1329093000000	4.95223723087526
11.1846944000000	4.95203930312746
11.2273296000000	4.95224211604115
11.2721457000000	4.95232078815673
11.3150864000000	4.95262723703068
11.3618545000000	4.95237490064763
11.4100199000000	4.95228482709713
11.4560203000000	4.95286808195191
11.5007451000000	4.95263015832138

    };

    \end{axis}
\end{tikzpicture}}
\hfill

\vspace{-0.37 cm}

\subfloat[$K_{\tt d} = 10, ~ \gamma =1$]{\import{FigPlot/}{Inputs_5_deg_a}}
\hfill
\subfloat[$K_{\tt d} = 5, ~ \gamma =1$]{\import{FigPlot/}{Inputs_5_deg_b}}
\hfill
\subfloat[$K_{\tt d} = 0.1, ~ \gamma =1$]{\import{FigPlot/}{Inputs_5_deg_c}}
\hfill
\subfloat[$K_{\tt d} = 0.01, ~ \gamma =1$]{\import{FigPlot/}{Inputs_5_deg_d}}
\hfill

\caption{Position control performance for the desired configuration $\theta_\star = 5 ~{\rm deg}$ with different $K_{\tt d}$ }
\label{fig:ctrl_agl1_Kd}
\end{figure*}

\begin{figure*}[htp]
 \centering
\setkeys{Gin}{width=0.24\linewidth}

\subfloat{\begin{tikzpicture}
    \begin{axis}[
      label style={font=\footnotesize},
      tick label style={font=\footnotesize},
      xmin=0, xmax=3,
      ymin=-0.2, ymax=6,
      ymajorgrids=true,
      grid style=dashed,
      legend pos=north west,
      width = 0.26\linewidth,
      height = 0.20\linewidth,
 y label style={at={(-0.23,.35)}},
    ylabel={Position $q_i$ [deg]}
    ]


    \addplot[color=blue, line width = 1pt] table {
0	0
0.0833301000000000	0.0375134662747445
0.166660200000000	0.148665434508786
0.211318000000000	1.16586776092331
0.282248500000000	1.47483136223987
0.352285200000000	2.18891968301663
0.399046300000000	2.82390854196966
0.452256300000000	2.90362646237545
0.501360200000000	2.58511045399307
0.553929200000000	2.92893798058751
0.607818200000000	3.62734649972249
0.658328600000000	3.65634355320437
0.709592300000000	3.66135255714096
0.759063700000000	3.76604455790549
0.803066100000000	3.92825294216250
0.848140100000000	4.09418886567219
0.922882200000000	4.39449000137036
0.971851200000000	4.50885723258533
1.02888970000000	4.54252505033123
1.07405850000000	4.53965602599220
1.12745240000000	4.48495789868208
1.18123060000000	4.48813759576480
1.25626780000000	4.56773047849954
1.30611060000000	4.64303127276827
1.35802290000000	4.74071831089123
1.40417920000000	4.77281627621415
1.44921470000000	4.80298265939470
1.52575350000000	4.82757346471125
1.57646670000000	4.80250989202517
1.63315720000000	4.82903489102602
1.68973810000000	4.84185116347218
1.76045670000000	4.85622409619880
1.80843790000000	4.86729307387526
1.86244250000000	4.88293547569850
1.92161830000000	4.89729591439944
1.99283720000000	4.90682022582355
2.04022980000000	4.91898251838657
2.08861790000000	4.92057846508507
2.13662970000000	4.93638437701291
2.19325590000000	4.94188214274294
2.25078930000000	4.94298796385870
2.31671690000000	4.94163772798701
2.36272120000000	4.93955835284296
2.41604910000000	4.93688217648242
2.46438970000000	4.93409648806893
2.50983200000000	4.93346539268773
2.55698240000000	4.93287628002508
2.60539580000000	4.93369563729429
2.65863300000000	4.93439973520838
2.71362010000000	4.93541091776214
2.76654080000000	4.93589234206075
2.82151360000000	4.93574630800815
2.89958490000000	4.93655994538534
2.95295490000000	4.93736204488268
3.02505640000000	4.93960489079410
3.07114240000000	4.94215532363596
3.11908960000000	4.94522351596811
3.16481720000000	4.94760639039738
3.21357320000000	4.94915740160100
3.25961480000000	4.95160777886128
3.31416200000000	4.95466034035228
3.36645200000000	4.95567156709396
3.42144450000000	4.95685141147331
3.49896250000000	4.95739404759263
3.55090310000000	4.95677283737413
3.62507800000000	4.95577070594958
3.68313420000000	4.95425205005312
3.73618780000000	4.95407846813768
3.79067750000000	4.95416025801624
3.83649170000000	4.95400922478782
3.88957190000000	4.95385340593010
3.94221460000000	4.95340659047839
4.01628730000000	4.95310052491224
4.06607190000000	4.95264636685955
4.11373440000000	4.95257342955629
4.16849380000000	4.95236414272953
4.21830790000000	4.95275629327021
4.26332260000000	4.95312536289917
4.31098230000000	4.95317005097638
4.36503000000000	4.95219674438910
4.41217070000000	4.95194181103488
4.45795670000000	4.95115686995092
4.50796140000000	4.95029495659626
4.55449210000000	4.94957637060783
4.60284320000000	4.95013929202289
4.67593310000000	4.94991506121163
4.72646680000000	4.95025450661589
4.77331730000000	4.95037849327073
4.84801160000000	4.95095569718522
4.89773920000000	4.95117491691907
4.97065330000000	4.95089383671097
5.02030260000000	4.95094108449835
5.07064080000000	4.95040695283163
5.12790470000000	4.95021115430730
5.17428430000000	4.94991202864765
5.23125240000000	4.94997930589140
5.29053750000000	4.95014453600226
5.34645290000000	4.95027042636570
5.41998510000000	4.95039890402609
5.47011250000000	4.95032519279741
5.52837630000000	4.95024402068336
5.60156950000000	4.95033343772392
5.64903530000000	4.95043724926297
5.70461970000000	4.94990953166113
5.77806180000000	4.94974708316482
5.83189990000000	4.94942666801135
5.87855080000000	4.94915656858885
5.92372320000000	4.94987275799812
5.97151780000000	4.95011354622310
6.01852520000000	4.95010772642916
6.07384330000000	4.95002907412075
6.11856380000000	4.94966383724325
6.16388860000000	4.94935357227348
6.21481440000000	4.94977050334216
6.26116280000000	4.95035309888113
6.30761450000000	4.95050601500340
6.35596450000000	4.95089029234097
6.40085030000000	4.95120028952685
6.44867970000000	4.95082155765070
6.49582840000000	4.95079926140087
6.54112860000000	4.95006304798797
6.58887200000000	4.94999045070298
6.63603760000000	4.94980912611800
6.68237380000000	4.94953936553679
6.72978850000000	4.94926246496984
6.81995840000000	4.94961205176963
6.87254930000000	4.94998743611786
6.91909200000000	4.95066965813537
6.96541440000000	4.95108089087565
7.01148190000000	4.95147150645240
7.06104690000000	4.95172344271024
7.10730770000000	4.95185424790206
7.15341420000000	4.95152088525826
7.20008600000000	4.95139263283074
7.24719490000000	4.95127346686815
7.29637380000000	4.95070997200606
7.35157210000000	4.95100845470878
7.39605240000000	4.95023380800688
7.43897680000000	4.95074497923299
7.48887700000000	4.95062627985163
7.53488450000000	4.95076115121274
7.57927940000000	4.95023865691843
7.62983070000000	4.95047744043366
7.70555680000000	4.95055106704825
7.75305120000000	4.95027002012880
7.82900700000000	4.95101062096469
7.88718080000000	4.95067764183626
7.93831550000000	4.95025019696112
7.99140750000000	4.95036606885995
8.03747250000000	4.94983746434728
8.09177080000000	4.94942107291745
8.14376920000000	4.94933990837231
8.21581540000000	4.94900921002962
8.26017970000000	4.94904765314969
8.30660620000000	4.94928225526117
8.36066980000000	4.95099101016845
8.41408620000000	4.95238111846985
8.46012840000000	4.95355464562146
8.53376750000000	4.95459615385727
8.60897580000000	4.95600689869342
8.66196170000000	4.95507234267829
8.71365120000000	4.95514312997374
8.78402480000000	4.95454093337585
8.83751320000000	4.95315389097402
8.89510740000000	4.95330096768582
8.94903600000000	4.95325725815860
8.99156420000000	4.95284638859169
9.03707810000000	4.95278271927174
9.07999170000000	4.95295709491149
9.13320450000000	4.95263045509900
9.20858240000000	4.95241933196577
9.26260230000000	4.95267668254018
9.31175670000000	4.95281369480104
9.38366510000000	4.95270090393955
9.43521330000000	4.95259556753396
9.51003450000000	4.95295866562615
9.58196170000000	4.95301264966598
9.62971470000000	4.95316990510082
9.67616200000000	4.95332885522458
9.72924950000000	4.95267209086648
9.78214960000000	4.95242256639996
9.85448190000000	4.95209476719260
9.91177250000000	4.95224575137385
9.96919160000000	4.95268531522798
10.0201195000000	4.95251433605952
10.0533409000000	4.95239317907768
10.1112530000000	4.95244994785783
10.1713297000000	4.95200571076056
10.2204696000000	4.95172330710084
10.3911564000000	4.95200186557942
10.4353331000000	4.95205720093600
10.4842983000000	4.95223400079046
10.5482505000000	4.95205040054262
10.5941404000000	4.95198654073277
10.6351714000000	4.95199197593915
10.6833146000000	4.95170545895545
10.7327105000000	4.95212068872261
10.7766383000000	4.95238638646065
10.8235484000000	4.95261095346203
10.8666660000000	4.95288138992196

    };

    \end{axis}
\end{tikzpicture}}
\hfill
\subfloat{\begin{tikzpicture}
    \begin{axis}[ 
      label style={font=\footnotesize},
      tick label style={font=\footnotesize},
      xmin=0, xmax=3,
       ymin=-0.2, ymax=6,
      ymajorgrids=true,
      grid style=dashed,
      legend pos=north west,
      width = 0.26\linewidth,
      height = 0.2\linewidth,
    ]


    \addplot[color=blue, line width = 1pt] table {
0	0
0.0351693000000000	-7.59492801925909e-05
0.0703386000000000	0.00423897012562919
0.148496900000000	0.496098364074429
0.201737700000000	1.08913772358282
0.247780300000000	1.32905317332077
0.294785000000000	1.83246747479590
0.342281700000000	2.46988296633699
0.409490200000000	2.94680556770660
0.458248600000000	3.17712125093851
0.506489200000000	3.36083869191664
0.557434400000000	3.52903792992480
0.606649000000000	3.53650668757157
0.681280700000000	3.58514658769639
0.731638100000000	3.70835819167960
0.801053000000000	3.94078478733944
0.854182600000000	4.20675418806602
0.907190300000000	4.46424342119921
0.980507700000000	4.55339304090235
1.02732550000000	4.50863524722119
1.09169730000000	4.45135095957918
1.14825280000000	4.48316487576592
1.22400040000000	4.52496471019774
1.27507570000000	4.61633544285397
1.34881600000000	4.71393243232496
1.39758300000000	4.78489449318803
1.44794900000000	4.80068520403251
1.52158790000000	4.81510697859291
1.57097370000000	4.80595415810547
1.64676630000000	4.79338491873828
1.69823030000000	4.81301413072067
1.77289050000000	4.82460087907702
1.82633950000000	4.82040404823638
1.88192120000000	4.88304196488685
1.93575550000000	4.91666746174004
2.00822640000000	4.92397135290786
2.05801240000000	4.89821918800266
2.11165880000000	4.90081445703976
2.18381150000000	4.89951012554891
2.23172010000000	4.89275275213964
2.27975380000000	4.90577727399335
2.32959890000000	4.90313270639256
2.38223710000000	4.90429439233598
2.45523820000000	4.90856380950358
2.50471720000000	4.90456690470637
2.57734070000000	4.92264242365155
2.62834530000000	4.92704326647100
2.70262810000000	4.93123036675418
2.75031250000000	4.93221945996833
2.79906550000000	4.93194331431686
2.85712100000000	4.93163511052194
2.92952880000000	4.93070623010858
2.99998960000000	4.92998345671931
3.05092360000000	4.93050326818544
3.09950470000000	4.93076328955472
3.17321990000000	4.93144401195913
3.22975650000000	4.93203562494262
3.27754030000000	4.93216349347696
3.35231980000000	4.93201773259443
3.40167140000000	4.93202802032837
3.47674440000000	4.93254953123532
3.52853590000000	4.93256688807268
3.57844300000000	4.93289199947753
3.64998970000000	4.93391113786617
3.70076380000000	4.93459522250513
3.77539740000000	4.93565864859111
3.82764150000000	4.93655831291540
3.90210060000000	4.93723630950505
3.95397540000000	4.93805722699847
4.00262030000000	4.93872080602835
4.07635370000000	4.94021059011391
4.12253370000000	4.94028982724557
4.19719100000000	4.94060478058103
4.24970940000000	4.94099440661588
4.29975670000000	4.94097728234013
4.37335240000000	4.94215702395173
4.44758310000000	4.94317336499221
4.50437690000000	4.94500624582901
4.57821880000000	4.94655962603626
4.65250370000000	4.94844819174414
4.69823190000000	4.94896705876375
4.77374880000000	4.94904396050846
4.84703600000000	4.94927646673777
4.92142350000000	4.94986532470152
4.97689370000000	4.95050956392779
5.04802270000000	4.95100932595803
5.12169520000000	4.95016531824356
5.17404890000000	4.94994115687038
5.24906120000000	4.95029723234705
5.30285310000000	4.94997037061462
5.37191050000000	4.95056401695517
5.41742470000000	4.95045322609126
5.49481290000000	4.94948216300043
5.54683880000000	4.94911260434155
5.59895360000000	4.94840137258990
5.64972910000000	4.94853301861681
5.69788680000000	4.94856613391699
5.74592350000000	4.94899760730370
5.81900420000000	4.94903652959982
5.86975430000000	4.94903909712297
5.91796750000000	4.94894092804396
5.99513290000000	4.95068880459189
6.04629930000000	4.95323471938784
6.11944820000000	4.95456286000066
6.17254580000000	4.95558789769815
6.21796840000000	4.95532471431470
6.29270900000000	4.95489905565191
6.33983420000000	4.95457603256716
6.41559140000000	4.95442287304999
6.46380240000000	4.95432016065409
6.53576670000000	4.95389502022137
6.59170980000001	4.95395139428953
6.63939380000001	4.95363952577785
6.71584840000001	4.95353197595850
6.76636760000001	4.95352588998153
6.83706250000001	4.95360175056582
6.88657700000001	4.95414619940958
6.96071860000001	4.95377591411018
7.01241260000001	4.95369099993986
7.06527690000001	4.95335498402778
7.13721860000001	4.95297691893017
7.18156550000001	4.95331304528310
7.25645890000001	4.95411168771753
7.30501440000001	4.95355292147784
7.37946840000001	4.95333534526578
7.45451670000001	4.95372530609386
7.52935260000001	4.95337180262386
7.57802370000001	4.95408553436803
7.65496680000001	4.95420106445656
7.73130480000001	4.95329204585079
7.77725650000001	4.95357462115954
7.82140090000001	4.95374096135972
7.87272670000001	4.95343640192454
7.92498390000001	4.95322890489828
7.99749630000001	4.95310316041729
8.06792770000001	4.95301141132480
8.11759760000001	4.95292314065144
8.17554920000001	4.95369261389580
8.22246360000001	4.95323374887358
8.29796780000001	4.95337567208847
8.35615830000001	4.95405700248239
8.40288140000001	4.95367687725957
8.45295430000001	4.95432278161103
8.50175240000001	4.95402155603379
8.57358570000001	4.95355002049223
8.62492530000001	4.95265409526737
8.69602020000001	4.95196405091400
8.74511670000001	4.95180915210497
8.81738310000001	4.95259388176409
8.86704660000001	4.95289381666596
8.94394290000001	4.95267919296428
8.98776170000001	4.95300451346213
9.06086210000001	4.95251005465582
9.11479590000000	4.95225305221795
9.16692200000001	4.95216127745335
9.22248820000000	4.95246188192308
9.29255310000001	4.95198977875523
9.34484670000001	4.95202558520867
9.41636100000001	4.95209848348112
9.46676060000001	4.95222333584569
9.53819570000001	4.95275413714427
9.58819150000001	4.95255447812651
9.65934480000001	4.95175747381183
9.70865200000001	4.95211367712532
9.75298840000001	4.95172602529667
9.80339250000001	4.95230857551752
9.85313390000001	4.95238335555308
9.92673050000001	4.95244036977057
9.97590090000001	4.95151777977853
10.0276919000000	4.95173960044250
10.0746680000000	4.95140918926624
10.1257859000000	4.95168799172317
10.1980052000000	4.95280804886373
10.2490589000000	4.95317981004375
10.3005217000000	4.95289415751817
10.3563421000000	4.95287863912797
10.4286921000000	4.95281846860774
10.4797823000000	4.95292607672222
10.5271726000000	4.95229951828420
10.5775426000000	4.95258437745245
10.6275654000000	4.95221824805224
10.6833161000000	4.95231527718211
10.7588773000000	4.95222591218849
10.8076871000000	4.95189805822599
10.8772527000000	4.95219418096975
10.9274159000000	4.95211574726889
11.0026443000000	4.95234458476059
11.0537744000000	4.95289770379712
11.1300654000000	4.95276827943929
11.1804039000000	4.95271265071069
11.2312700000000	4.95248875703017
11.3032016000000	4.95199316027926
11.3517778000000	4.95146145652416
11.4264786000000	4.95123837732846
11.4802431000000	4.95126739980290
11.5331921000000	4.95166958874925
11.5901475000000	4.95199372714420
11.6646416000000	4.95221444823706
11.7152861000000	4.95229804804064
11.7911238000000	4.95231572051005

    };

    \end{axis}
\end{tikzpicture}}
\hfill
\subfloat{\begin{tikzpicture}
    \begin{axis}[
      label style={font=\footnotesize},
      tick label style={font=\footnotesize},
      xmin=0, xmax=3,
      ymin=-0.2, ymax=6,
      ymajorgrids=true,
      grid style=dashed,
      legend pos=north west,
      width = 0.26\linewidth,
      height = 0.2\linewidth,
    ]


    \addplot[color=blue, line width = 1pt] table {
0	0
0.0487482000000000	0.0371809696293590
0.0974964000000000	0.00411316746316198
0.138490000000000	0.00797415678532390
0.213782700000000	0.997067976751473
0.284278800000000	1.68336206746104
0.334459900000000	2.10265922388865
0.407996000000000	2.99852122259204
0.458020800000000	3.05571860361015
0.531451000000000	2.93085699002624
0.582233000000000	3.46643650443403
0.635163200000000	3.42922626123641
0.683605600000000	3.40786499972211
0.733938400000000	3.57882286214449
0.808890800000000	3.88893554254278
0.853224500000000	4.20203477682975
0.924172700000000	4.33362356188808
0.973229600000000	4.47883951758430
1.04499130000000	4.43381782244975
1.09412660000000	4.44390931936916
1.16885160000000	4.41745315978192
1.24278080000000	4.50046986372525
1.32041540000000	4.60527572109939
1.37030290000000	4.71587775992228
1.42479590000000	4.76151506037634
1.47081030000000	4.79344525250199
1.52673890000000	4.79010204338515
1.57676330000000	4.79589954682485
1.62724360000000	4.78284021360463
1.69792160000000	4.79203627510068
1.76928790000000	4.82003655207891
1.82979160000000	4.80351166457213
1.87813500000000	4.81903767381832
1.92948860000000	4.87479214286440
1.98485660000000	4.90441106522328
2.05683790000000	4.91457502429706
2.10722040000000	4.90752738093629
2.17975670000000	4.89094322323165
2.22945410000000	4.89390216444596
2.30054200000000	4.89070276652894
2.37195240000000	4.90557980884717
2.44510370000000	4.88968822019804
2.49463330000000	4.89095736769983
2.54204380000000	4.90726749813917
2.59537220000000	4.91354774471631
2.67301260000000	4.91220699739737
2.74664260000000	4.90963605555578
2.79651280000000	4.91063405252287
2.87324300000000	4.90748634253992
2.92387430000000	4.90275169641615
2.97975740000000	4.90028841401679
3.05179200000000	4.89922461823965
3.10253940000000	4.89957624976537
3.17560650000000	4.90180083894985
3.22750890000000	4.90683793460813
3.27940900000000	4.90934523944179
3.35557130000000	4.91173921320030
3.41035090000000	4.91139924104103
3.46210260000000	4.91019347441300
3.53525050000000	4.90926951374444
3.58407130000000	4.90933886772092
3.65722070000000	4.91158003008293
3.70737310000000	4.91389365027575
3.76098710000000	4.91646842803943
3.83436950000000	4.91947137825738
3.88691320000000	4.92097169486204
3.95766180000000	4.92109181444810
4.00939950000000	4.91972967754318
4.06022410000000	4.91844139162352
4.13380330000000	4.91600106236271
4.18530650000000	4.91480469066022
4.23028330000000	4.91541651367886
4.30424990000000	4.91821401912024
4.37591330000000	4.92098097039967
4.42770990000000	4.92465756513121
4.48048200000000	4.92618177533556
4.52651010000000	4.92648386861516
4.57659800000000	4.92491868429144
4.62805580000000	4.92299709885898
4.70023480000000	4.92170150728928
4.77867510000000	4.92044967535215
4.84827160000000	4.92129600472420
4.90154380000000	4.92369020100550
4.94651170000000	4.92467225209952
4.99775950000000	4.92536996878881
5.04591190000000	4.92562623625529
5.12076930000000	4.92561890620409
5.17113470000000	4.92424395307668
5.24317480000000	4.92333290480210
5.29195280000000	4.92219485849944
5.36305920000000	4.92197827038820
5.41787570000000	4.92297292093405
5.47277950000000	4.92335937310637
5.53081010000000	4.92432331003751
5.58554190000000	4.92470220397977
5.64014100000000	4.92400076252478
5.71232210000000	4.92330680862389
5.78786040000000	4.92202200137331
5.83772220000000	4.92137946846536
5.88781990000000	4.92152316877423
5.96017680000000	4.92196995166558
6.01309240000000	4.92302874017193
6.08582000000000	4.92266837992307
6.13735290000000	4.92326378066978
6.19202550000000	4.92341314582010
6.26556470000000	4.92265534635156
6.32301330000000	4.92191017761974
6.37312840000000	4.92157953663228
6.44801980000000	4.92130952265856
6.49939580000000	4.92180739913101
6.54839500000000	4.92247898668020
6.62165970000000	4.92271622605994
6.67514920000000	4.92375061668660
6.72553850000000	4.92386632642619
6.79780440000000	4.92363096581591
6.85430880000000	4.92319787891160
6.92641730000000	4.92239028317170
6.97241650000000	4.92157893957948
7.04374850000000	4.92159217242918
7.09540550000000	4.92139134062778
7.14638830000000	4.92167132803726
7.21848990000000	4.92159944860406
7.27117530000000	4.92131624873393
7.34240760000000	4.92122871961644
7.39138590000000	4.92035466273879
7.46360980000000	4.92009526143331
7.51977050000000	4.92013161369276
7.56387530000000	4.91999432696945
7.63816660000000	4.92041292403010
7.68848970000000	4.91948033959594
7.74662500000000	4.91907525883275
7.82006730000000	4.91901464845267
7.87107610000000	4.91843244686746
7.94422270000000	4.91813595529661
7.99409940000000	4.91849248439224
8.06771130000000	4.91837062980238
8.11480590000000	4.91789134675566
8.16425000000000	4.91843669359373
8.23691800000000	4.91827326405190
8.28651080000000	4.91879268166993
8.34288010000000	4.91918269178271
8.41449800000000	4.91938257602411
8.49070440000000	4.91925918529872
8.54289500000000	4.91877843256908
8.59453580000000	4.91846012663052
8.66798550000000	4.91900100620502
8.71511970000000	4.91906059336449
8.76378030000000	4.91910071110503
8.83594520000000	4.91913624278049
8.88684500000000	4.91888705189081
8.95977010000000	4.91901705107566
9.01165020000000	4.91886457666293
9.05697230000000	4.91891223196122
9.10893500000000	4.91888259161989
9.15547630000000	4.91847804743545
9.22675990000000	4.91958991100066
9.27887360000000	4.91908661631883
9.35236770000000	4.91871180504786
9.40137280000000	4.91858187456525
9.47305790000000	4.91766241410386
9.52669080000000	4.91698160187636
9.57519330000000	4.91626136714890
9.63228530000000	4.91630852742364
9.70575090000000	4.91614540353183
9.75827440000000	4.91639867800887
9.83233890000000	4.91635708679043
9.88513170000000	4.91592271161337
9.93398260000000	4.91565026434086
9.98161320000000	4.91550577188865
10.0341627000000	4.91475436430133
10.0846703000000	4.91381651680003
10.1573654000000	4.91348504734400
10.2079093000000	4.91302991600665
10.2595960000000	4.91316137402768
10.3313935000000	4.91322333135148
10.4068454000000	4.91310829712651
10.4617324000000	4.91276666700346
10.5236031000000	4.91230119540137
10.5719989000000	4.91162116383227
10.6187639000000	4.91120002196568
10.6935565000000	4.91090838680005
10.7453525000000	4.91110094960246
10.7928314000000	4.91102609493393
10.8433441000000	4.91098727222264
10.8957672000000	4.91171510656838
10.9439643000000	4.91205979180914
10.9936400000000	4.91448965834058
11.0463997000000	4.91950152099275
11.1002506000000	4.92497030571557
11.1539174000000	4.92948872314036
11.1993113000000	4.93415176175890
11.2487345000000	4.93701506137423
11.3010574000000	4.93750663098283
11.3532291000000	4.93780961051867
11.4091438000000	4.93915229950872
11.4630337000000	4.94053937598934
11.5131554000000	4.94205328830727
11.5655087000000	4.94346599191371
11.6189510000000	4.94446765807208
11.6631047000000	4.94536159934400

    };

    \end{axis}
\end{tikzpicture}}
\hfill
\subfloat{\begin{tikzpicture}
    \begin{axis}[
      label style={font=\footnotesize},
      tick label style={font=\footnotesize},
      xmin=0, xmax=3,
      ymin=-0.2, ymax=6,
      ymajorgrids=true,
      grid style=dashed,
      legend pos=north west,
      width = 0.26\linewidth,
      height = 0.2\linewidth,
    ]


    \addplot[color=blue, line width = 1pt] table {
0	0
0.0705605000000000	0.0622426788991338
0.141121000000000	0.0121046357787623
0.188711500000000	0.870380970305976
0.258844700000000	1.46446900852654
0.310719800000000	1.85601855727137
0.358813800000000	2.23760492625311
0.432870700000000	2.86269019980580
0.486241300000000	3.20869884424361
0.560112600000000	3.34028653278566
0.608670500000000	3.41884900816758
0.686909200000000	3.42870554089716
0.757143500000000	3.63688112323551
0.829473600000000	3.89374679550751
0.885210100000000	4.10595348276876
0.929167700000000	4.23703235083169
1.00369430000000	4.35778108678732
1.05305600000000	4.45479286647336
1.12539510000000	4.47125871379023
1.17520410000000	4.45434683253162
1.24742540000000	4.49192883099746
1.29806280000000	4.65424128941363
1.35084540000000	4.69065365408870
1.42306060000000	4.75598114497649
1.47314870000000	4.79618589637035
1.53159770000000	4.81397589581136
1.60898140000000	4.79562779077052
1.65383540000000	4.81087914846787
1.72627660000000	4.84727435650897
1.77728170000000	4.86994732792892
1.85085310000000	4.84448212529106
1.90111280000000	4.86102329528288
1.95314600000000	4.87502616363269
2.02504370000000	4.92482870066864
2.07580820000000	4.94889152308267
2.14775530000000	4.95059604182233
2.22354890000000	4.92844592087919
2.27895960000000	4.94538530978151
2.35237390000000	4.92504180218490
2.40275330000000	4.93557302832850
2.45554700000000	4.93218997279170
2.52950280000000	4.93019074177515
2.58335600000000	4.93146461740187
2.62886440000000	4.93336161185102
2.67352690000000	4.93632886430284
2.72303120000000	4.93911778139961
2.76765620000000	4.94057270828604
2.81372940000000	4.94055502282907
2.86343150000000	4.94053394894553
2.90969950000000	4.93826216813791
2.95568160000000	4.93562964043100
3.00357710000000	4.93265038669031
3.05113870000000	4.93061538872072
3.10131400000000	4.92947542459508
3.15241880000000	4.92969395108996
3.19668980000000	4.93076136534546
3.24333310000000	4.93255460196677
3.29264200000000	4.93409830028939
3.34070160000000	4.93489030983787
3.38542100000000	4.93495970092441
3.43412570000000	4.93438821492431
3.47891860000000	4.93359857770515
3.52477840000000	4.93230246712053
3.57354620000000	4.93097089612412
3.62068810000000	4.93083088621229
3.67881990000000	4.93173222971071
3.75666400000000	4.93373386123352
3.80348610000000	4.93511123695513
3.85274600000000	4.93652726299298
3.89582470000000	4.93695163579650
3.94367400000000	4.93646582523102
3.99160100000000	4.93526923353503
4.03808490000000	4.93515433713468
4.10450510000000	4.93386788693850
4.15324540000000	4.93355288030032
4.20094150000000	4.93402578114680
4.24677040000000	4.93460116316337
4.29533600000000	4.93476907388545
4.36995310000000	4.93563898299194
4.41432340000000	4.93552473762427
4.46545710000000	4.93498188903360
4.52012140000000	4.93475227109866
4.56883230000000	4.93409519802965
4.61665320000000	4.93383162716745
4.66002990000000	4.93400456515252
4.71101060000000	4.93388723021649
4.75931970000000	4.93394351276248
4.81200530000000	4.93476445577854
4.86748720000000	4.93461491778635
4.91500030000000	4.93456208324421
4.96508950000000	4.93482289468206
5.02114030000000	4.93423158974481
5.09387890000000	4.93329447262448
5.16673300000000	4.93332742030823
5.24056180000000	4.93389261312143
5.29102600000000	4.93522790102754
5.33786410000000	4.93552066408126
5.41024300000000	4.93555721338882
5.46253340000000	4.93539444666981
5.51701050000000	4.93467079015209
5.59443050000000	4.93431745019632
5.64772310000000	4.93372213388893
5.71707670000000	4.93369572804339
5.79313590000000	4.93429221278325
5.93561610000000	4.93892990045842
5.98076840000000	4.93959527832240
6.03277930000000	4.94020885896571
6.07913600000000	4.94019675481055
6.13080370000000	4.94057064725143
6.17615940000000	4.94011638903339
6.22243500000000	4.93963691673902
6.27047480000000	4.93976286936157
6.34393780000000	4.93971680282611
6.39088780000000	4.94002124655063
6.43782980000000	4.94019507141497
6.48642430000000	4.94048696326509
6.53427880000000	4.94056785669988
6.57946870000000	4.94081922335165
6.62856790000000	4.94068297672042
6.67507400000000	4.94142081059126
6.72479990000000	4.94184260198681
6.78627540000000	4.94208114954576
6.83104520000000	4.94175064739085
6.87535740000000	4.94159061697761
6.92201500000000	4.94135784573868
6.97447420000000	4.94107053700475
7.02028400000000	4.94130661740880
7.06746490000000	4.94193607144755
7.11613720000000	4.94180676742022
7.17409540000000	4.94191806673670
7.21490210000000	4.94232579159927
7.26765580000000	4.94196176079699
7.30866970000000	4.94166777530187
7.35893000000000	4.94131587118812
7.40433870000000	4.94135218520175
7.45143870000000	4.94159497969696
7.49225660000000	4.94139641090323
7.54049930000000	4.94125835202558
7.58930310000000	4.94175445566675
7.63815120000000	4.94191875871355
7.68482410000000	4.94215528866806
7.73526950000000	4.94192241002649
7.77721120000000	4.94242777095177
7.81892170000000	4.94245777454789
7.86803120000000	4.94216599805908
7.91852010000000	4.94174974944333
7.96834050000000	4.94152535346000
8.01274680000000	4.94149489511538
8.06270340000000	4.94140019151526
8.10501760000000	4.94156352672331
8.15062070000000	4.94129240091465
8.20077210000000	4.94098442852785
8.24897150000000	4.94077776998368
8.29346200000000	4.94028200284801
8.34232140000000	4.94001896242999
8.41245180000000	4.93988801827258
8.46287310000000	4.94088877654622
8.53593800000000	4.94135479643357
8.58656890000000	4.94173357646248
8.64346490000000	4.94177566880301
8.69053640000000	4.94116052471860
8.73896140000000	4.94097545289796
8.78658200000000	4.93995195057886
8.83214410000000	4.94027637016034
8.87739020000000	4.94002482283886
8.92565640000000	4.94033349210366
8.97316800000000	4.94047875354881
9.02045950000000	4.94059732150515
9.06602040000000	4.94044909860443
9.11537580000000	4.94044058287332
9.15853320000000	4.94029120478991
9.20275380000000	4.94008193901442
9.25258190000000	4.94012538693105
9.29754920000000	4.94079521282241
9.34408080000000	4.94086139025706
9.39278730000000	4.94081251200683
9.43556990000000	4.94107306268991
9.48425780000000	4.94097524029678
9.53013040000000	4.94058977087038
9.57791320000000	4.94081638088327
9.65489160000000	4.94074739982900
9.70084610000000	4.94073048579213
9.75187310000000	4.94040312977859
9.79583310000000	4.94035430369170
9.84346370000000	4.94042917808441
9.90734860000000	4.94093697788238
9.95188700000000	4.94057422720584
10.0008081000000	4.94128551068539
10.0443153000000	4.94144733018958
10.0913515000000	4.94163593258518
10.1426129000000	4.94150541090628
10.2140365000000	4.94211127410959
10.2635272000000	4.94141501600140
10.3095437000000	4.94116213302525
10.3583561000000	4.94106395118006
10.4027392000000	4.94098228331898
10.4500986000000	4.94071363196648
10.4977717000000	4.94106594583323
10.5438957000000	4.94101443882732
10.5902857000000	4.94112995121491

    };

    \end{axis}
\end{tikzpicture}}

\vspace{-0.5 cm}

\subfloat{\begin{tikzpicture}
    \begin{axis}[
      label style={font=\footnotesize},
      tick label style={font=\footnotesize},
     xmin=2, xmax=8,
      ymin=4.85, ymax=5,
      ymajorgrids=true,
      grid style=dashed,
      legend pos=north west,
      width = 0.26\linewidth,
      height = 0.2\linewidth,
    ylabel={Steady-state $q_i$ [deg]}
    ]


    \addplot[color=blue, line width = 1pt] table {
0	0
0.0833301000000000	0.0375134662747445
0.166660200000000	0.148665434508786
0.211318000000000	1.16586776092331
0.282248500000000	1.47483136223987
0.352285200000000	2.18891968301663
0.399046300000000	2.82390854196966
0.452256300000000	2.90362646237545
0.501360200000000	2.58511045399307
0.553929200000000	2.92893798058751
0.607818200000000	3.62734649972249
0.658328600000000	3.65634355320437
0.709592300000000	3.66135255714096
0.759063700000000	3.76604455790549
0.803066100000000	3.92825294216250
0.848140100000000	4.09418886567219
0.922882200000000	4.39449000137036
0.971851200000000	4.50885723258533
1.02888970000000	4.54252505033123
1.07405850000000	4.53965602599220
1.12745240000000	4.48495789868208
1.18123060000000	4.48813759576480
1.25626780000000	4.56773047849954
1.30611060000000	4.64303127276827
1.35802290000000	4.74071831089123
1.40417920000000	4.77281627621415
1.44921470000000	4.80298265939470
1.52575350000000	4.82757346471125
1.57646670000000	4.80250989202517
1.63315720000000	4.82903489102602
1.68973810000000	4.84185116347218
1.76045670000000	4.85622409619880
1.80843790000000	4.86729307387526
1.86244250000000	4.88293547569850
1.92161830000000	4.89729591439944
1.99283720000000	4.90682022582355
2.04022980000000	4.91898251838657
2.08861790000000	4.92057846508507
2.13662970000000	4.93638437701291
2.19325590000000	4.94188214274294
2.25078930000000	4.94298796385870
2.31671690000000	4.94163772798701
2.36272120000000	4.93955835284296
2.41604910000000	4.93688217648242
2.46438970000000	4.93409648806893
2.50983200000000	4.93346539268773
2.55698240000000	4.93287628002508
2.60539580000000	4.93369563729429
2.65863300000000	4.93439973520838
2.71362010000000	4.93541091776214
2.76654080000000	4.93589234206075
2.82151360000000	4.93574630800815
2.89958490000000	4.93655994538534
2.95295490000000	4.93736204488268
3.02505640000000	4.93960489079410
3.07114240000000	4.94215532363596
3.11908960000000	4.94522351596811
3.16481720000000	4.94760639039738
3.21357320000000	4.94915740160100
3.25961480000000	4.95160777886128
3.31416200000000	4.95466034035228
3.36645200000000	4.95567156709396
3.42144450000000	4.95685141147331
3.49896250000000	4.95739404759263
3.55090310000000	4.95677283737413
3.62507800000000	4.95577070594958
3.68313420000000	4.95425205005312
3.73618780000000	4.95407846813768
3.79067750000000	4.95416025801624
3.83649170000000	4.95400922478782
3.88957190000000	4.95385340593010
3.94221460000000	4.95340659047839
4.01628730000000	4.95310052491224
4.06607190000000	4.95264636685955
4.11373440000000	4.95257342955629
4.16849380000000	4.95236414272953
4.21830790000000	4.95275629327021
4.26332260000000	4.95312536289917
4.31098230000000	4.95317005097638
4.36503000000000	4.95219674438910
4.41217070000000	4.95194181103488
4.45795670000000	4.95115686995092
4.50796140000000	4.95029495659626
4.55449210000000	4.94957637060783
4.60284320000000	4.95013929202289
4.67593310000000	4.94991506121163
4.72646680000000	4.95025450661589
4.77331730000000	4.95037849327073
4.84801160000000	4.95095569718522
4.89773920000000	4.95117491691907
4.97065330000000	4.95089383671097
5.02030260000000	4.95094108449835
5.07064080000000	4.95040695283163
5.12790470000000	4.95021115430730
5.17428430000000	4.94991202864765
5.23125240000000	4.94997930589140
5.29053750000000	4.95014453600226
5.34645290000000	4.95027042636570
5.41998510000000	4.95039890402609
5.47011250000000	4.95032519279741
5.52837630000000	4.95024402068336
5.60156950000000	4.95033343772392
5.64903530000000	4.95043724926297
5.70461970000000	4.94990953166113
5.77806180000000	4.94974708316482
5.83189990000000	4.94942666801135
5.87855080000000	4.94915656858885
5.92372320000000	4.94987275799812
5.97151780000000	4.95011354622310
6.01852520000000	4.95010772642916
6.07384330000000	4.95002907412075
6.11856380000000	4.94966383724325
6.16388860000000	4.94935357227348
6.21481440000000	4.94977050334216
6.26116280000000	4.95035309888113
6.30761450000000	4.95050601500340
6.35596450000000	4.95089029234097
6.40085030000000	4.95120028952685
6.44867970000000	4.95082155765070
6.49582840000000	4.95079926140087
6.54112860000000	4.95006304798797
6.58887200000000	4.94999045070298
6.63603760000000	4.94980912611800
6.68237380000000	4.94953936553679
6.72978850000000	4.94926246496984
6.81995840000000	4.94961205176963
6.87254930000000	4.94998743611786
6.91909200000000	4.95066965813537
6.96541440000000	4.95108089087565
7.01148190000000	4.95147150645240
7.06104690000000	4.95172344271024
7.10730770000000	4.95185424790206
7.15341420000000	4.95152088525826
7.20008600000000	4.95139263283074
7.24719490000000	4.95127346686815
7.29637380000000	4.95070997200606
7.35157210000000	4.95100845470878
7.39605240000000	4.95023380800688
7.43897680000000	4.95074497923299
7.48887700000000	4.95062627985163
7.53488450000000	4.95076115121274
7.57927940000000	4.95023865691843
7.62983070000000	4.95047744043366
7.70555680000000	4.95055106704825
7.75305120000000	4.95027002012880
7.82900700000000	4.95101062096469
7.88718080000000	4.95067764183626
7.93831550000000	4.95025019696112
7.99140750000000	4.95036606885995
8.03747250000000	4.94983746434728
8.09177080000000	4.94942107291745
8.14376920000000	4.94933990837231
8.21581540000000	4.94900921002962
8.26017970000000	4.94904765314969
8.30660620000000	4.94928225526117
8.36066980000000	4.95099101016845
8.41408620000000	4.95238111846985
8.46012840000000	4.95355464562146
8.53376750000000	4.95459615385727
8.60897580000000	4.95600689869342
8.66196170000000	4.95507234267829
8.71365120000000	4.95514312997374
8.78402480000000	4.95454093337585
8.83751320000000	4.95315389097402
8.89510740000000	4.95330096768582
8.94903600000000	4.95325725815860
8.99156420000000	4.95284638859169
9.03707810000000	4.95278271927174
9.07999170000000	4.95295709491149
9.13320450000000	4.95263045509900
9.20858240000000	4.95241933196577
9.26260230000000	4.95267668254018
9.31175670000000	4.95281369480104
9.38366510000000	4.95270090393955
9.43521330000000	4.95259556753396
9.51003450000000	4.95295866562615
9.58196170000000	4.95301264966598
9.62971470000000	4.95316990510082
9.67616200000000	4.95332885522458
9.72924950000000	4.95267209086648
9.78214960000000	4.95242256639996
9.85448190000000	4.95209476719260
9.91177250000000	4.95224575137385
9.96919160000000	4.95268531522798
10.0201195000000	4.95251433605952
10.0533409000000	4.95239317907768
10.1112530000000	4.95244994785783
10.1713297000000	4.95200571076056
10.2204696000000	4.95172330710084
10.3911564000000	4.95200186557942
10.4353331000000	4.95205720093600
10.4842983000000	4.95223400079046
10.5482505000000	4.95205040054262
10.5941404000000	4.95198654073277
10.6351714000000	4.95199197593915
10.6833146000000	4.95170545895545
10.7327105000000	4.95212068872261
10.7766383000000	4.95238638646065
10.8235484000000	4.95261095346203
10.8666660000000	4.95288138992196

    };

    \end{axis}
\end{tikzpicture}}
\hfill
\subfloat{\begin{tikzpicture}
    \begin{axis}[ 
      label style={font=\footnotesize},
      tick label style={font=\footnotesize},
     xmin=2, xmax=8,
      ymin=4.85, ymax=5,
      ymajorgrids=true,
      grid style=dashed,
      legend pos=north west,
      width = 0.26\linewidth,
      height = 0.2\linewidth,
    ]


    \addplot[color=blue, line width = 1pt] table {
0	0
0.0351693000000000	-7.59492801925909e-05
0.0703386000000000	0.00423897012562919
0.148496900000000	0.496098364074429
0.201737700000000	1.08913772358282
0.247780300000000	1.32905317332077
0.294785000000000	1.83246747479590
0.342281700000000	2.46988296633699
0.409490200000000	2.94680556770660
0.458248600000000	3.17712125093851
0.506489200000000	3.36083869191664
0.557434400000000	3.52903792992480
0.606649000000000	3.53650668757157
0.681280700000000	3.58514658769639
0.731638100000000	3.70835819167960
0.801053000000000	3.94078478733944
0.854182600000000	4.20675418806602
0.907190300000000	4.46424342119921
0.980507700000000	4.55339304090235
1.02732550000000	4.50863524722119
1.09169730000000	4.45135095957918
1.14825280000000	4.48316487576592
1.22400040000000	4.52496471019774
1.27507570000000	4.61633544285397
1.34881600000000	4.71393243232496
1.39758300000000	4.78489449318803
1.44794900000000	4.80068520403251
1.52158790000000	4.81510697859291
1.57097370000000	4.80595415810547
1.64676630000000	4.79338491873828
1.69823030000000	4.81301413072067
1.77289050000000	4.82460087907702
1.82633950000000	4.82040404823638
1.88192120000000	4.88304196488685
1.93575550000000	4.91666746174004
2.00822640000000	4.92397135290786
2.05801240000000	4.89821918800266
2.11165880000000	4.90081445703976
2.18381150000000	4.89951012554891
2.23172010000000	4.89275275213964
2.27975380000000	4.90577727399335
2.32959890000000	4.90313270639256
2.38223710000000	4.90429439233598
2.45523820000000	4.90856380950358
2.50471720000000	4.90456690470637
2.57734070000000	4.92264242365155
2.62834530000000	4.92704326647100
2.70262810000000	4.93123036675418
2.75031250000000	4.93221945996833
2.79906550000000	4.93194331431686
2.85712100000000	4.93163511052194
2.92952880000000	4.93070623010858
2.99998960000000	4.92998345671931
3.05092360000000	4.93050326818544
3.09950470000000	4.93076328955472
3.17321990000000	4.93144401195913
3.22975650000000	4.93203562494262
3.27754030000000	4.93216349347696
3.35231980000000	4.93201773259443
3.40167140000000	4.93202802032837
3.47674440000000	4.93254953123532
3.52853590000000	4.93256688807268
3.57844300000000	4.93289199947753
3.64998970000000	4.93391113786617
3.70076380000000	4.93459522250513
3.77539740000000	4.93565864859111
3.82764150000000	4.93655831291540
3.90210060000000	4.93723630950505
3.95397540000000	4.93805722699847
4.00262030000000	4.93872080602835
4.07635370000000	4.94021059011391
4.12253370000000	4.94028982724557
4.19719100000000	4.94060478058103
4.24970940000000	4.94099440661588
4.29975670000000	4.94097728234013
4.37335240000000	4.94215702395173
4.44758310000000	4.94317336499221
4.50437690000000	4.94500624582901
4.57821880000000	4.94655962603626
4.65250370000000	4.94844819174414
4.69823190000000	4.94896705876375
4.77374880000000	4.94904396050846
4.84703600000000	4.94927646673777
4.92142350000000	4.94986532470152
4.97689370000000	4.95050956392779
5.04802270000000	4.95100932595803
5.12169520000000	4.95016531824356
5.17404890000000	4.94994115687038
5.24906120000000	4.95029723234705
5.30285310000000	4.94997037061462
5.37191050000000	4.95056401695517
5.41742470000000	4.95045322609126
5.49481290000000	4.94948216300043
5.54683880000000	4.94911260434155
5.59895360000000	4.94840137258990
5.64972910000000	4.94853301861681
5.69788680000000	4.94856613391699
5.74592350000000	4.94899760730370
5.81900420000000	4.94903652959982
5.86975430000000	4.94903909712297
5.91796750000000	4.94894092804396
5.99513290000000	4.95068880459189
6.04629930000000	4.95323471938784
6.11944820000000	4.95456286000066
6.17254580000000	4.95558789769815
6.21796840000000	4.95532471431470
6.29270900000000	4.95489905565191
6.33983420000000	4.95457603256716
6.41559140000000	4.95442287304999
6.46380240000000	4.95432016065409
6.53576670000000	4.95389502022137
6.59170980000001	4.95395139428953
6.63939380000001	4.95363952577785
6.71584840000001	4.95353197595850
6.76636760000001	4.95352588998153
6.83706250000001	4.95360175056582
6.88657700000001	4.95414619940958
6.96071860000001	4.95377591411018
7.01241260000001	4.95369099993986
7.06527690000001	4.95335498402778
7.13721860000001	4.95297691893017
7.18156550000001	4.95331304528310
7.25645890000001	4.95411168771753
7.30501440000001	4.95355292147784
7.37946840000001	4.95333534526578
7.45451670000001	4.95372530609386
7.52935260000001	4.95337180262386
7.57802370000001	4.95408553436803
7.65496680000001	4.95420106445656
7.73130480000001	4.95329204585079
7.77725650000001	4.95357462115954
7.82140090000001	4.95374096135972
7.87272670000001	4.95343640192454
7.92498390000001	4.95322890489828
7.99749630000001	4.95310316041729
8.06792770000001	4.95301141132480
8.11759760000001	4.95292314065144
8.17554920000001	4.95369261389580
8.22246360000001	4.95323374887358
8.29796780000001	4.95337567208847
8.35615830000001	4.95405700248239
8.40288140000001	4.95367687725957
8.45295430000001	4.95432278161103
8.50175240000001	4.95402155603379
8.57358570000001	4.95355002049223
8.62492530000001	4.95265409526737
8.69602020000001	4.95196405091400
8.74511670000001	4.95180915210497
8.81738310000001	4.95259388176409
8.86704660000001	4.95289381666596
8.94394290000001	4.95267919296428
8.98776170000001	4.95300451346213
9.06086210000001	4.95251005465582
9.11479590000000	4.95225305221795
9.16692200000001	4.95216127745335
9.22248820000000	4.95246188192308
9.29255310000001	4.95198977875523
9.34484670000001	4.95202558520867
9.41636100000001	4.95209848348112
9.46676060000001	4.95222333584569
9.53819570000001	4.95275413714427
9.58819150000001	4.95255447812651
9.65934480000001	4.95175747381183
9.70865200000001	4.95211367712532
9.75298840000001	4.95172602529667
9.80339250000001	4.95230857551752
9.85313390000001	4.95238335555308
9.92673050000001	4.95244036977057
9.97590090000001	4.95151777977853
10.0276919000000	4.95173960044250
10.0746680000000	4.95140918926624
10.1257859000000	4.95168799172317
10.1980052000000	4.95280804886373
10.2490589000000	4.95317981004375
10.3005217000000	4.95289415751817
10.3563421000000	4.95287863912797
10.4286921000000	4.95281846860774
10.4797823000000	4.95292607672222
10.5271726000000	4.95229951828420
10.5775426000000	4.95258437745245
10.6275654000000	4.95221824805224
10.6833161000000	4.95231527718211
10.7588773000000	4.95222591218849
10.8076871000000	4.95189805822599
10.8772527000000	4.95219418096975
10.9274159000000	4.95211574726889
11.0026443000000	4.95234458476059
11.0537744000000	4.95289770379712
11.1300654000000	4.95276827943929
11.1804039000000	4.95271265071069
11.2312700000000	4.95248875703017
11.3032016000000	4.95199316027926
11.3517778000000	4.95146145652416
11.4264786000000	4.95123837732846
11.4802431000000	4.95126739980290
11.5331921000000	4.95166958874925
11.5901475000000	4.95199372714420
11.6646416000000	4.95221444823706
11.7152861000000	4.95229804804064
11.7911238000000	4.95231572051005

    };

    \end{axis}
\end{tikzpicture}}
\hfill
\subfloat{\begin{tikzpicture}
    \begin{axis}[
      label style={font=\footnotesize},
      tick label style={font=\footnotesize},
      xmin=2, xmax=8,
      ymin=4.85, ymax=5,
      ymajorgrids=true,
      grid style=dashed,
      legend pos=north west,
      width = 0.26\linewidth,
      height = 0.2\linewidth,
    ]


    \addplot[color=blue, line width = 1pt] table {
0	0
0.0487482000000000	0.0371809696293590
0.0974964000000000	0.00411316746316198
0.138490000000000	0.00797415678532390
0.213782700000000	0.997067976751473
0.284278800000000	1.68336206746104
0.334459900000000	2.10265922388865
0.407996000000000	2.99852122259204
0.458020800000000	3.05571860361015
0.531451000000000	2.93085699002624
0.582233000000000	3.46643650443403
0.635163200000000	3.42922626123641
0.683605600000000	3.40786499972211
0.733938400000000	3.57882286214449
0.808890800000000	3.88893554254278
0.853224500000000	4.20203477682975
0.924172700000000	4.33362356188808
0.973229600000000	4.47883951758430
1.04499130000000	4.43381782244975
1.09412660000000	4.44390931936916
1.16885160000000	4.41745315978192
1.24278080000000	4.50046986372525
1.32041540000000	4.60527572109939
1.37030290000000	4.71587775992228
1.42479590000000	4.76151506037634
1.47081030000000	4.79344525250199
1.52673890000000	4.79010204338515
1.57676330000000	4.79589954682485
1.62724360000000	4.78284021360463
1.69792160000000	4.79203627510068
1.76928790000000	4.82003655207891
1.82979160000000	4.80351166457213
1.87813500000000	4.81903767381832
1.92948860000000	4.87479214286440
1.98485660000000	4.90441106522328
2.05683790000000	4.91457502429706
2.10722040000000	4.90752738093629
2.17975670000000	4.89094322323165
2.22945410000000	4.89390216444596
2.30054200000000	4.89070276652894
2.37195240000000	4.90557980884717
2.44510370000000	4.88968822019804
2.49463330000000	4.89095736769983
2.54204380000000	4.90726749813917
2.59537220000000	4.91354774471631
2.67301260000000	4.91220699739737
2.74664260000000	4.90963605555578
2.79651280000000	4.91063405252287
2.87324300000000	4.90748634253992
2.92387430000000	4.90275169641615
2.97975740000000	4.90028841401679
3.05179200000000	4.89922461823965
3.10253940000000	4.89957624976537
3.17560650000000	4.90180083894985
3.22750890000000	4.90683793460813
3.27940900000000	4.90934523944179
3.35557130000000	4.91173921320030
3.41035090000000	4.91139924104103
3.46210260000000	4.91019347441300
3.53525050000000	4.90926951374444
3.58407130000000	4.90933886772092
3.65722070000000	4.91158003008293
3.70737310000000	4.91389365027575
3.76098710000000	4.91646842803943
3.83436950000000	4.91947137825738
3.88691320000000	4.92097169486204
3.95766180000000	4.92109181444810
4.00939950000000	4.91972967754318
4.06022410000000	4.91844139162352
4.13380330000000	4.91600106236271
4.18530650000000	4.91480469066022
4.23028330000000	4.91541651367886
4.30424990000000	4.91821401912024
4.37591330000000	4.92098097039967
4.42770990000000	4.92465756513121
4.48048200000000	4.92618177533556
4.52651010000000	4.92648386861516
4.57659800000000	4.92491868429144
4.62805580000000	4.92299709885898
4.70023480000000	4.92170150728928
4.77867510000000	4.92044967535215
4.84827160000000	4.92129600472420
4.90154380000000	4.92369020100550
4.94651170000000	4.92467225209952
4.99775950000000	4.92536996878881
5.04591190000000	4.92562623625529
5.12076930000000	4.92561890620409
5.17113470000000	4.92424395307668
5.24317480000000	4.92333290480210
5.29195280000000	4.92219485849944
5.36305920000000	4.92197827038820
5.41787570000000	4.92297292093405
5.47277950000000	4.92335937310637
5.53081010000000	4.92432331003751
5.58554190000000	4.92470220397977
5.64014100000000	4.92400076252478
5.71232210000000	4.92330680862389
5.78786040000000	4.92202200137331
5.83772220000000	4.92137946846536
5.88781990000000	4.92152316877423
5.96017680000000	4.92196995166558
6.01309240000000	4.92302874017193
6.08582000000000	4.92266837992307
6.13735290000000	4.92326378066978
6.19202550000000	4.92341314582010
6.26556470000000	4.92265534635156
6.32301330000000	4.92191017761974
6.37312840000000	4.92157953663228
6.44801980000000	4.92130952265856
6.49939580000000	4.92180739913101
6.54839500000000	4.92247898668020
6.62165970000000	4.92271622605994
6.67514920000000	4.92375061668660
6.72553850000000	4.92386632642619
6.79780440000000	4.92363096581591
6.85430880000000	4.92319787891160
6.92641730000000	4.92239028317170
6.97241650000000	4.92157893957948
7.04374850000000	4.92159217242918
7.09540550000000	4.92139134062778
7.14638830000000	4.92167132803726
7.21848990000000	4.92159944860406
7.27117530000000	4.92131624873393
7.34240760000000	4.92122871961644
7.39138590000000	4.92035466273879
7.46360980000000	4.92009526143331
7.51977050000000	4.92013161369276
7.56387530000000	4.91999432696945
7.63816660000000	4.92041292403010
7.68848970000000	4.91948033959594
7.74662500000000	4.91907525883275
7.82006730000000	4.91901464845267
7.87107610000000	4.91843244686746
7.94422270000000	4.91813595529661
7.99409940000000	4.91849248439224
8.06771130000000	4.91837062980238
8.11480590000000	4.91789134675566
8.16425000000000	4.91843669359373
8.23691800000000	4.91827326405190
8.28651080000000	4.91879268166993
8.34288010000000	4.91918269178271
8.41449800000000	4.91938257602411
8.49070440000000	4.91925918529872
8.54289500000000	4.91877843256908
8.59453580000000	4.91846012663052
8.66798550000000	4.91900100620502
8.71511970000000	4.91906059336449
8.76378030000000	4.91910071110503
8.83594520000000	4.91913624278049
8.88684500000000	4.91888705189081
8.95977010000000	4.91901705107566
9.01165020000000	4.91886457666293
9.05697230000000	4.91891223196122
9.10893500000000	4.91888259161989
9.15547630000000	4.91847804743545
9.22675990000000	4.91958991100066
9.27887360000000	4.91908661631883
9.35236770000000	4.91871180504786
9.40137280000000	4.91858187456525
9.47305790000000	4.91766241410386
9.52669080000000	4.91698160187636
9.57519330000000	4.91626136714890
9.63228530000000	4.91630852742364
9.70575090000000	4.91614540353183
9.75827440000000	4.91639867800887
9.83233890000000	4.91635708679043
9.88513170000000	4.91592271161337
9.93398260000000	4.91565026434086
9.98161320000000	4.91550577188865
10.0341627000000	4.91475436430133
10.0846703000000	4.91381651680003
10.1573654000000	4.91348504734400
10.2079093000000	4.91302991600665
10.2595960000000	4.91316137402768
10.3313935000000	4.91322333135148
10.4068454000000	4.91310829712651
10.4617324000000	4.91276666700346
10.5236031000000	4.91230119540137
10.5719989000000	4.91162116383227
10.6187639000000	4.91120002196568
10.6935565000000	4.91090838680005
10.7453525000000	4.91110094960246
10.7928314000000	4.91102609493393
10.8433441000000	4.91098727222264
10.8957672000000	4.91171510656838
10.9439643000000	4.91205979180914
10.9936400000000	4.91448965834058
11.0463997000000	4.91950152099275
11.1002506000000	4.92497030571557
11.1539174000000	4.92948872314036
11.1993113000000	4.93415176175890
11.2487345000000	4.93701506137423
11.3010574000000	4.93750663098283
11.3532291000000	4.93780961051867
11.4091438000000	4.93915229950872
11.4630337000000	4.94053937598934
11.5131554000000	4.94205328830727
11.5655087000000	4.94346599191371
11.6189510000000	4.94446765807208
11.6631047000000	4.94536159934400

    };

    \end{axis}
\end{tikzpicture}}
\hfill
\subfloat{\begin{tikzpicture}
    \begin{axis}[
      label style={font=\footnotesize},
      tick label style={font=\footnotesize},
     xmin=2, xmax=8,
      ymin=4.85, ymax=5,
      ymajorgrids=true,
      grid style=dashed,
      legend pos=north west,
      width = 0.26\linewidth,
      height = 0.2\linewidth,
    ]


    \addplot[color=blue, line width = 1pt] table {
0	0
0.0705605000000000	0.0622426788991338
0.141121000000000	0.0121046357787623
0.188711500000000	0.870380970305976
0.258844700000000	1.46446900852654
0.310719800000000	1.85601855727137
0.358813800000000	2.23760492625311
0.432870700000000	2.86269019980580
0.486241300000000	3.20869884424361
0.560112600000000	3.34028653278566
0.608670500000000	3.41884900816758
0.686909200000000	3.42870554089716
0.757143500000000	3.63688112323551
0.829473600000000	3.89374679550751
0.885210100000000	4.10595348276876
0.929167700000000	4.23703235083169
1.00369430000000	4.35778108678732
1.05305600000000	4.45479286647336
1.12539510000000	4.47125871379023
1.17520410000000	4.45434683253162
1.24742540000000	4.49192883099746
1.29806280000000	4.65424128941363
1.35084540000000	4.69065365408870
1.42306060000000	4.75598114497649
1.47314870000000	4.79618589637035
1.53159770000000	4.81397589581136
1.60898140000000	4.79562779077052
1.65383540000000	4.81087914846787
1.72627660000000	4.84727435650897
1.77728170000000	4.86994732792892
1.85085310000000	4.84448212529106
1.90111280000000	4.86102329528288
1.95314600000000	4.87502616363269
2.02504370000000	4.92482870066864
2.07580820000000	4.94889152308267
2.14775530000000	4.95059604182233
2.22354890000000	4.92844592087919
2.27895960000000	4.94538530978151
2.35237390000000	4.92504180218490
2.40275330000000	4.93557302832850
2.45554700000000	4.93218997279170
2.52950280000000	4.93019074177515
2.58335600000000	4.93146461740187
2.62886440000000	4.93336161185102
2.67352690000000	4.93632886430284
2.72303120000000	4.93911778139961
2.76765620000000	4.94057270828604
2.81372940000000	4.94055502282907
2.86343150000000	4.94053394894553
2.90969950000000	4.93826216813791
2.95568160000000	4.93562964043100
3.00357710000000	4.93265038669031
3.05113870000000	4.93061538872072
3.10131400000000	4.92947542459508
3.15241880000000	4.92969395108996
3.19668980000000	4.93076136534546
3.24333310000000	4.93255460196677
3.29264200000000	4.93409830028939
3.34070160000000	4.93489030983787
3.38542100000000	4.93495970092441
3.43412570000000	4.93438821492431
3.47891860000000	4.93359857770515
3.52477840000000	4.93230246712053
3.57354620000000	4.93097089612412
3.62068810000000	4.93083088621229
3.67881990000000	4.93173222971071
3.75666400000000	4.93373386123352
3.80348610000000	4.93511123695513
3.85274600000000	4.93652726299298
3.89582470000000	4.93695163579650
3.94367400000000	4.93646582523102
3.99160100000000	4.93526923353503
4.03808490000000	4.93515433713468
4.10450510000000	4.93386788693850
4.15324540000000	4.93355288030032
4.20094150000000	4.93402578114680
4.24677040000000	4.93460116316337
4.29533600000000	4.93476907388545
4.36995310000000	4.93563898299194
4.41432340000000	4.93552473762427
4.46545710000000	4.93498188903360
4.52012140000000	4.93475227109866
4.56883230000000	4.93409519802965
4.61665320000000	4.93383162716745
4.66002990000000	4.93400456515252
4.71101060000000	4.93388723021649
4.75931970000000	4.93394351276248
4.81200530000000	4.93476445577854
4.86748720000000	4.93461491778635
4.91500030000000	4.93456208324421
4.96508950000000	4.93482289468206
5.02114030000000	4.93423158974481
5.09387890000000	4.93329447262448
5.16673300000000	4.93332742030823
5.24056180000000	4.93389261312143
5.29102600000000	4.93522790102754
5.33786410000000	4.93552066408126
5.41024300000000	4.93555721338882
5.46253340000000	4.93539444666981
5.51701050000000	4.93467079015209
5.59443050000000	4.93431745019632
5.64772310000000	4.93372213388893
5.71707670000000	4.93369572804339
5.79313590000000	4.93429221278325
5.93561610000000	4.93892990045842
5.98076840000000	4.93959527832240
6.03277930000000	4.94020885896571
6.07913600000000	4.94019675481055
6.13080370000000	4.94057064725143
6.17615940000000	4.94011638903339
6.22243500000000	4.93963691673902
6.27047480000000	4.93976286936157
6.34393780000000	4.93971680282611
6.39088780000000	4.94002124655063
6.43782980000000	4.94019507141497
6.48642430000000	4.94048696326509
6.53427880000000	4.94056785669988
6.57946870000000	4.94081922335165
6.62856790000000	4.94068297672042
6.67507400000000	4.94142081059126
6.72479990000000	4.94184260198681
6.78627540000000	4.94208114954576
6.83104520000000	4.94175064739085
6.87535740000000	4.94159061697761
6.92201500000000	4.94135784573868
6.97447420000000	4.94107053700475
7.02028400000000	4.94130661740880
7.06746490000000	4.94193607144755
7.11613720000000	4.94180676742022
7.17409540000000	4.94191806673670
7.21490210000000	4.94232579159927
7.26765580000000	4.94196176079699
7.30866970000000	4.94166777530187
7.35893000000000	4.94131587118812
7.40433870000000	4.94135218520175
7.45143870000000	4.94159497969696
7.49225660000000	4.94139641090323
7.54049930000000	4.94125835202558
7.58930310000000	4.94175445566675
7.63815120000000	4.94191875871355
7.68482410000000	4.94215528866806
7.73526950000000	4.94192241002649
7.77721120000000	4.94242777095177
7.81892170000000	4.94245777454789
7.86803120000000	4.94216599805908
7.91852010000000	4.94174974944333
7.96834050000000	4.94152535346000
8.01274680000000	4.94149489511538
8.06270340000000	4.94140019151526
8.10501760000000	4.94156352672331
8.15062070000000	4.94129240091465
8.20077210000000	4.94098442852785
8.24897150000000	4.94077776998368
8.29346200000000	4.94028200284801
8.34232140000000	4.94001896242999
8.41245180000000	4.93988801827258
8.46287310000000	4.94088877654622
8.53593800000000	4.94135479643357
8.58656890000000	4.94173357646248
8.64346490000000	4.94177566880301
8.69053640000000	4.94116052471860
8.73896140000000	4.94097545289796
8.78658200000000	4.93995195057886
8.83214410000000	4.94027637016034
8.87739020000000	4.94002482283886
8.92565640000000	4.94033349210366
8.97316800000000	4.94047875354881
9.02045950000000	4.94059732150515
9.06602040000000	4.94044909860443
9.11537580000000	4.94044058287332
9.15853320000000	4.94029120478991
9.20275380000000	4.94008193901442
9.25258190000000	4.94012538693105
9.29754920000000	4.94079521282241
9.34408080000000	4.94086139025706
9.39278730000000	4.94081251200683
9.43556990000000	4.94107306268991
9.48425780000000	4.94097524029678
9.53013040000000	4.94058977087038
9.57791320000000	4.94081638088327
9.65489160000000	4.94074739982900
9.70084610000000	4.94073048579213
9.75187310000000	4.94040312977859
9.79583310000000	4.94035430369170
9.84346370000000	4.94042917808441
9.90734860000000	4.94093697788238
9.95188700000000	4.94057422720584
10.0008081000000	4.94128551068539
10.0443153000000	4.94144733018958
10.0913515000000	4.94163593258518
10.1426129000000	4.94150541090628
10.2140365000000	4.94211127410959
10.2635272000000	4.94141501600140
10.3095437000000	4.94116213302525
10.3583561000000	4.94106395118006
10.4027392000000	4.94098228331898
10.4500986000000	4.94071363196648
10.4977717000000	4.94106594583323
10.5438957000000	4.94101443882732
10.5902857000000	4.94112995121491

    };

    \end{axis}
\end{tikzpicture}}
\hfill

\vspace{-0.37 cm}

\subfloat[$K_{\tt d} = 1, ~ \gamma =0.01$]{\import{FigPlot/}{Inputs_5_deg_i}}
\hfill
\subfloat[$K_{\tt d} = 1, ~ \gamma =0.1$]{\import{FigPlot/}{Inputs_5_deg_j}}
\hfill
\subfloat[$K_{\tt d} = 1, ~ \gamma =1$]{\import{FigPlot/}{Inputs_5_deg_k}}
\hfill
\subfloat[$K_{\tt d} = 1, ~ \gamma =5$]{\import{FigPlot/}{Inputs_5_deg_l}}

\caption{Position control performance for the desired configuration $\theta_\star = 5 ~{\rm deg}$ with different $\gamma$. }
\label{fig:ctrl_agl1_gamma}
\end{figure*}

We conducted additional experiments to test the proposed approach in different scenarios, including the desired configurations of $\theta_\star = 10$ and $15$ deg, shown in Fig. \ref{fig:ctrl_10_11_deg}. These results demonstrate that the algorithm is capable of achieving high accuracy and performance for position control. To quantify the steady performance, we study the configuration trajectories during $\cali_{\tt s}:=[4,8]$ s, since for all these scenarios the system states arrive at the steady-state stage. For these two desired equilibria, the proposed design achieved high accuracy, verifying the property {\bf P1} in Proposition \ref{prop:control}.

We summarise the accuracy achieved in these experiments with different equilibria ($5$ deg, $10$ deg and $15$ deg) and gains of $\gamma$ and $K_{\tt d}$ in Table \ref{tab:1}, where $[\theta_{\tt min}, \theta_{\tt max}]$ represents the minimal and the maximal values during the interval $\cali_{\tt s}$. We also give the root mean square (RMS) and the mean absolute error (MAE) for each scenario in the same table. For $\theta_\star = 5$ deg, it achieved the highest accuracy among the three equilibria, for which the selections of $\gamma$ as $1$ and $5$ degraded the steady-state accuracy a little bit. In Fig. \ref{fig:photo_sequence}, we present a photo sequence of one of the scenarios with the desired configuration $\theta_\star = 10$ deg, and the gains $K_{\tt d} =0.1$ and $\gamma=1$. This sequence serves as an intuitive illustration of the dynamic behaviour of the closed loop.


In addition, we report the result with a \emph{large} $\gamma=10$. However, as explained in the discussion point 3) in Section \ref{sec:5}, a large $\gamma>0$ may make the desired potential energy function $U_{\tt d}$ \textit{non-convex}, resulting in instability. This is consistent with the experimental results, as we observe the neutral stability with oscillating behaviours at the steady-state stage for $\gamma=10$; see Fig. \ref{fig:LargeGamma}. Evaluating the closed-loop performance in Fig. \ref{fig:ctrl_agl1_Kd}, we observe that a smaller gain $K_{\tt d}$ was likely to cause poor \emph{transient} performance with longer time. Whereas, the experimental results in Table \ref{tab:1} show that the value of $K_{\tt d}$ within the interval $[0.1, 10]$ has limited effects on the \emph{steady-state} performance for position control.

%
\begin{figure}
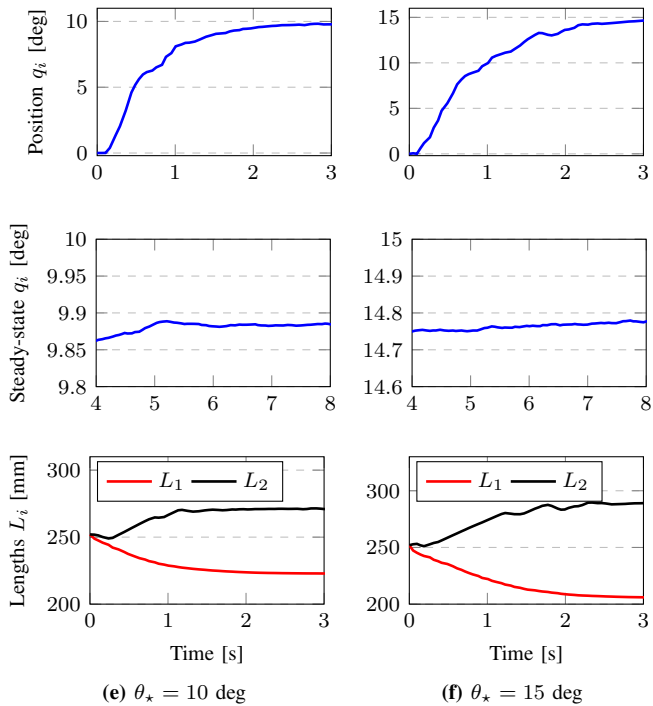

\centering
\setkeys{Gin}{width=0.25\linewidth}
~~\subfloat{\begin{tikzpicture}[spy using outlines={rectangle, magnification=10,  connect spies}]
    \begin{axis}[
      xmin=0, xmax=3,
      ymin=-0.2, ymax=11,
      ymajorgrids=true,
      grid style=dashed,
      legend pos=north west,
      width = 0.53\linewidth,
      height = 0.4\linewidth,
      label style={font=\footnotesize},
      tick label style={font=\footnotesize},
      ylabel={Position $q_i$ [deg]}
    ]


    \addplot[color= blue, line width = 1pt] table {
0	0
0.0568485000000000	0.00209126714784225
0.106636900000000	0.00743696399724719
0.161638500000000	0.388400158181980
0.237777400000000	1.37202235246266
0.290532700000000	2.02273944766424
0.365799200000000	3.22815339818914
0.438718600000000	4.60052548843545
0.487416100000000	5.17725573730891
0.535712000000000	5.61782213236252
0.586536000000000	5.97051030356368
0.635139300000000	6.14879899598734
0.706669000000000	6.26649709193724
0.757562000000000	6.50951727742588
0.830890900000000	6.70214474858734
0.881397700000000	7.30846043668172
0.950236100000000	7.58912643159160
1.00552990000000	8.10584513287051
1.07888820000000	8.24492781090661
1.12974440000000	8.36961111682759
1.17986130000000	8.38441201358182
1.25112780000000	8.46849376478978
1.32074750000000	8.64060081673111
1.39544250000000	8.83564928736188
1.47076090000000	8.96275436150149
1.52213480000000	9.05662710183979
1.57252550000000	9.07959334408097
1.64488990000000	9.11793513663683
1.69843880000000	9.23856670700627
1.75164320000000	9.30478327632848
1.82328790000000	9.33985652284568
1.86865240000000	9.42846269347486
1.91507870000000	9.44163541286136
1.96804670000000	9.46817283163400
2.02119760000000	9.52070343346727
2.06936960000000	9.56877028469705
2.11511750000000	9.58461903016030
2.17005090000000	9.61688893954348
2.24004790000000	9.65568146614171
2.29150660000000	9.66835414186884
2.36735690000000	9.71428891522744
2.41366020000000	9.71740982670630
2.46251270000000	9.72210335205928
2.53840660000000	9.73765872437619
2.58636250000000	9.75576345698985
2.65704820000000	9.71844188844325
2.72977950000000	9.79414747674658
2.78028050000000	9.81593301475552
2.83110010000000	9.82342477961994
2.90208190000000	9.77148798252749
2.95259440000000	9.77680489499167
3.00009410000000	9.77728861993960
3.05300540000000	9.83286153319490
3.09702660000000	9.83427654216623
3.17005090000000	9.79589718966077
3.21888970000000	9.81177971970041
3.26531320000000	9.81671538836751
3.31608710000000	9.83019048092551
3.36906870000000	9.83470148602969
3.44207910000000	9.83763212107118
3.49354800000000	9.84124350065025
3.56546340000000	9.84246508278187
3.61569280000000	9.84216777171383
3.68714190000000	9.84500255522507
3.73316150000000	9.84849175365776
3.80794530000000	9.85382895433311
3.85869200000000	9.85714792036309
3.91004490000000	9.85947408110933
3.98400180000000	9.86179485952621
4.03540950000000	9.86342681665611
4.10589140000000	9.86437634478516
4.15844440000000	9.86552491050980
4.20995290000000	9.86617856445928
4.28641990000000	9.86791054483868
4.33626890000000	9.86929665692849
4.38868640000000	9.87010192582634
4.43748430000000	9.87129100892221
4.48551760000000	9.87276727328427
4.53661490000000	9.87206652419971
4.61049910000000	9.87227698433492
4.66217680000000	9.87399375172196
4.73612090000000	9.87447018043791
4.78777920000000	9.87689834586926
4.83594500000000	9.87939665832561
4.90997300000000	9.88157788543642
4.95878910000000	9.88353463397251
5.03354700000000	9.88646833514496
5.08270270000000	9.88758444455975
5.13924270000000	9.88820900602906
5.20868620000000	9.88868262426106
5.26043240000000	9.88804367951682
5.32965990000000	9.88691688194161
5.38064340000000	9.88665084085617
5.45411100000000	9.88580633164090
5.50786260000000	9.88493523542328
5.55265470000000	9.88522649081629
5.62539540000000	9.88523177553109
5.67641810000000	9.88516663256903
5.75213750000000	9.88493202289858
5.80259620000000	9.88366436492078
5.87822660000000	9.88334760965168
5.92918960000000	9.88225446230326
5.97974560000000	9.88188547453757
6.05367320000000	9.88151202883586
6.10597540000000	9.88114807478523
6.15179810000000	9.88147000842142
6.22501070000000	9.88211426277518
6.29700710000000	9.88341768098753
6.37330250000000	9.88314194222977
6.44670880000000	9.88417435863539
6.49777170000000	9.88393454056586
6.56981430000000	9.88396977876387
6.64270070000000	9.88421875565212
6.71886240000000	9.88454616638480
6.76212200000000	9.88350216628520
6.81847900000000	9.88302318063962
6.89376010000000	9.88290351587376
6.94049880000000	9.88241380209900
7.01414720000000	9.88271309365023
7.06521790000000	9.88314262744850
7.12291350000000	9.88303113464811
7.19486690000000	9.88294353622950
7.27239570000000	9.88366721045329
7.34507170000000	9.88339106767175
7.41703630000000	9.88267728394947
7.49161380000000	9.88297039131218
7.54033520000000	9.88328591239364
7.61201630000000	9.88383056311586
7.66180400000000	9.88395418472313
7.73544550000000	9.88450896858125
7.78449240000000	9.88492094027926
7.85764700000000	9.88482509883562
7.91052170000000	9.88554747154348
7.95964560000000	9.88526967269044
8.03396260000000	9.88396986012288
8.08631400000000	9.88454466657089
8.13487540000000	9.88440706870236
8.20533190000000	9.88410726162167
8.27716440000000	9.88537623136692
8.32722930000000	9.88526177013436
8.37575200000000	9.88513488511455
8.44697930000000	9.88455948871221
8.49834420000000	9.88500001446127
8.54782680000000	9.88581688204021
8.59135000000000	9.88547962280208
8.64606410000000	9.88638480730877
8.71896220000000	9.88671844261789
8.76760220000000	9.88672189151196
8.84232420000000	9.88564232658142
8.89403000000000	9.88636944629376
8.96208750000000	9.88652332250006
9.01618560000000	9.88701438498379
9.08740180000000	9.88794904582407
9.14031210000000	9.88800463450190
9.18666600000000	9.88802570953548
9.25899220000000	9.88693921207781
9.30817080000000	9.88751945944912
9.38079650000000	9.88774650026722
9.43231620000000	9.88676642529981
9.48105900000000	9.88709747441328
9.55284690000000	9.88718176026377
9.62523490000000	9.88653238971820
9.67708060000000	9.88713549862664
9.72594439999999	9.88694380225373
9.80065199999999	9.88600366458789
9.84820459999999	9.88699250007494
9.92301879999999	9.88763961523242
9.97284370000000	9.88833666118645
10.0468063000000	9.88911305274714
10.0951053000000	9.88909481562979
10.1477399000000	9.88906469102792
10.2215896000000	9.88824428856417
10.2715403000000	9.88844100313354
10.3455588000000	9.88782238495115
10.3944275000000	9.88764719095041
10.4497091000000	9.88803710372762
10.5221349000000	9.88809520262285
10.5737620000000	9.88806480718701
10.6445610000000	9.88835471435620
10.6958525000000	9.88832188881768
10.7506991000000	9.88903273471112
10.8201071000000	9.88998935450628
10.8719025000000	9.89015668299742
10.9467901000000	9.89013789279075
10.9998535000000	9.88939868128284
11.0501985000000	9.88868671601433
11.1236846000000	9.88849868444975
11.1730506000000	9.88881137988122
11.2475298000000	9.88959085396270
11.2973975000000	9.88921361694387
11.3474696000000	9.88778999808915
11.4205668000000	9.88786346528368
11.4715709000000	9.88725976310050
11.5233796000000	9.88808137530112
11.5947958000000	9.88856681023927
11.6405942000000	9.88906198033087
11.7140368000000	9.88941570018407
11.7895098000000	9.88921181509629
11.8415404000000	9.88997529509602
11.8942799000000	9.88990380836490

 };

    \end{axis}

\end{tikzpicture}}
\hfill
\subfloat{\begin{tikzpicture}[spy using outlines={rectangle, magnification=10,  connect spies}]
    \begin{axis}[
      xmin=0, xmax=3,
      ymin=-0.2, ymax=16,
      ymajorgrids=true,
      grid style=dashed,
      legend pos=north west,
      width = 0.53\linewidth,
      height = 0.4\linewidth,
    label style={font=\footnotesize},
      tick label style={font=\footnotesize},
    ]


    \addplot[color= blue, line width = 1pt] table {
0	0
0.0487526000000000	0.0624999898014306
0.0975052000000000	0.00697976272778576
0.143032500000000	0.620926587154643
0.188358900000000	1.18334881136406
0.264057500000000	1.82035964947818
0.312974000000000	2.87383839448342
0.367888900000000	3.65514035450863
0.413679000000000	4.73429452728438
0.487660900000000	5.56336430270483
0.559690000000000	6.65713334074370
0.613076500000000	7.63615363089761
0.667355400000000	8.16457638096568
0.714430100000000	8.55064039156516
0.760709700000000	8.75961455316978
0.812520500000000	8.92075995415537
0.885118100000000	9.12884862886772
0.933742300000000	9.62915523259447
1.00839010000000	9.98127130599690
1.05910620000000	10.5699171723658
1.11058830000000	10.7758219279761
1.18158730000000	10.9769418318110
1.22824010000000	11.1165381256361
1.29992460000000	11.2382954002159
1.34634940000000	11.5008673467401
1.41911340000000	11.9046715702167
1.47011770000000	12.2823429544541
1.51635420000000	12.5866709148795
1.61034270000000	13.0730359589099
1.65907990000000	13.2863539684912
1.70368670000000	13.2632492844417
1.77540000000000	13.0804043140855
1.82406180000000	13.0171318787452
1.90190150000000	13.1842989495800
1.94391300000000	13.4018906688975
1.99151400000000	13.6174483478891
2.04267460000000	13.6719317836834
2.11490280000000	13.8495163141163
2.16477620000000	14.1347129117762
2.21734950000000	14.2300662883895
2.28994270000000	14.1987467118617
2.34186410000000	14.2922383587639
2.41357060000000	14.3299323317035
2.46674570000000	14.2856022291409
2.51838540000000	14.3183109296176
2.59361540000000	14.3703687187940
2.64591030000000	14.4081879399383
2.71487140000000	14.4595972838647
2.76636870000000	14.5029498809852
2.81317750000000	14.5406651124787
2.88653810000000	14.5682881573280
2.95995780000000	14.6181050365077
3.01380470000000	14.6575656021366
3.06218510000000	14.6746931017090
3.12118210000000	14.6831364434522
3.16304890000000	14.6921234170735
3.21672770000000	14.7013553244975
3.29024150000000	14.7139321786167
3.34057050000000	14.7211600607198
3.38549450000000	14.7313813373828
3.45827470000000	14.7390581507227
3.50818850000000	14.7489850421057
3.58103730000000	14.7492386150037
3.63466050000000	14.7513797877887
3.70775960000000	14.7501898967421
3.75541010000000	14.7472267293976
3.82690840000000	14.7473613853777
3.87527990000000	14.7457359129246
3.94908590000000	14.7453923830002
4.00047170000000	14.7499029989120
4.05332180000000	14.7519803735987
4.12546960000000	14.7532864752476
4.17671490000000	14.7540731950236
4.25337940000000	14.7523067347690
4.30546980000000	14.7513721459186
4.35945060000000	14.7524839277700
4.40591480000000	14.7523951405784
4.47815420000000	14.7543374694659
4.55166240000000	14.7524364691204
4.62537950000000	14.7515302549628
4.69738750000000	14.7522795660466
4.74782250000000	14.7511123439756
4.79535100000000	14.7517643526320
4.84407110000000	14.7523457006414
4.89330690000000	14.7512343783820
4.93965880000000	14.7504193673728
4.98888100000000	14.7511912347835
5.04065200000000	14.7522881759060
5.11135020000000	14.7522222062398
5.16411360000000	14.7549551141871
5.23627660000000	14.7590310022495
5.28702340000000	14.7598466437506
5.36120890000000	14.7635072775787
5.41198520000000	14.7626185278457
5.46119770000000	14.7609225362756
5.53831220000000	14.7593397712032
5.58368630000000	14.7604199371599
5.65964570000000	14.7599517284325
5.70965880000000	14.7599370171347
5.75896700000000	14.7624214684433
5.83435670000000	14.7615148594201
5.88301900000000	14.7629122715827
5.95903220000000	14.7649198406987
6.00637610000000	14.7640066787758
6.06152310000000	14.7653387525803
6.13513020000000	14.7642347132645
6.18828360000000	14.7642406797971
6.23334970000000	14.7669940041717
6.30637400000000	14.7666914477580
6.37818300000000	14.7693372336368
6.45516590000000	14.7695713648976
6.52829560000000	14.7664630210842
6.57808080000000	14.7680170180383
6.65571870000000	14.7685040332878
6.72894010000000	14.7687913189880
6.78162450000000	14.7700568319193
6.82602720000000	14.7689467109131
6.87734800000000	14.7685221085540
6.92550690000000	14.7678677570986
6.99556950000000	14.7683357164855
7.07189430000000	14.7709702404609
7.12186130000000	14.7713336403592
7.17538820000000	14.7721140516537
7.22711230000000	14.7729007046866
7.28465490000000	14.7721585921086
7.35896510000000	14.7734660362297
7.41158300000000	14.7724493477180
7.48742610000000	14.7722770089093
7.55942780000000	14.7752456647420
7.60446490000000	14.7775174161533
7.65252850000000	14.7777775863369
7.72739220000000	14.7794258750911
7.77954060000000	14.7772229422692
7.82696940000000	14.7771394788862
7.90069550000000	14.7755837592043
7.95319590000000	14.7746374201491
8.02588460000000	14.7778730333527
8.07135520000000	14.7782850739757
8.11947290000000	14.7797759416631
8.17336240000000	14.7818346830164
8.22483720000000	14.7820356298750
8.29777350000000	14.7814486246965
8.34937560000000	14.7811632880778
8.40069140000000	14.7815567218672
8.47230060000000	14.7811226255208
8.52251060000000	14.7849535850096
8.56865020000000	14.7862926962484
8.62218070000000	14.7889707422270
8.66651990000000	14.7899339095782
8.73878860000000	14.7932954351708
8.79110310000000	14.7923052540799
8.84775960000000	14.7899583092380
8.91904310000000	14.7893863473223
8.97088090000000	14.7885665227339
9.04236380000000	14.7875429475257
9.09296950000000	14.7888269305372
9.16417790000000	14.7888461878756
9.23687210000000	14.7869986879144
9.28315240000000	14.7875054984446
9.32998230000000	14.7871500166642
9.40452220000000	14.7865017252413
9.45409540000000	14.7870464302259
9.52731380000000	14.7877837385770
9.57594700000000	14.7855966590233
9.63020150000000	14.7859507204082
9.70472110000000	14.7850397619815
9.75310700000000	14.7842958695599
9.82772210000000	14.7846037693823
9.87288530000000	14.7841266209126
9.93082120000000	14.7843282363180
9.97674460000000	14.7854263444129
10.0284970000000	14.7832958524710
10.0828089000000	14.7837949708839
10.1328649000000	14.7839680417287
10.2090181000000	14.7831441447983
10.2526942000000	14.7832197276869
10.3260745000000	14.7836737084765
10.3978214000000	14.7835603510440
10.4516214000000	14.7844909769654
10.5038901000000	14.7842194023633
10.5758648000000	14.7844090515178
10.6264299000000	14.7839562681512
10.6977790000000	14.7845264766347
10.7543630000000	14.7838481292738
10.8034700000000	14.7827280843269
10.8776999000000	14.7837172327639
10.9273997000000	14.7806528260874
10.9761203000000	14.7814564708778
11.0516203000000	14.7814710796747
11.0956500000000	14.7808414486395
11.1696673000000	14.7830297073708
11.2235787000000	14.7814069905103
11.2784846000000	14.7839556456388
11.3194659000000	14.7840810857150
11.3709830000000	14.7839849385035
11.4426842000000	14.7846198689237
11.4959322000000	14.7843154904973
11.5713826000000	14.7849051859739
11.6232485000000	14.7849624022110
11.6737394000000	14.7846936201251

 };

    \end{axis}

\end{tikzpicture}}
\hfill
\\
\subfloat{\begin{tikzpicture}[spy using outlines={rectangle, magnification=10,  connect spies}]
    \begin{axis}[
      xmin=4, xmax=8,
      ymin=9.8, ymax=10,
      ymajorgrids=true,
      grid style=dashed,
      legend pos=north west,
      width = 0.53\linewidth,
      height = 0.4\linewidth,
      label style={font=\footnotesize},
      tick label style={font=\footnotesize},
     ylabel={Steady-state $q_i$ [deg]}
    ]


    \addplot[color= blue, line width = 1pt] table {

0	0
0.0568485000000000	0.00209126714784225
0.106636900000000	0.00743696399724719
0.161638500000000	0.388400158181980
0.237777400000000	1.37202235246266
0.290532700000000	2.02273944766424
0.365799200000000	3.22815339818914
0.438718600000000	4.60052548843545
0.487416100000000	5.17725573730891
0.535712000000000	5.61782213236252
0.586536000000000	5.97051030356368
0.635139300000000	6.14879899598734
0.706669000000000	6.26649709193724
0.757562000000000	6.50951727742588
0.830890900000000	6.70214474858734
0.881397700000000	7.30846043668172
0.950236100000000	7.58912643159160
1.00552990000000	8.10584513287051
1.07888820000000	8.24492781090661
1.12974440000000	8.36961111682759
1.17986130000000	8.38441201358182
1.25112780000000	8.46849376478978
1.32074750000000	8.64060081673111
1.39544250000000	8.83564928736188
1.47076090000000	8.96275436150149
1.52213480000000	9.05662710183979
1.57252550000000	9.07959334408097
1.64488990000000	9.11793513663683
1.69843880000000	9.23856670700627
1.75164320000000	9.30478327632848
1.82328790000000	9.33985652284568
1.86865240000000	9.42846269347486
1.91507870000000	9.44163541286136
1.96804670000000	9.46817283163400
2.02119760000000	9.52070343346727
2.06936960000000	9.56877028469705
2.11511750000000	9.58461903016030
2.17005090000000	9.61688893954348
2.24004790000000	9.65568146614171
2.29150660000000	9.66835414186884
2.36735690000000	9.71428891522744
2.41366020000000	9.71740982670630
2.46251270000000	9.72210335205928
2.53840660000000	9.73765872437619
2.58636250000000	9.75576345698985
2.65704820000000	9.71844188844325
2.72977950000000	9.79414747674658
2.78028050000000	9.81593301475552
2.83110010000000	9.82342477961994
2.90208190000000	9.77148798252749
2.95259440000000	9.77680489499167
3.00009410000000	9.77728861993960
3.05300540000000	9.83286153319490
3.09702660000000	9.83427654216623
3.17005090000000	9.79589718966077
3.21888970000000	9.81177971970041
3.26531320000000	9.81671538836751
3.31608710000000	9.83019048092551
3.36906870000000	9.83470148602969
3.44207910000000	9.83763212107118
3.49354800000000	9.84124350065025
3.56546340000000	9.84246508278187
3.61569280000000	9.84216777171383
3.68714190000000	9.84500255522507
3.73316150000000	9.84849175365776
3.80794530000000	9.85382895433311
3.85869200000000	9.85714792036309
3.91004490000000	9.85947408110933
3.98400180000000	9.86179485952621
4.03540950000000	9.86342681665611
4.10589140000000	9.86437634478516
4.15844440000000	9.86552491050980
4.20995290000000	9.86617856445928
4.28641990000000	9.86791054483868
4.33626890000000	9.86929665692849
4.38868640000000	9.87010192582634
4.43748430000000	9.87129100892221
4.48551760000000	9.87276727328427
4.53661490000000	9.87206652419971
4.61049910000000	9.87227698433492
4.66217680000000	9.87399375172196
4.73612090000000	9.87447018043791
4.78777920000000	9.87689834586926
4.83594500000000	9.87939665832561
4.90997300000000	9.88157788543642
4.95878910000000	9.88353463397251
5.03354700000000	9.88646833514496
5.08270270000000	9.88758444455975
5.13924270000000	9.88820900602906
5.20868620000000	9.88868262426106
5.26043240000000	9.88804367951682
5.32965990000000	9.88691688194161
5.38064340000000	9.88665084085617
5.45411100000000	9.88580633164090
5.50786260000000	9.88493523542328
5.55265470000000	9.88522649081629
5.62539540000000	9.88523177553109
5.67641810000000	9.88516663256903
5.75213750000000	9.88493202289858
5.80259620000000	9.88366436492078
5.87822660000000	9.88334760965168
5.92918960000000	9.88225446230326
5.97974560000000	9.88188547453757
6.05367320000000	9.88151202883586
6.10597540000000	9.88114807478523
6.15179810000000	9.88147000842142
6.22501070000000	9.88211426277518
6.29700710000000	9.88341768098753
6.37330250000000	9.88314194222977
6.44670880000000	9.88417435863539
6.49777170000000	9.88393454056586
6.56981430000000	9.88396977876387
6.64270070000000	9.88421875565212
6.71886240000000	9.88454616638480
6.76212200000000	9.88350216628520
6.81847900000000	9.88302318063962
6.89376010000000	9.88290351587376
6.94049880000000	9.88241380209900
7.01414720000000	9.88271309365023
7.06521790000000	9.88314262744850
7.12291350000000	9.88303113464811
7.19486690000000	9.88294353622950
7.27239570000000	9.88366721045329
7.34507170000000	9.88339106767175
7.41703630000000	9.88267728394947
7.49161380000000	9.88297039131218
7.54033520000000	9.88328591239364
7.61201630000000	9.88383056311586
7.66180400000000	9.88395418472313
7.73544550000000	9.88450896858125
7.78449240000000	9.88492094027926
7.85764700000000	9.88482509883562
7.91052170000000	9.88554747154348
7.95964560000000	9.88526967269044
8.03396260000000	9.88396986012288
8.08631400000000	9.88454466657089
8.13487540000000	9.88440706870236
8.20533190000000	9.88410726162167
8.27716440000000	9.88537623136692
8.32722930000000	9.88526177013436
8.37575200000000	9.88513488511455
8.44697930000000	9.88455948871221
8.49834420000000	9.88500001446127
8.54782680000000	9.88581688204021
8.59135000000000	9.88547962280208
8.64606410000000	9.88638480730877
8.71896220000000	9.88671844261789
8.76760220000000	9.88672189151196
8.84232420000000	9.88564232658142
8.89403000000000	9.88636944629376
8.96208750000000	9.88652332250006
9.01618560000000	9.88701438498379
9.08740180000000	9.88794904582407
9.14031210000000	9.88800463450190
9.18666600000000	9.88802570953548
9.25899220000000	9.88693921207781
9.30817080000000	9.88751945944912
9.38079650000000	9.88774650026722
9.43231620000000	9.88676642529981
9.48105900000000	9.88709747441328
9.55284690000000	9.88718176026377
9.62523490000000	9.88653238971820
9.67708060000000	9.88713549862664
9.72594439999999	9.88694380225373
9.80065199999999	9.88600366458789
9.84820459999999	9.88699250007494
9.92301879999999	9.88763961523242
9.97284370000000	9.88833666118645
10.0468063000000	9.88911305274714
10.0951053000000	9.88909481562979
10.1477399000000	9.88906469102792
10.2215896000000	9.88824428856417
10.2715403000000	9.88844100313354
10.3455588000000	9.88782238495115
10.3944275000000	9.88764719095041
10.4497091000000	9.88803710372762
10.5221349000000	9.88809520262285
10.5737620000000	9.88806480718701
10.6445610000000	9.88835471435620
10.6958525000000	9.88832188881768
10.7506991000000	9.88903273471112
10.8201071000000	9.88998935450628
10.8719025000000	9.89015668299742
10.9467901000000	9.89013789279075
10.9998535000000	9.88939868128284
11.0501985000000	9.88868671601433
11.1236846000000	9.88849868444975
11.1730506000000	9.88881137988122
11.2475298000000	9.88959085396270
11.2973975000000	9.88921361694387
11.3474696000000	9.88778999808915
11.4205668000000	9.88786346528368
11.4715709000000	9.88725976310050
11.5233796000000	9.88808137530112
11.5947958000000	9.88856681023927
11.6405942000000	9.88906198033087
11.7140368000000	9.88941570018407
11.7895098000000	9.88921181509629
11.8415404000000	9.88997529509602
11.8942799000000	9.88990380836490

 };

    \end{axis}

\end{tikzpicture}}
\subfloat{\begin{tikzpicture}[spy using outlines={rectangle, magnification=10,  connect spies}]
    \begin{axis}[
      xmin=4, xmax=8,
      ymin=14.6, ymax=15,
      ymajorgrids=true,
      grid style=dashed,
      legend pos=north west,
      width = 0.53\linewidth,
      height = 0.4\linewidth,
    label style={font=\footnotesize},
      tick label style={font=\footnotesize},
    ]


    \addplot[color= blue, line width = 1pt] table {
0	0
0.0487526000000000	0.0624999898014306
0.0975052000000000	0.00697976272778576
0.143032500000000	0.620926587154643
0.188358900000000	1.18334881136406
0.264057500000000	1.82035964947818
0.312974000000000	2.87383839448342
0.367888900000000	3.65514035450863
0.413679000000000	4.73429452728438
0.487660900000000	5.56336430270483
0.559690000000000	6.65713334074370
0.613076500000000	7.63615363089761
0.667355400000000	8.16457638096568
0.714430100000000	8.55064039156516
0.760709700000000	8.75961455316978
0.812520500000000	8.92075995415537
0.885118100000000	9.12884862886772
0.933742300000000	9.62915523259447
1.00839010000000	9.98127130599690
1.05910620000000	10.5699171723658
1.11058830000000	10.7758219279761
1.18158730000000	10.9769418318110
1.22824010000000	11.1165381256361
1.29992460000000	11.2382954002159
1.34634940000000	11.5008673467401
1.41911340000000	11.9046715702167
1.47011770000000	12.2823429544541
1.51635420000000	12.5866709148795
1.61034270000000	13.0730359589099
1.65907990000000	13.2863539684912
1.70368670000000	13.2632492844417
1.77540000000000	13.0804043140855
1.82406180000000	13.0171318787452
1.90190150000000	13.1842989495800
1.94391300000000	13.4018906688975
1.99151400000000	13.6174483478891
2.04267460000000	13.6719317836834
2.11490280000000	13.8495163141163
2.16477620000000	14.1347129117762
2.21734950000000	14.2300662883895
2.28994270000000	14.1987467118617
2.34186410000000	14.2922383587639
2.41357060000000	14.3299323317035
2.46674570000000	14.2856022291409
2.51838540000000	14.3183109296176
2.59361540000000	14.3703687187940
2.64591030000000	14.4081879399383
2.71487140000000	14.4595972838647
2.76636870000000	14.5029498809852
2.81317750000000	14.5406651124787
2.88653810000000	14.5682881573280
2.95995780000000	14.6181050365077
3.01380470000000	14.6575656021366
3.06218510000000	14.6746931017090
3.12118210000000	14.6831364434522
3.16304890000000	14.6921234170735
3.21672770000000	14.7013553244975
3.29024150000000	14.7139321786167
3.34057050000000	14.7211600607198
3.38549450000000	14.7313813373828
3.45827470000000	14.7390581507227
3.50818850000000	14.7489850421057
3.58103730000000	14.7492386150037
3.63466050000000	14.7513797877887
3.70775960000000	14.7501898967421
3.75541010000000	14.7472267293976
3.82690840000000	14.7473613853777
3.87527990000000	14.7457359129246
3.94908590000000	14.7453923830002
4.00047170000000	14.7499029989120
4.05332180000000	14.7519803735987
4.12546960000000	14.7532864752476
4.17671490000000	14.7540731950236
4.25337940000000	14.7523067347690
4.30546980000000	14.7513721459186
4.35945060000000	14.7524839277700
4.40591480000000	14.7523951405784
4.47815420000000	14.7543374694659
4.55166240000000	14.7524364691204
4.62537950000000	14.7515302549628
4.69738750000000	14.7522795660466
4.74782250000000	14.7511123439756
4.79535100000000	14.7517643526320
4.84407110000000	14.7523457006414
4.89330690000000	14.7512343783820
4.93965880000000	14.7504193673728
4.98888100000000	14.7511912347835
5.04065200000000	14.7522881759060
5.11135020000000	14.7522222062398
5.16411360000000	14.7549551141871
5.23627660000000	14.7590310022495
5.28702340000000	14.7598466437506
5.36120890000000	14.7635072775787
5.41198520000000	14.7626185278457
5.46119770000000	14.7609225362756
5.53831220000000	14.7593397712032
5.58368630000000	14.7604199371599
5.65964570000000	14.7599517284325
5.70965880000000	14.7599370171347
5.75896700000000	14.7624214684433
5.83435670000000	14.7615148594201
5.88301900000000	14.7629122715827
5.95903220000000	14.7649198406987
6.00637610000000	14.7640066787758
6.06152310000000	14.7653387525803
6.13513020000000	14.7642347132645
6.18828360000000	14.7642406797971
6.23334970000000	14.7669940041717
6.30637400000000	14.7666914477580
6.37818300000000	14.7693372336368
6.45516590000000	14.7695713648976
6.52829560000000	14.7664630210842
6.57808080000000	14.7680170180383
6.65571870000000	14.7685040332878
6.72894010000000	14.7687913189880
6.78162450000000	14.7700568319193
6.82602720000000	14.7689467109131
6.87734800000000	14.7685221085540
6.92550690000000	14.7678677570986
6.99556950000000	14.7683357164855
7.07189430000000	14.7709702404609
7.12186130000000	14.7713336403592
7.17538820000000	14.7721140516537
7.22711230000000	14.7729007046866
7.28465490000000	14.7721585921086
7.35896510000000	14.7734660362297
7.41158300000000	14.7724493477180
7.48742610000000	14.7722770089093
7.55942780000000	14.7752456647420
7.60446490000000	14.7775174161533
7.65252850000000	14.7777775863369
7.72739220000000	14.7794258750911
7.77954060000000	14.7772229422692
7.82696940000000	14.7771394788862
7.90069550000000	14.7755837592043
7.95319590000000	14.7746374201491
8.02588460000000	14.7778730333527
8.07135520000000	14.7782850739757
8.11947290000000	14.7797759416631
8.17336240000000	14.7818346830164
8.22483720000000	14.7820356298750
8.29777350000000	14.7814486246965
8.34937560000000	14.7811632880778
8.40069140000000	14.7815567218672
8.47230060000000	14.7811226255208
8.52251060000000	14.7849535850096
8.56865020000000	14.7862926962484
8.62218070000000	14.7889707422270
8.66651990000000	14.7899339095782
8.73878860000000	14.7932954351708
8.79110310000000	14.7923052540799
8.84775960000000	14.7899583092380
8.91904310000000	14.7893863473223
8.97088090000000	14.7885665227339
9.04236380000000	14.7875429475257
9.09296950000000	14.7888269305372
9.16417790000000	14.7888461878756
9.23687210000000	14.7869986879144
9.28315240000000	14.7875054984446
9.32998230000000	14.7871500166642
9.40452220000000	14.7865017252413
9.45409540000000	14.7870464302259
9.52731380000000	14.7877837385770
9.57594700000000	14.7855966590233
9.63020150000000	14.7859507204082
9.70472110000000	14.7850397619815
9.75310700000000	14.7842958695599
9.82772210000000	14.7846037693823
9.87288530000000	14.7841266209126
9.93082120000000	14.7843282363180
9.97674460000000	14.7854263444129
10.0284970000000	14.7832958524710
10.0828089000000	14.7837949708839
10.1328649000000	14.7839680417287
10.2090181000000	14.7831441447983
10.2526942000000	14.7832197276869
10.3260745000000	14.7836737084765
10.3978214000000	14.7835603510440
10.4516214000000	14.7844909769654
10.5038901000000	14.7842194023633
10.5758648000000	14.7844090515178
10.6264299000000	14.7839562681512
10.6977790000000	14.7845264766347
10.7543630000000	14.7838481292738
10.8034700000000	14.7827280843269
10.8776999000000	14.7837172327639
10.9273997000000	14.7806528260874
10.9761203000000	14.7814564708778
11.0516203000000	14.7814710796747
11.0956500000000	14.7808414486395
11.1696673000000	14.7830297073708
11.2235787000000	14.7814069905103
11.2784846000000	14.7839556456388
11.3194659000000	14.7840810857150
11.3709830000000	14.7839849385035
11.4426842000000	14.7846198689237
11.4959322000000	14.7843154904973
11.5713826000000	14.7849051859739
11.6232485000000	14.7849624022110
11.6737394000000	14.7846936201251

 };

    \end{axis}

\end{tikzpicture}}
\\
\subfloat[$\theta_\star=10$ deg]{\import{FigPlot/}{Inputs_10_deg}}
\hfill
\subfloat[$\theta_\star=15$ deg]{\import{FigPlot/}{Inputs_15_deg}}
\caption{Position control performance for the desired configurations $\theta_\star = 10 ~{\rm deg}$ and $15$ deg with $K_{\tt d} = 1$ and $\gamma =0.1$}
\label{fig:ctrl_10_11_deg}
\end{figure}




\begin{table*}[]
\caption{The steady-state errors in the time interval $[4,8]$ s of different scenarios (Unit: deg) }
\label{tab:1}\footnotesize
\centering
\begin{tabular}{l|l|c|c|c|c|c|c|c|c|c}
\specialrule{1pt}{0pt}{0pt}
\multicolumn{2}{l}{}                 & \multicolumn{3}{|c|}{$K_{\tt d}=10$}                                             & \multicolumn{3}{c|}{$K_{\tt d}=1$}                                                                                      & \multicolumn{3}{c}{$K_{\tt d}=0.1$}                                          \\\cmidrule(l){3-11}    
\multicolumn{2}{l|}{\qquad\qquad $\gamma$ }              & {[}$\theta_{\tt min}, \theta_{\tt max}${]} & RMS  & MAE & {[}$\theta_{\tt min}, \theta_{\tt max}${]} & RMS &MAE & {[}$\theta_{\tt min}, \theta_{\tt max}${]} & RMS &MAE \\
   \hline   
\multirow{3}{*}{$5$ deg} & $0.01$ &  [4.8876, 4.9403] & 4.9336 &	0.0664
&
[4.9492, 4.9532]  & 4.9506&	0.0494
& [4.9236, 4.9515] & 4.9413 &	0.0587
      
\\
& $0.1$ &   [4.9538, 4.9687]  & 4.9605	 & 0.0395
 & 
[4.9387, 4.9556]  &4.9505  &	0.0495
 & 
[4.9480, 4.9528] & 4.9503& 	0.0497
                     
\\
& $1$ &  [4.9148, 4.9265] &  4.9570	&0.0430
 &
[4.9148, 4.9265]& 4.9219&	0.0781
 &  [4.9156, 4.9195] &  4.9179 & 	0.0821
                
\\
& $5$ &  [4.9325, 4.9367] & 4.9351 &	0.0649
 & 
[4.9333, 4.9425]& 4.9383&	0.0617
 &  [4.9316, 4.9473] &  4.9408	& 0.0592 \\
\hline
\multirow{3}{*}{$ 10$ deg}  &$0.01$  & [9.5437, 10.2709]& 9.8684	  & 0.1460
& 
[9.8328, 9.9597] & 9.8824 &	0.1176
 &  [9.8458, 9.9596] & 9.8902 &	0.1098
                                 
\\
& $0.1$ &   [9.8320, 9.9048]  & 9.8803	 &0.1197
 &
[9.8328, 9.9597]& 9.8810 &	0.1190
& 
[9.8440, 9.9758]& 9.8991	 &   0.1009
                  \\
& $ 1$ &  [9.9074, 9.9295] & 9.9216	&0.0784 
 &  [9.7814, 9.9044]&  9.8455&	0.1546
& 
[9.8204, 9.8457]& 9.8324 &	0.1676

\\
& $5$ & [9.8706, 9.8857] & 9.8756	&0.1244 & 

[9.8955, 9.8701] & 9.8884 & 	0.1116
& 
[9.8942, 9.8442] & 9.8812 &	0.1188

\\ \hline
\multirow{3}{*}{$ 15$ deg} & $0.01$ & [12.0252, 14.9968] & 14.5881 & 0.4266
&
[14.7067, 14.7753] & 14.7651&	0.2350
 &  [14.7333, 14.7798]    &14.7711	  &0.2289 \\
& $0.1$ &  [14.3324, 14.9968] & 14.7970 & 	0.2044
& [14.5142, 14.7350] & 14.6812 &	0.3190 &
[14.7174, 17.7939] & 14.7710 & 	0.2290
                       \\
& $ 1$ &  [14.6276, 14.9968] &14.8059  &	0.1942
& [14.7499, 14.7794]& 14.7632 &	0.2368
&
[14.7228, 14.7597] &14.7482  &	0.2519
\\
& $5$ &  [14.6863, 14.7590] & 14.7209& 0.2792
& [14.7061, 14.7307] & 14.7236 &	0.2764&
[14.7051, 14.7301]& 14.7186 &    	0.2814\\
                                  \hline
\specialrule{1pt}{0pt}{0pt}
\end{tabular}
\end{table*}

We present experimental results demonstrating closed-loop stiffness regulation around the desired equilibria, which is related to \textbf{P2} in Proposition \ref{prop:control}. While measuring the overall stiffness is generally not manageable, we can test the transverse stiffness as outlined in Item 1) of Section \ref{sec:5}. To this end, we equipped a linear actuator perpendicularly to the tangential direction of the continuum robot at the end-effector, as shown in Fig. \ref{fig:platform}. We repeated the experiments for two different desired equilibria, namely 8 deg and 10 deg. We collected stiffness data using different gains $\gamma$ and plotted the results in Figs. \ref{fig:stiffness_8deg} and \ref{fig:stiffness_10deg}. The results match the equation \eqref{K_T:prop} in Section \ref{sec:5} that the closed-loop stiffness is affine in the control gain $\gamma$. This implied that we were able to identify the parameters $\kappa_1$ and $\kappa_2$, and use them to tune the controller for a prescribed stiffness around the desired configuration.

Finally, to investigate the robustness, we conducted experiments for two supplementary scenarios: one for examining the robot response in the presence of external disturbances, and the other for studying passive environmental interaction via encountering a semi-rigid foam obstruction. These setups are shown in Fig. \ref{fig:robus}, and the corresponding experimental results are presented in Fig. \ref{fig:dist1}. Specifically, Fig. \ref{fig:dist1}(a) provides evidence of the remarkable robustness of the proposed controller {\it vis-\`a-vis} external disturbances, as it effectively made the system back to its desired configuration after the vanishing of disturbances. We also note that a larger gain $\gamma$ yielded a shorter recovery response. In the encounter experiments, as shown in Figs. \ref{fig:robus}(b) and \ref{fig:dist1}(b), a larger value of $\gamma=5$ ensured that the robot passed the foam obstruction, resulting in a recorded trajectory that was approximately \emph{monotonic} over time -- indicating the robot's stiff behaviour. In contrast, using the smaller value of $\gamma=0.1$ did not yield this effect, causing the robot to exhibit some deformation when encountering the foam and displaying real-time position fluctuations in the transient stage due to its softness. 

Note that although various control approaches have been proposed for continuum robotics, their suitability for achieving simultaneous control of position and stiffness in underactuated robots is limited, particularly considering the variations in actuation mechanisms across different continuum robotic platforms. Given the absence of applicable control strategies in the existing literature, this study did not provide experimental comparisons to previous works. However, our objective is to lay the groundwork for future exploration and development of experimental studies in this area.



\begin{figure*}
\centering
\begin{minipage}[t]{0.3\linewidth}
{\import{FigPlot/}{LargeGamma.tex}}
\caption{A large $\gamma =10$ leads to unstable performance}
\label{fig:LargeGamma}
\end{minipage}
~~
\begin{minipage}[t]{0.3\linewidth}
\centering
{\begin{tikzpicture}

\begin{axis}[%
      width  = 0.8 \linewidth,
      height = 0.4 \linewidth,
at={(0.758in,0.481in)},
ymajorgrids=true,
grid style=dashed,
legend pos=north west,
scale only axis,
label style={font=\footnotesize},
tick label style={font=\footnotesize},
xmin=0,
xmax=10,
xlabel={The gain $\gamma$},
ymin=0.1,
ymax=0.6,
ylabel={stiffness [N/mm]},
axis background/.style={fill=white},
]

\addplot[domain = 0:10, color =  darkgray!50,fill opacity=0.5, line width = 2pt] {0.0321*x  +0.2099 };

\addplot[only marks, mark=x, mark options={}, mark size=2pt, color=color2, fill=red, forget plot] table[row sep=crcr]{%
x	y\\
0.1	0.19777071055\\
1	0.27026931267\\
2	0.25920545014\\
3	0.31217312812\\
4	0.40808804911\\
5	0.3938939039\\
6	0.47122718447\\
7	0.37837193828\\
8	0.48497414644\\
9	0.5274997035\\
10	0.44892686237\\
};
\end{axis}


\end{tikzpicture}
\caption{Stiffness regulation with the desired configuration $\theta_\star =$ 8 deg ($\kappa_1 = 0.0288$ and $\kappa_2 = 0.2332$)
}
\label{fig:stiffness_8deg}
\end{minipage}
~~
\begin{minipage}[t]{0.3\linewidth}
\centering
{\begin{tikzpicture}
\begin{axis}[%
      ymajorgrids=true,
      grid style=dashed,
      width  = 0.8 \linewidth,
      height = 0.4 \linewidth,
at={(0.758in,0.481in)},
ymajorgrids=true,
grid style=dashed,
legend pos=north west,
scale only axis,
label style={font=\footnotesize},
tick label style={font=\footnotesize},
xmin=0,
xmax=10,
xlabel={The gain $\gamma$},
ymin=0.1,
ymax=1.5,
ylabel={stiffness [N/mm]},
axis background/.style={fill=white},
]

\addplot[domain = 0:10, color =  darkgray!50,fill opacity=0.5, line width = 2pt] {0.0805*x  + 0.4305 };

\addplot[only marks, mark=x, mark options={}, mark size=2pt, color=color2, fill=red, forget plot] table[row sep=crcr]{%
x	y\\
10	1.42454339895000\\
9	1.31032142374000\\
8	1.03157524281000\\
7	0.880266478630000\\
6	0.847478925020000\\
5	0.624648301060000\\
4	0.606672711610000\\
3	0.650014181950000\\
2	0.478064380850000\\
1	0.648747162770000\\
0.1	0.667930931500000\\
};
\end{axis}


\end{tikzpicture}
\caption{Stiffness regulation with the desired configuration $\theta_\star =$ 10 deg ($\kappa_1 = 0.0805$ and $\kappa_2 = 0.4305$)}
\label{fig:stiffness_10deg}
\end{minipage}
\end{figure*}

\begin{figure}[!htp]
    \centering
    \includegraphics[width = .99\linewidth]{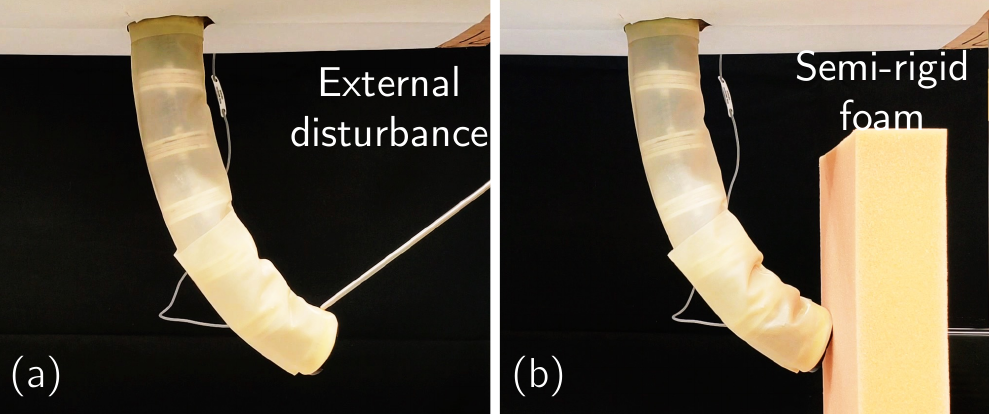}
    \caption{Experiments to evaluate the robustness of the closed-loop robotic system: (a) An external disturbance was added to the distal point of the robot; (b) The robot was controlled to encounter a semi-rigid foam obstruction.}
    \label{fig:robus}
\end{figure}


\begin{figure*}[!htp]
 \centering
{\import{FigPlot/}{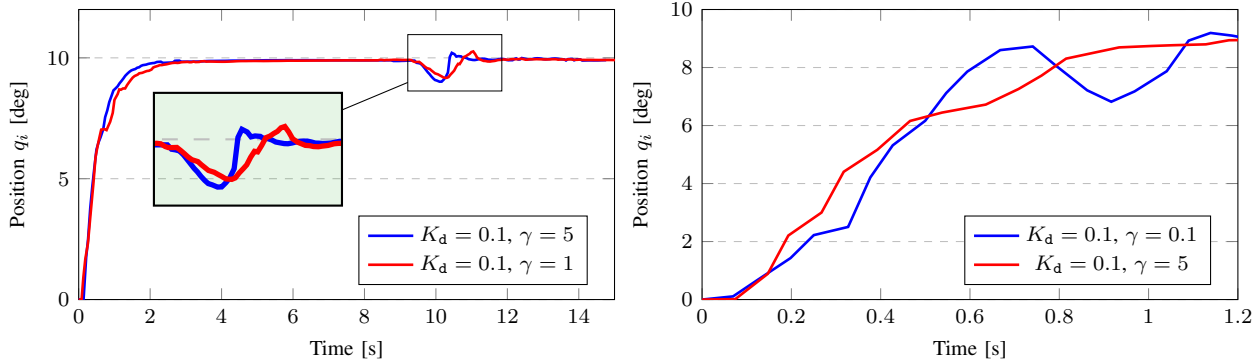}}
\caption{Results of the robustness evaluation experiments: (Left) Applying external disturbances; (Right) Encountering a semi-rigid foam obstruction.}
\label{fig:dist1}
\end{figure*}

\section{Concluding Remarks and Future Works}
\label{sec:7}

In this paper, we studied the modelling and control of underactuated antagonistic tendon-driven continuum robots. The proposed model possesses a configuration-dependent input matrix, which effectively captures the mechanism for open-loop stiffening through cable tension regulation. We have thoroughly analysed the assignable equilibria set and devised a potential shaping feedback controller that enables simultaneous position-and-stiffness regulation while adhering to the non-negative input constraint. To the best of the authors' knowledge, this is the first design for such a problem. The experimental results on the robotic platform OctRobot-I demonstrate the effectiveness and reliability of the proposed approach. Our approach relies on only a few intrinsic parameters of the model, rather than depending on the complete dynamical model. This grants it remarkable robustness against modelling errors.

Along the research line, the following problems are considered as potential future works:

\begin{itemize}
\item[1)] As per Proposition \ref{prop:control}, we impose $\gamma<\alpha_2 $ to guarantee the convexity of the desired potential energy function $U_{\tt d}$. Note that the parameter $\alpha_2$ is an intrinsic characteristic of the continuum robot, and consequently the range of choices for the gain $\gamma$ is limited, which restricts our ability to control the stiffness in a relatively narrow interval. Our experimental results support this assertion. To enlarge the closed-loop stiffness range, a potential way is to make full use of \textit{jamming} in the continuum robot via changing the compression level of jamming flaps \cite{YIFAN23}.

\item[2)] Exploring alternative desired potential energy functions may offer a promising way to enhance closed-loop performance. In addition, applying state-of-the-art energy shaping methodologies, \textit{e.g.} \cite{YIetalAUT,ROMetalSCL}, could prove valuable for solving more complex tasks, such as path following and robust simultaneous position-and-stiffness control.

\item[3)] Similar to the recent works \cite{FRAGAR,DELetal}, our approach is developed for the robot with one section used in planar case, which is quite simple for continuum robots being utilised in the real word. It is underway to extend to multiple sections in the spatial case. A promising way is to change the mechanical structure and actuate the overall robot in a sagittal plane for each section to prevent sections from twisting about their neutral axis \cite{MARRUS}. Then, we may use the proposed approach to control sections separately in different planes, and the manipulator is capable of three-dimensional Cartesian positioning.

\item[4)] Our proposed approach does not consider the \textit{actuation dynamics}, opting instead to utilise a high-gain design to enforce time-scale separation and disregard these dynamics. It would be advantageous to take the actuation dynamics in to the controller synthesis by incorporating advanced robustification techniques \cite{ORTetalSCL}. 

\item[5)] In this paper, the proposed method is applicable only to cases where the signs of curvatures in the PCC structure remain unchanged. Extending this approach to more complex configurations, such as the S-shape illustrated in Fig. \ref{fig:new}(b), would be a valuable direction for future work.

\end{itemize}

\section*{Acknowledgement}

The authors are grateful to Dr. Liang Zhao and Tiancheng Li from UTS for their support during experiments, and to the Associate Editor and three anonymous reviewers for their thoughtful comments.
  
\appendix

\section*{A. Supplementary Details on Modelling}

We provide additional details on the model, in particular the potential energy functions of the continuum robotic platform.

\subsubsection{Gravitational energy} In order to approximate the gravitational potential energy $U_{\tt G}$, we make the assumption that the mass is lumped at the centre of each link with the link lengths $l_i>0$ and the masses $m_{i}>0$ for $i\in \caln$. From some basic geometric relations, we have
\begin{equation}
    U_{\tt G}(\bfq) = \sum_{i \in \caln} {l_i m_i \over 2} \left[ \cos(q_0 +\ldots+  q_{i-1})  - \cos(q_0+ \ldots +q_i) \right]
\end{equation}
with the parameter $q_0 =0$, which satisfies $U_{\tt G}(\mathbf{0}_n) = 0$.

In addition, we impose Assumption \ref{ass:uniform} about the mass and length, under which the potential energy becomes
\begin{equation}
\begin{aligned}
    U_{\tt G}(\bfq) &~ = ~\alpha_1 (1-\cos(q_\Sigma))
\end{aligned}
\end{equation}
with
$
    q_\Sigma ~ :=~ \sum_{i\in \caln} \bfq_i,
$
and some coefficient $\alpha_1>0$.

\subsubsection{Elastic energy} In the designed continuum robot, each spine segment contains two pair of helical compression springs. Since we limit ourselves to the 2-dimensional case, we only consider a pair of springs as illustrated in Fig. \ref{fig:elastic}, and make the assumption below.

\begin{assumption}\rm
The deformable part of the continuum manipulator consists of a fixed number of segments with constant curvature and differentiable curves everywhere \cite{DELetal}.
\qed
\end{assumption}


\begin{figure}
    \centering
    \includegraphics[width = 0.48\linewidth]{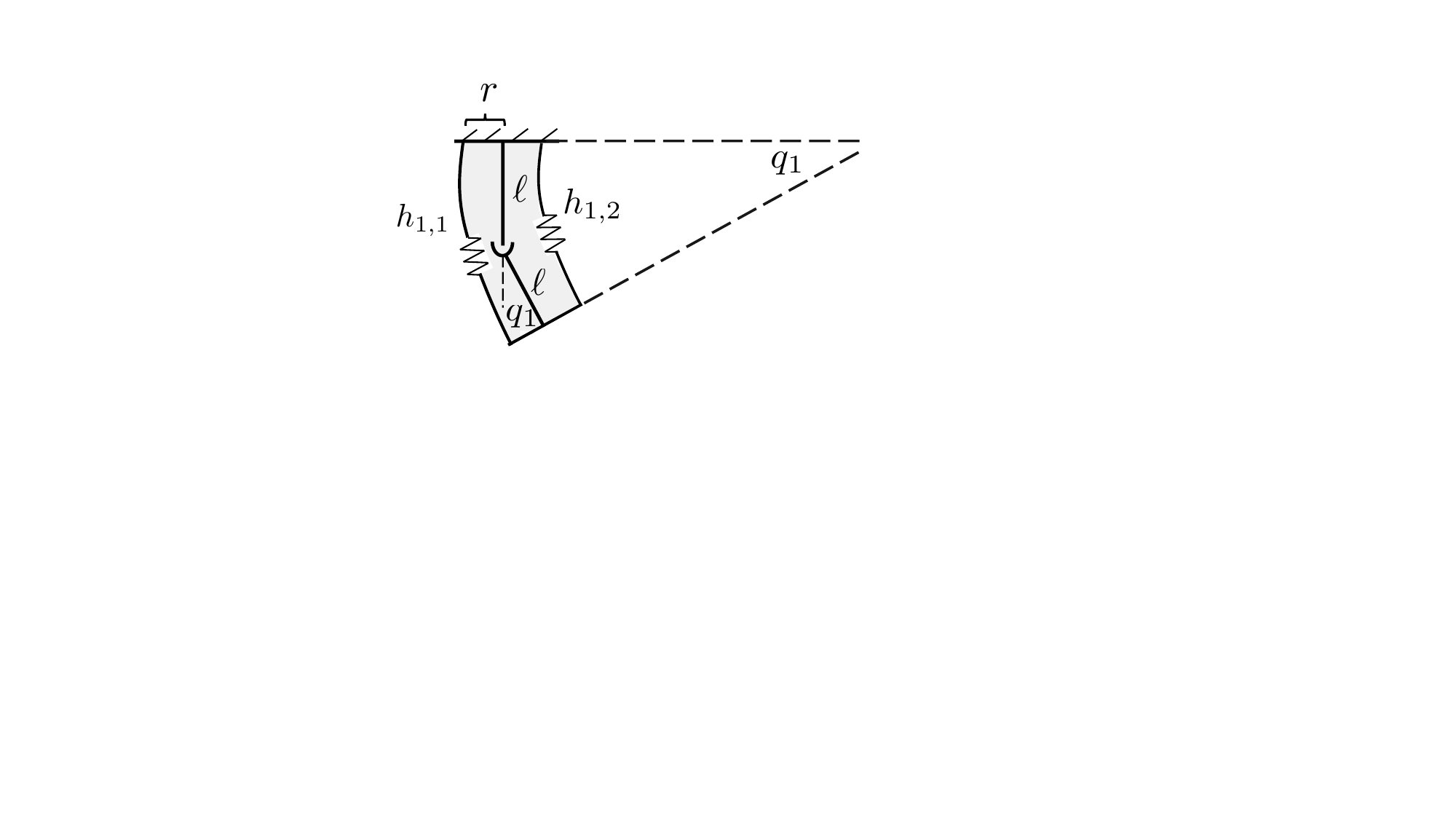}
    \caption{Illustration of the elastic potential energy in springs}
    \label{fig:elastic}
\end{figure}

In terms of the above assumptions, the boundary lengths in the $i$-th segment are given by
$$
\begin{aligned}
h_{i,1} = q_i\Big[ \ell \cot\Big({q_i \over 2}\Big) + r \Big], \quad
h_{i,2} = q_i\Big[ \ell \cot\Big({q_i \over 2}\Big) - r \Big].
\end{aligned}
$$
Note that the above functions are well-posed when $q_i \to 0$, \emph{i.e.},
$
\lim_{q_i \to 0} h_{i,1} =2\ell, ~\lim_{q_i \to 0} h_{i,2} =2\ell.
$
Hence, the elastic energy can be modelled as
\begin{equation}
\label{UE:real}
    U_{\tt E} = \sum_{i \in \caln} {k_i} \left[ q_i^2\Big(\ell^2 \cos\Big({q_i \over 2}\Big)^2 + r^2\Big) - \ell^2 \right] + k_i' q_i^2,
\end{equation}
in which $k_i>0$ and $k_i'>0$ are some elastic coefficients to characterise the elastic energies caused by the elongation and bending of springs.

In the proposed rigid-link model, each configuration variable $q_i$ would generally be small, \emph{i.e.}, $q_i\in [- {\pi\over 12}, {\pi\over 12}]$, for which the term $\cos({q_i \over 2})^2$ takes values within $[0.983,1]$. Then, it is reasonable to make the following quadratic assumption to approximate the highly nonlinear function in \eqref{UE:real}.

\begin{assumption}\rm \label{ass:Ue}
The elastic energy $U_{\tt E}$ has the quadratic form
\begin{equation}
	U_{\tt E}(\bfq) = \hal \bfq^\top \Lambda \bfq + U_0
\end{equation}
with a constant coefficient $U_0$ and a diagonalisable matrix $\Lambda:=\diag(\alpha_2, \ldots \alpha_2) \succ 0$. \qed
\end{assumption}

\subsubsection{Inertia and kinematic energy} The analytic form of the inertia matrix $M(\bfq)$ can be obtained following the standard way for rigid-link robotic models. The interested reader may find detailed procedures in \cite[Chapter 8.4]{LYNPAR}. Then, the kinematic energy is given by $\hal \bfp M(\bfq)^{-1} \bfp$.

Note that the specific formulation of the inertia $M(\bfq)$ is not involved in the controller design. This makes the closed loop relatively robust, and it is unnecessary to obtain the analytic formulation of $M(\bfq)$ for experimental implementation.

\subsubsection{Input matrix} Let us use a single link to discuss the modelling of the input matrix $G(\bfq)$. We assume that the tension is uniformly distributed along the cables, and lumped forces are along the tangential directions at the middle points of the constant-curvature outline. Then, the lever's fulcrums $\call_1$ and $\call_2$ in Fig. \ref{fig:1-1} are given by 
$
\call_1 = r - {\ell \over 2} \sin(q_i)
$ and $
\call_2 = r+ {\ell \over 2}\sin(q_i).
$
According to some basic geometric relations, we can obtain the $i$-th row of the input matrix as
$$
\begmat{G_{i,1} & G_{i, 2}} = k_u 
\begmat{r - {\ell \over 2} \sin(q_i)
&
- r - {\ell \over 2} \sin(q_i)}
,
$$
in which $k_u >0$ is a coefficient to denote the loss of inputs at the virtual joint, subject to friction and viscoelastic effects \cite{PALetal}. Correspondingly, $g_0 = k_u r$ and $g_1 = - {1\over 2}\ell k_u \sin(q_i)$. This model verifies the key assumption \eqref{eq:diag} with $\partial G_{i,j}/\partial q_i(\bfq )<0$.

\bibliographystyle{IEEEtranS}
\bibliography{reference}

\begin{thebibliography}{10}
\providecommand{\url}[1]{#1}
\csname url@samestyle\endcsname
\providecommand{\newblock}{\relax}
\providecommand{\bibinfo}[2]{#2}
\providecommand{\BIBentrySTDinterwordspacing}{\spaceskip=0pt\relax}
\providecommand{\BIBentryALTinterwordstretchfactor}{4}
\providecommand{\BIBentryALTinterwordspacing}{\spaceskip=\fontdimen2\font plus
\BIBentryALTinterwordstretchfactor\fontdimen3\font minus
  \fontdimen4\font\relax}
\providecommand{\BIBforeignlanguage}[2]{{%
\expandafter\ifx\csname l@#1\endcsname\relax
\typeout{** WARNING: IEEEtranS.bst: No hyphenation pattern has been}%
\typeout{** loaded for the language `#1'. Using the pattern for}%
\typeout{** the default language instead.}%
\else
\language=\csname l@#1\endcsname
\fi
#2}}
\providecommand{\BIBdecl}{\relax}
\BIBdecl

\bibitem{ARMetal}
C.~Armanini, F.~Boyer, A.~T. Mathew, C.~Duriez, and F.~Renda, ``Soft robots
  modeling: A structured overview,'' \emph{IEEE Trans. Robot.}, vol.~39, no.~3,
  pp. 1728--1748, 2023.

\bibitem{bajo2016hybrid}
A.~Bajo and N.~Simaan, ``Hybrid motion/force control of multi-backbone
  continuum robots,'' \emph{Int. J. Robot. Res.}, vol.~35, no.~4, pp. 422--434,
  2016.

\bibitem{BERetal}
J.~M. Bern, L.~Z. Ya{\~n}ez, E.~Sologuren, and D.~Rus, ``Contact-rich
  soft-rigid robots inspired by push puppets,'' in \emph{Proc. IEEE Int. Conf.
  Soft Robot.}\hskip 1em plus 0.5em minus 0.4em\relax IEEE, 2022, pp. 607--613.

\bibitem{BESetal}
C.~M. Best, L.~Rupert, and M.~D. Killpack, ``Comparing model-based control
  methods for simultaneous stiffness and position control of inflatable soft
  robots,'' \emph{Int. J. Robot. Res.}, vol.~40, pp. 470--493, 2021.

\bibitem{BIEetal}
T.~M. Bieze, F.~Largilliere, A.~Kruszewski, Z.~Zhang, R.~Merzouki, and
  C.~Duriez, ``Finite element method-based kinematics and closed-loop control
  of soft, continuum manipulators,'' \emph{Soft Robot.}, vol.~5, no.~3, pp.
  348--364, 2018.

\bibitem{BRAetal}
D.~Braganza, D.~M. Dawson, I.~D. Walker, and N.~Nath, ``A neural network
  controller for continuum robots,'' \emph{IEEE Trans. Robot.}, vol.~23, no.~6,
  pp. 1270--1277, 2007.

\bibitem{BREMCC}
T.~Bretl and Z.~McCarthy, ``Quasi-static manipulation of a {Kirchhoff} elastic
  rod based on a geometric analysis of equilibrium configurations,'' \emph{Int.
  J. Robot. Res.}, vol.~33, no.~1, pp. 48--68, 2014.

\bibitem{BRUetal}
D.~Bruder, X.~Fu, R.~B. Gillespie, C.~D. Remy, and R.~Vasudevan, ``Data-driven
  control of soft robots using {Koopman} operator theory,'' \emph{IEEE Trans.
  Robot.}, vol.~37, no.~3, pp. 948--961, 2020.

\bibitem{CAAetal}
B.~Caasenbrood, A.~Pogromsky, and H.~Nijmeijer, ``Energy-shaping controllers
  for soft robot manipulators through port-{Hamiltonian} {Cosserat} models,''
  \emph{SN Comput. Sci.}, vol.~3, no.~6, p. 494, 2022.

\bibitem{CHAetalRoyal}
H.-S. Chang, U.~Halder, C.-H. Shih, N.~Naughton, M.~Gazzola, and P.~G. Mehta,
  ``Energy-shaping control of a muscular octopus arm moving in three
  dimensions,'' \emph{Proc. R. Soc. A}, vol. 479, no. 2270, 2023, {Art.} no.
  20220593.

\bibitem{CHAetalCDC}
H.-S. Chang, U.~Halder, C.-H. Shih, A.~Tekinalp, T.~Parthasarathy, E.~Gribkova,
  G.~Chowdhary, R.~Gillette, M.~Gazzola, and P.~G. Mehta, ``Energy shaping
  control of a {CyberOctopus} soft arm,'' in \emph{IEEE Conf. Decis.
  Control}.\hskip 1em plus 0.5em minus 0.4em\relax IEEE, 2020, pp. 3913--3920.

\bibitem{CHEMISetal}
W.-H. Chen, S.~Misra, Y.~Gao, Y.-J. Lee, D.~E. Koditschek, S.~Yang, and C.~R.
  Sung, ``A programmably compliant origami mechanism for dynamically dexterous
  robots,'' \emph{IEEE Robot. Autom. Lett.}, vol.~5, no.~2, pp. 2131--2137,
  2020.

\bibitem{chen2024chained}
Y.~Chen, S.~Yao, M.~Q.-H. Meng, and L.~Liu, ``Chained spatial beam constraint
  model: A general kinetostatic model for tendon-driven continuum robots,''
  \emph{IEEE/ASME Trans. Mechatron.}, 2024.

\bibitem{DELetalSURVEY}
C.~Della~Santina, C.~Duriez, and D.~Rus, ``Model-based control of soft robots:
  A survey of the state of the art and open challenges,'' \emph{IEEE Control
  Syst. Mag.}, vol.~43, no.~3, pp. 30--65, 2023.

\bibitem{DELetal}
C.~Della~Santina, R.~K. Katzschmann, A.~Bicchi, and D.~Rus, ``Model-based
  dynamic feedback control of a planar soft robot: Trajectory tracking and
  interaction with the environment,'' \emph{Int. J. Robot. Res.}, vol.~39,
  no.~4, pp. 490--513, 2020.

\bibitem{DEUetal}
B.~Deutschmann, A.~Dietrich, and C.~Ott, ``Position control of an underactuated
  continuum mechanism using a reduced nonlinear model,'' in \emph{Proc. IEEE
  Conf. Decis. Control}.\hskip 1em plus 0.5em minus 0.4em\relax IEEE, 2017, pp.
  5223--5230.

\bibitem{ENGetal}
Y.~Engel, P.~Szabo, and D.~Volkinshtein, ``Learning to control an octopus arm
  with {Gaussian} process temporal difference methods,'' in \emph{Adv. Neural.
  Inf. Process. Syst.}, vol.~18, 2005.

\bibitem{FALetal}
V.~Falkenhahn, A.~Hildebrandt, R.~Neumann, and O.~Sawodny, ``Dynamic control of
  the bionic handling assistant,'' \emph{IEEE/ASME Trans. Mechatron.}, vol.~22,
  no.~1, pp. 6--17, 2016.

\bibitem{FANetal}
F.~Fan, B.~Yi, D.~Rye, G.~Shi, and I.~R. Manchester, ``Learning stable
  {Koopman} embeddings,'' in \emph{Amer. Control Conf.}, 2022, pp. 2742--2747.

\bibitem{FANLIU}
Y.~Fan, D.~Liu, and L.~Ye, ``A novel continuum robot with stiffness variation
  capability using layer jamming: Design, modeling, and validation,''
  \emph{IEEE Access}, vol.~10, pp. 130\,253--130\,263, 2022.

\bibitem{FANetal2}
Y.~Fan, B.~Yi, and D.~Liu, ``An overview of stiffening approaches for continuum
  robots,'' \emph{Robot. Comput. Integr. Manuf.}, vol.~90, 2024, art. no.
  102811.

\bibitem{FATetal}
J.~Fathi, T.~J.~O. Vrielink, M.~S. Runciman, and G.~P. Mylonas, ``A deployable
  soft robotic arm with stiffness modulation for assistive living
  applications,'' in \emph{IEEE Int. Conf. Robot. Autom.}, 2019, pp.
  1479--1485.

\bibitem{FRAGAR}
E.~Franco and A.~Garriga-Casanovas, ``Energy-shaping control of soft continuum
  manipulators with in-plane disturbances,'' \emph{Int. J. Robot. Res.},
  vol.~40, no.~1, pp. 236--255, 2021.

\bibitem{JONWAL}
B.~A. Jones and I.~D. Walker, ``Kinematics for multisection continuum robots,''
  \emph{IEEE Trans. Robot.}, vol.~22, no.~1, pp. 43--55, 2006.

\bibitem{KAPetal}
A.~D. Kapadia, K.~E. Fry, and I.~D. Walker, ``Empirical investigation of
  closed-loop control of extensible continuum manipulators,'' in \emph{Proc.
  IEEE/RSJ Int. Conf. Intell. Robots Syst.}\hskip 1em plus 0.5em minus
  0.4em\relax IEEE, 2014, pp. 329--335.

\bibitem{KEPetal}
M.~Keppler, D.~Lakatos, C.~Ott, and A.~Albu-Sch{\"a}ffer, ``Elastic structure
  preserving ({ESP}) control for compliantly actuated robots,'' \emph{IEEE
  Trans. Robot.}, vol.~34, no.~2, pp. 317--335, 2018.

\bibitem{KHA}
H.~K. Khalil, \emph{Nonlinear Systems}, 3rd~ed.\hskip 1em plus 0.5em minus
  0.4em\relax Patience Hall, 2002.

\bibitem{KIMetal}
M.~J. Kim, A.~Werner, F.~Loeffl, and C.~Ott, ``Passive impedance control of
  robots with viscoelastic joints via inner-loop torque control,'' \emph{IEEE
  Trans. Robot.}, vol.~38, no.~1, pp. 584--598, 2021.

\bibitem{kim2013stiffness}
Y.-J. Kim, S.~Cheng, S.~Kim, and K.~Iagnemma, ``A stiffness-adjustable
  hyperredundant manipulator using a variable neutral-line mechanism for
  minimally invasive surgery,'' \emph{IEEE Trans Robot.}, vol.~30, no.~2, pp.
  382--395, 2014.

\bibitem{LIUetal2020}
Q.~Liu, X.~Gu, N.~Tan, and H.~Ren, ``Soft robotic gripper driven by flexible
  shafts for simultaneous grasping and in-hand cap manipulation,'' \emph{IEEE
  Trans. Autom. Sci. Eng.}, vol.~18, no.~3, pp. 1134--1143, 2020.

\bibitem{LYNPAR}
K.~M. Lynch and F.~C. Park, \emph{Modern Robotics}, 2nd~ed.\hskip 1em plus
  0.5em minus 0.4em\relax Cambridge University Press, 2019.

\bibitem{MAHDUP}
M.~Mahvash and P.~E. Dupont, ``Stiffness control of surgical continuum
  manipulators,'' \emph{IEEE Trans. Robot.}, vol.~27, no.~2, pp. 334--345,
  2011.

\bibitem{MARRUS}
A.~D. Marchese and D.~Rus, ``Design, kinematics, and control of a soft spatial
  fluidic elastomer manipulator,'' \emph{Int. J. Robot. Res.}, vol.~35, no.~7,
  pp. 840--869, 2016.

\bibitem{MENetal}
R.~Mengacci, F.~Angelini, M.~G. Catalano, G.~Grioli, A.~Bicchi, and
  M.~Garabini, ``On the motion/stiffness decoupling property of articulated
  soft robots with application to model-free torque iterative learning
  control,'' \emph{Int. J. Robot. Res.}, vol.~40, no.~1, pp. 348--374, 2021.

\bibitem{MOetal}
H.~Mo \emph{et~al.}, ``Automated 3-d deformation of a soft object using a
  continuum robot,'' \emph{IEEE Trans. Autom. Sci. Eng.}, vol.~18, no.~4, pp.
  2076--2086, 2020.

\bibitem{MOetal2024}
------, ``Data-efficient learning control of continuum robots in constrained
  environments,'' \emph{IEEE Trans. Autom. Sci. Eng.}, pp. 1--12, 2024.

\bibitem{ORTetal}
R.~Ortega, M.~W. Spong, F.~G{\'o}mez-Estern, and G.~Blankenstein,
  ``Stabilization of a class of underactuated mechanical systems via
  interconnection and damping assignment,'' \emph{IEEE Trans. Autom. Control},
  vol.~47, no.~8, pp. 1218--1233, 2002.

\bibitem{ORTetalAUT}
R.~Ortega, A.~van~der Schaft, B.~Maschke, and G.~Escobar, ``Interconnection and
  damping assignment passivity-based control of port-controlled hamiltonian
  systems,'' \emph{Automatica}, vol.~38, no.~4, pp. 585--596, 2002.

\bibitem{ORTetal01}
R.~Ortega, A.~J. Van Der~Schaft, I.~Mareels, and B.~Maschke, ``Putting energy
  back in control,'' \emph{IEEE Control Syst. Mag.}, vol.~21, no.~2, pp.
  18--33, 2001.

\bibitem{ORTetalSCL}
R.~Ortega, B.~Yi, and J.~G. Romero, ``Robustification of nonlinear control
  systems vis-{\`a}-vis actuator dynamics: An immersion and invariance
  approach,'' \emph{Syst. Control Lett.}, vol. 146, 2020, {Art.} no. 104811.

\bibitem{PALetal}
G.~Palli, G.~Borghesan, and C.~Melchiorri, ``Modeling, identification, and
  control of tendon-based actuation systems,'' \emph{IEEE Trans. Robot.},
  vol.~28, no.~2, pp. 277--290, 2011.

\bibitem{RENetal}
F.~Renda, M.~Giorelli, M.~Calisti, M.~Cianchetti, and C.~Laschi, ``Dynamic
  model of a multibending soft robot arm driven by cables,'' \emph{IEEE Trans.
  Robot.}, vol.~30, no.~5, pp. 1109--1122, 2014.

\bibitem{ROMetalSCL}
J.~G. Romero, A.~Donaire, and R.~Ortega, ``Robust energy shaping control of
  mechanical systems,'' \emph{Syst. Control Lett.}, vol.~62, no.~9, pp.
  770--780, 2013.

\bibitem{SAL}
J.~K. Salisbury, ``Active stiffness control of a manipulator in {C}artesian
  coordinates,'' in \emph{IEEE Conf. Decis. Control}.\hskip 1em plus 0.5em
  minus 0.4em\relax IEEE, 1980, pp. 95--100.

\bibitem{TAKARI}
M.~Takegaki and S.~Arimoto, ``A new feedback method for dynamic control of
  manipulators,'' \emph{ASME J. Dyn. Syst. Meas. Control}, vol. 102, pp.
  119--125, 1981.

\bibitem{TANetal2024}
N.~Tan \emph{et~al.}, ``Discrete quad neural dynamics for inverse-free control
  of model-unavailable continuum robots,'' \emph{IEEE Trans. Ind. Inform.}, pp.
  1--11, 2024.

\bibitem{TANetal2023TII}
N.~Tan, P.~Yu, Z.~Zhong, and Y.~Zhang, ``Data-driven control for continuum
  robots based on discrete zeroing neural networks,'' \emph{IEEE Trans. Ind.
  Inform.}, vol.~19, no.~5, pp. 7088--7098, 2023.

\bibitem{THUetal}
T.~G. Thuruthel, Y.~Ansari, E.~Falotico, and C.~Laschi, ``Control strategies
  for soft robotic manipulators: A survey,'' \emph{Soft Robot.}, vol.~5, no.~2,
  pp. 149--163, 2018.

\bibitem{THUetalTRO}
T.~G. Thuruthel, E.~Falotico, F.~Renda, and C.~Laschi, ``Model-based
  reinforcement learning for closed-loop dynamic control of soft robotic
  manipulators,'' \emph{IEEE Trans. Robot.}, vol.~35, no.~1, pp. 124--134,
  2018.

\bibitem{TILRUC}
J.~Till and D.~C. Rucker, ``Elastic stability of {Cosserat} rods and parallel
  continuum robots,'' \emph{IEEE Trans. Robot.}, vol.~33, pp. 718--733, 2017.

\bibitem{TSUetal}
H.~Tsukamoto, S.-J. Chung, and J.-J.~E. Slotine, ``Contraction theory for
  nonlinear stability analysis and learning-based control: A tutorial
  overview,'' \emph{Annu. Rev. Control}, vol.~52, pp. 135--169, 2021.

\bibitem{TUMetal}
M.~Tummers, V.~Lebastard, F.~Boyer, J.~Troccaz, B.~Rosa, and M.~T. Chikhaoui,
  ``Cosserat rod modeling of continuum robots from {Newtonian and Lagrangian}
  perspectives,'' \emph{IEEE Trans. Robot.}, vol.~39, no.~3, pp. 2360--2378,
  2023.

\bibitem{VANbook}
A.~van~der Schaft, \emph{$L_2$-Gain and Passivity Techniques in Nonlinear
  Control}.\hskip 1em plus 0.5em minus 0.4em\relax Springer, 2000.

\bibitem{WANCHO}
J.~Wang and A.~Chortos, ``Control strategies for soft robot systems,''
  \emph{Adv. Intell. Syst}, vol.~4, no.~5, 2022, {Art}. no. 2100165.

\bibitem{WANZHAetal}
S.~Wang, R.~Zhang, D.~A. Haggerty, N.~D. Naclerio, and E.~W. Hawkes, ``A
  dexterous tip-extending robot with variable-length shape-locking,'' in
  \emph{Proc. IEEE Int. Conf. Robot. Autom.}\hskip 1em plus 0.5em minus
  0.4em\relax IEEE, 2020, pp. 9035--9041.

\bibitem{XIAetal2023}
P.~Xiang \emph{et~al.}, ``Learning-based high-precision force estimation and
  compliant control for small-scale continuum robot,'' \emph{IEEE Trans. Autom.
  Sci. Eng.}, 2023.

\bibitem{YIFAN23}
B.~Yi, Y.~Fan, and D.~Liu, ``A novel model for layer jamming-based continuum
  robots,'' in \emph{Proc. IEEE Int. Conf. Robot. Autom.}\hskip 1em plus 0.5em
  minus 0.4em\relax IEEE, 2024, pp. 12\,727--12\,733.

\bibitem{YIetalKOOPMAN}
B.~Yi and I.~R. Manchester, ``On the equivalence of contraction and {Koopman}
  approaches for nonlinear stability and control,'' \emph{IEEE Trans. Autom.
  Control}, pp. 1--16, 2023, early access.

\bibitem{YIetalAUT}
B.~Yi, R.~Ortega, D.~Wu, and W.~Zhang, ``Orbital stabilization of nonlinear
  systems via mexican sombrero energy shaping and pumping-and-damping
  injection,'' \emph{Automatica}, vol. 112, 2020, {Art. no.} 108661.

\bibitem{Zhangetal}
J.~Zhang, Y.~Li, Z.~Kan, Q.~Yuan, H.~Rajabi, Z.~Wu, H.~Peng, and W.~Jianing,
  ``A preprogrammable continuum robot inspired by elephant trunk for dexterous
  manipulation,'' \emph{Soft Robot.}, vol.~10, no.~3, pp. 636--646, 2023.

\bibitem{zhao2024controller}
Q.~Zhao, S.~Wang, J.~Hu, H.~Liu, and H.~K. Chu, ``Controller design for a soft
  continuum robot with concurrent continuous rotation,'' \emph{IEEE/ASME Trans.
  Mechatron.}, 2024.

\end{thebibliography}

\begin{biography}
[{\includegraphics[width=1.1in,height=1.25in,clip,keepaspectratio,trim={.4in .8in .4in 0.4in}]{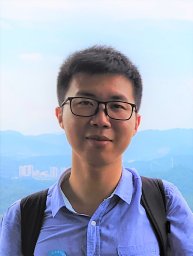}}]{Bowen Yi} (Member, IEEE) obtained his Ph.D. degree in Control Engineering from Shanghai Jiao Tong University, China in 2019. 

Between 2017 and 2019, he was a Visiting Student at Laboratoire des Signaux et Syst\`emes, CNRS-CentraleSup\'elec, Gif-sur-Yvette, France. He has held postdoctoral positions at the Australian Centre for Robotics (ACFR), The University of Sydney, NSW, Australia (2019 -- 2022), and the Robotics Institute, University of Technology Sydney, NSW, Australia (Sept. 2022 -- 2023). Currently, he is an Assistant Professor in the Department of Electrical Engineering, Polytechnique Montreal and is affiliated with GERAD, Queb\'ec, Canada. His research interests involve nonlinear systems (estimation, control, and learning) and robotics. Dr. Yi was the recipient of the 2019 CCTA Best Student Paper Award from the IEEE Control Systems Society, and the Australian Research Council (ARC) Discovery Early Career Researcher Award (DECRA).
\end{biography}

\begin{biography}
[{\includegraphics[width=1.1in,height=1.25in,clip,keepaspectratio,trim={0 0.2in 0 0}]{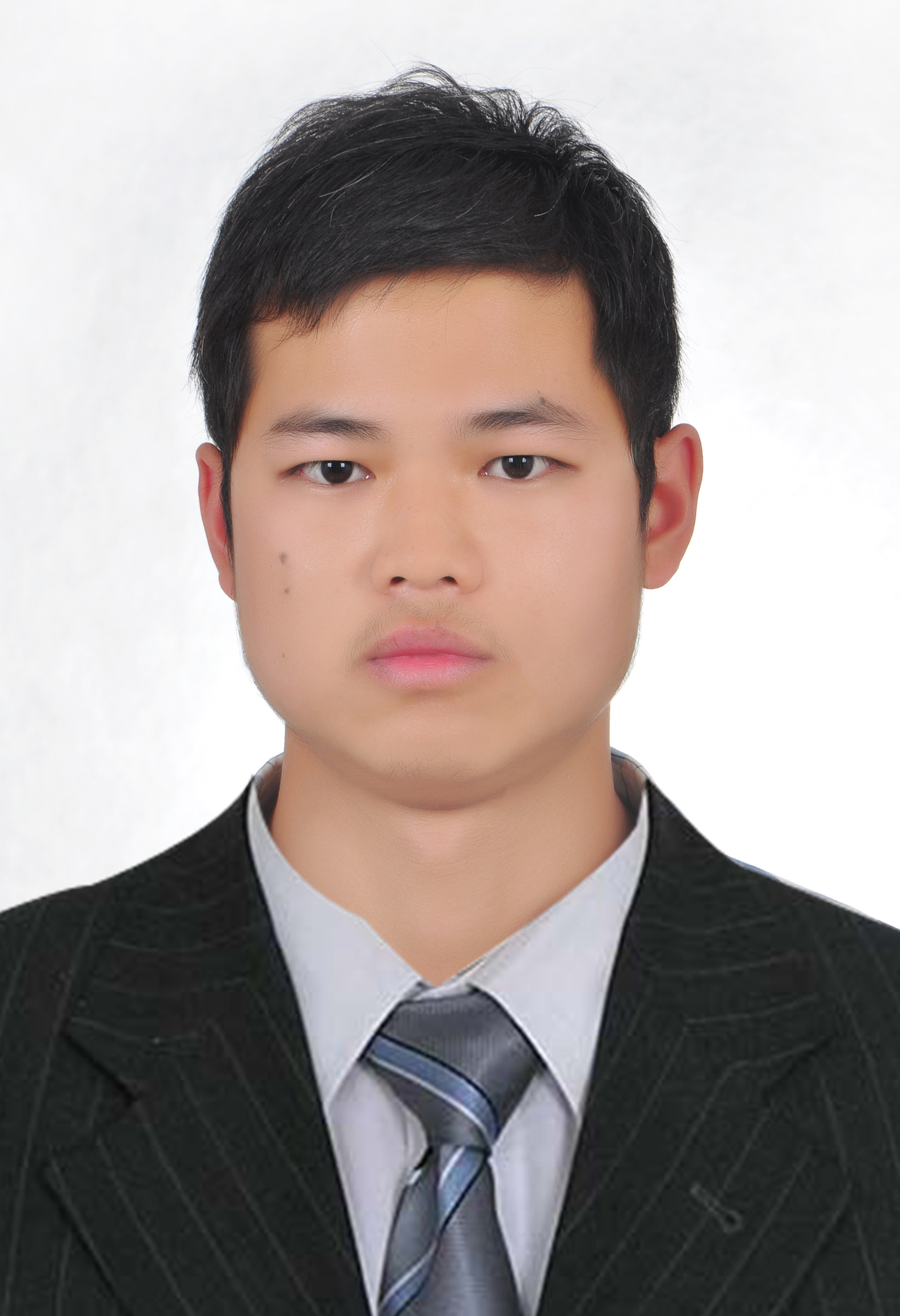}}]{Yeman Fan} (Student Member, IEEE) received the B.E. degree in Machine Designing, Manufacturing and Automation, and the M.E. degree in Agricultural Electrification and Automation from Northwest A\&F University, Yangling, China, in 2016 and 2019, respectively. He is currently pursuing the Ph.D. degree with the Robotics Institute, University of Technology Sydney, Sydney, NSW, Australia. 

His research interests include continuum robots and manipulators, robot control systems, and jamming technology for robotics.
\end{biography}

\begin{biography}
[{\includegraphics[width=1.1in,height=1.25in,clip,keepaspectratio,trim={2in 0 2in 0}]{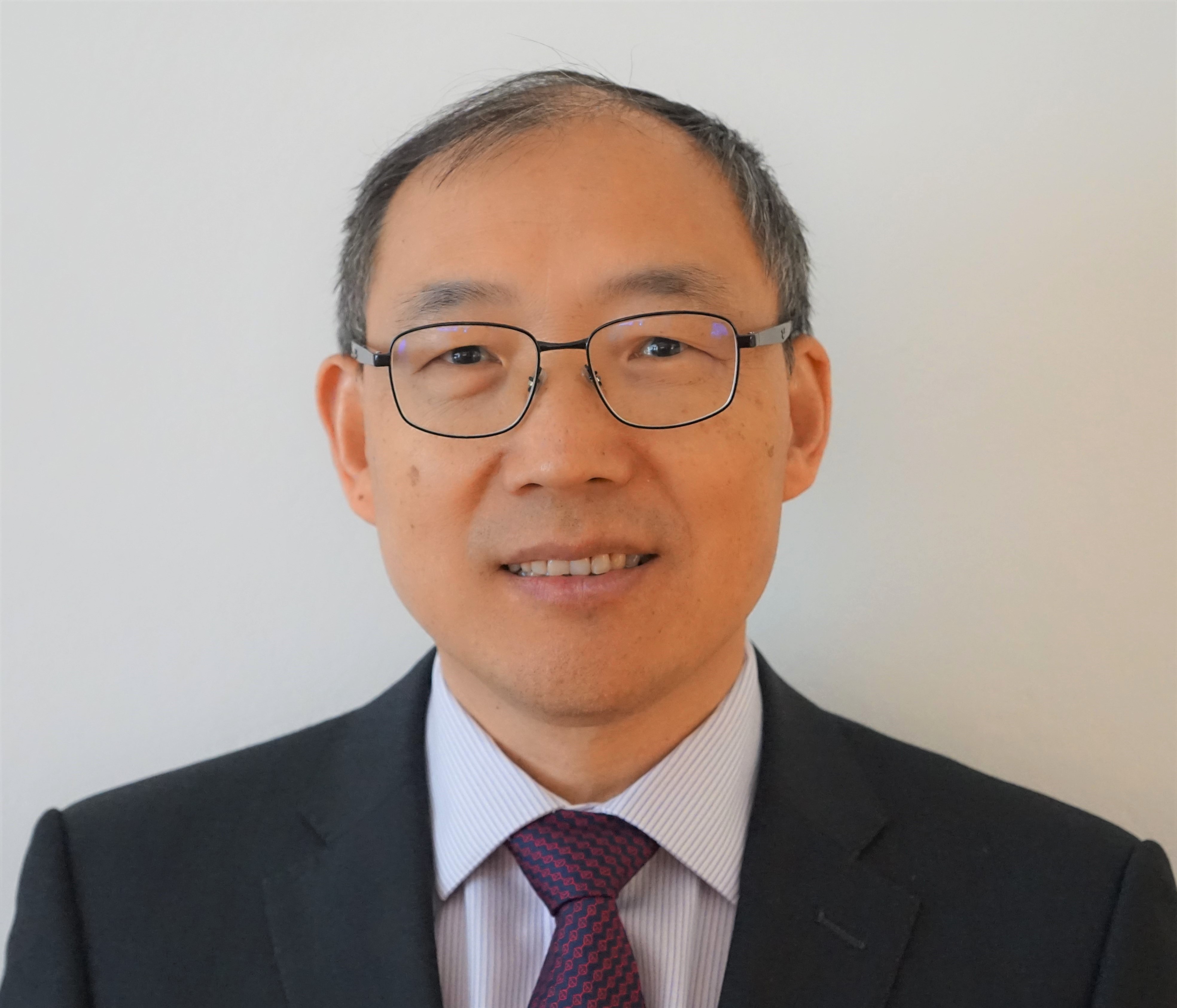}}]{Dikai Liu} (Senior Member, IEEE) received the Ph.D. degree in Dynamics and Control from the Wuhan University of Technology, Wuhan, China, in 1997. 

He is currently a Professor in Mechanical and Mechatronic Engineering with the Robotics Institute, University of Technology Sydney, Sydney, NSW, Australia. His main research interest is robotics, including robot perception, planning and control of mobile manipulators operating in complex environments, human-robot collaboration, multi-robot coordination, and bioinspired robotics.
\end{biography}

\begin{biography}
[{\includegraphics[width=1.1in,height=1.25in,clip,keepaspectratio]{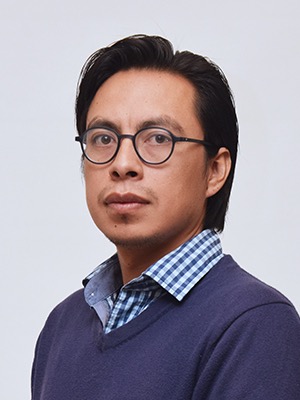}}]{Jose Guadalupe Romero} (Member, IEEE) obtained the Ph.D. degree in Control Theory from the University of Paris-Sud XI, France in 2013. 

 Since 2016, he has been with the Instituto Tecnol'ogico Aut'onomo de M\'exico (ITAM), Mexico, where he is currently  a full time Professor and since July 2023 he is the Chair of the Department of Electrical and Electronic Engineering. From August 2022 to June 2023, he was a Visiting Lecturer at The Hong Kong Polytechnic University (PolyU). He has over 50 papers in peer-reviewed international journals where he has also served as a reviewer.
His research interests are focused on nonlinear and adaptive control, stability analysis  and the state estimation problem, with application to mechanical systems, aerial vehicles, mobile robots and multi-agent systems. 

Dr. Romero serves as an Editor of the \textsc{International Journal of Adaptive Control and Signal Processing}.
\end{biography}

\end{document}